\font\ppppppcarac=ptmr8y at 4pt
\font\pppppcarac=ptmr8y at 5pt
\font\ppppcarac=ptmr8y at 6pt
\font\pppcarac=ptmr8y at 7pt
\font\ppcarac=ptmr8y at 8pt
\font\pcarac=ptmr8y at 9pt
\font\Ppcarac=ptmr8y at 10pt
\font\bf=ptmb8y at 10pt
\newcommand{\bfA}{{\textbf{A}}}
\newcommand{\bfB}{{\textbf{B}}}
\newcommand{\bfD}{{\textbf{D}}}
\newcommand{\bfF}{{\textbf{F}}}
\newcommand{\bfH}{{\textbf{H}}}
\newcommand{\bfJ}{{\textbf{J}}}
\newcommand{\bfS}{{\textbf{S}}}
\newcommand{\bfU}{{\textbf{U}}}
\newcommand{\bfV}{{\textbf{V}}}
\newcommand{\bfW}{{\textbf{W}}}
\newcommand{\bfX}{{\textbf{X}}}
\newcommand{\bfY}{{\textbf{Y}}}
\newcommand{\bfa}{{\textbf{a}}}
\newcommand{\bfb}{{\textbf{b}}}
\newcommand{\bfc}{{\textbf{c}}}
\newcommand{\bfe}{{\textbf{e}}}
\newcommand{\bff}{{\textbf{f}}}
\newcommand{\bfg}{{\textbf{g}}}
\newcommand{\bfh}{{\textbf{h}}}
\newcommand{\bfn}{{\textbf{n}}}
\newcommand{\bfu}{{\textbf{u}}}
\newcommand{\bfw}{{\textbf{w}}}
\newcommand{\bfx}{{\textbf{x}}}
\newcommand{\bfy}{{\textbf{y}}}
\newcommand{\bfz}{{\textbf{z}}}
\newcommand{\bfzero}{{ \hbox{\bf 0} }}
\newcommand{\bfGamma}{{\bm{\Gamma}}}
\newcommand{\bfPhi}{{\bm{\Phi}}}
\newcommand{\bfalpha}{{\bm{\alpha}}}
\newcommand{\bfbeta}{{\bm{\beta}}}
\newcommand{\bfgamma}{{\bm{\gamma}}}
\newcommand{\bfeta}{{\bm{\eta}}}
\newcommand{\bftheta}{{\bm{\theta}}}
\newcommand{\bfvarphi}{{\bm{\varphi}}}
\newcommand{\bfpsi}{{\bm{\psi}}}
\newcommand{\bfxi}{{\bm{\xi}}}
\newcommand{\FF}{{\mathbb{F}}}
\newcommand{\HH}{{\mathbb{H}}}
\newcommand{\MM}{{\mathbb{M}}}
\newcommand{\NN}{{\mathbb{N}}}
\newcommand{\PP}{{\mathbb{P}}}
\newcommand{\RR}{{\mathbb{R}}}
\newcommand{\XX}{{\mathbb{X}}}
\newcommand{\YY}{{\mathbb{Y}}}
\newcommand{\aaeclair}{{\mathbb{a}}}
\newcommand{\bb}{{\mathbb{b}}}
\newcommand{\hh}{{\mathbb{h}}}
\newcommand{\xx}{{\mathbb{x}}}
\newcommand{\yy}{{\mathbb{y}}}
\DeclareMathAlphabet{\mathonebb}{U}{bbold}{m}{n}
\def\11{{\ensuremath{\mathonebb{1}}}}
\newcommand{\curB}{{\mathcal{B}}}
\newcommand{\curD}{{\mathcal{D}}}
\newcommand{\curE}{{\mathcal{E}}}
\newcommand{\curG}{{\mathcal{G}}}
\newcommand{\curH}{{\mathcal{H}}}
\newcommand{\curJ}{{\mathcal{J}}}
\newcommand{\curK}{{\mathcal{K}}}
\newcommand{\curL}{{\mathcal{L}}}
\newcommand{\curN}{{\mathcal{N}}}
\newcommand{\curP}{{\mathcal{P}}}
\newcommand{\curT}{{\mathcal{T}}}
\newcommand{\curV}{{\mathcal{V}}}
\newcommand{\curW}{{\mathcal{W}}}
\newcommand{\curZ}{{\mathcal{Z}}}
\newcommand{\curC}{{\mathscr{C}}}       
\newcommand{\curS}{{\mathscr{S}}}       
\newcommand{\bfcurH}{{\boldsymbol{\mathcal{H}}}}
\newcommand{\bfcurU}{{\boldsymbol{\mathcal{U}}}}
\font\vcarac=ptmr8y at 10pt
\newcommand{\vc}{{\hbox{\vcarac{v}}}}    
\newcommand{\uc}{{\hbox{\vcarac{u}}}}    
\newcommand{\adp}{{{\hbox{{\ppppcarac ad}}}}}
\newcommand{\adam}{{{\hbox{{\ppppcarac adam}}}}}
\newcommand{\ANN}{{\hbox{\pcarac ANN}}}
\newcommand{\ANNp}{{\hbox{\ppppcarac ANN}}}
\newcommand{\ANNpp}{{\hbox{\pppppcarac ANN}}}
\newcommand{\CI}{\hbox{{\pcarac CI}}}
\newcommand{\CIp}{\hbox{{\ppcarac CI}}}
\newcommand{\criter}{\hbox{{\ppcarac Cr}}}
\newcommand{\CRPS}{{\hbox{{\ppcarac CRPS}}}}
\newcommand{\CRPSp}{{\hbox{{\ppppcarac CRPS}}}}
\newcommand{\CRPSpp}{{\hbox{{\pppppcarac CRPS}}}}
\newcommand{\pdiag}{{\hbox{{\ppppcarac diag}}}}
\newcommand{\pelem}{{\hbox{{\pppcarac elem}}}}
\newcommand{\err}{\hbox{{\Ppcarac err}}}
\newcommand{\perr}{\hbox{{\ppcarac err}}}
\newcommand{\pgrid}{{\hbox{{\pppcarac grid}}}}
\newcommand{\ppgrid}{{\hbox{{\pppppcarac grid}}}}
\newcommand{\pin}{{\hbox{{\pppcarac in}}}}
\newcommand{\ppin}{{\hbox{{\pppppcarac in}}}}
\newcommand{\pppin}{{\hbox{{\pppppcarac in}}}}
\newcommand{\pint}{{\hbox{{\pppcarac int}}}}
\newcommand{\pmax}{{\hbox{{\ppppcarac max}}}}
\newcommand{\NLL}{{\hbox{\pcarac NLL}}}
\newcommand{\NLLg}{{\hbox{\ppppcarac NLL}}}
\newcommand{\obs}{{\hbox{{\pppcarac obs}}}}
\newcommand{\obspp}{{\hbox{{\pppppcarac obs}}}}
\newcommand{\opt}{{\,\hbox{{\pppcarac opt}}}}
\newcommand{\optp}{{\,\hbox{{\ppppcarac opt}}}}
\newcommand{\optpp}{{\,\hbox{{\pppppcarac opt}}}}
\newcommand{\pout}{{\hbox{{\pppcarac out}}}}
\newcommand{\ppout}{{\hbox{{\pppppcarac out}}}}
\newcommand{\pppout}{{\hbox{{\pppppcarac out}}}}
\newcommand{\pPCA}{{{\hbox{{\pppppcarac PCA}}}}}
\newcommand{\pPoisson}{{\hbox{{\pppcarac poisson}}}}
\newcommand{\pprc}{{{\hbox{{\ppppcarac prc}}}}}
\newcommand{\pSB}{{\hbox{{\ppppppcarac SB}}}}
\newcommand{\psim}{{\hbox{{\ppppcarac sim}}}}
\newcommand{\psp}{{{\hbox{{\ppppcarac sp}}}}}
\newcommand{\test}{{\hbox{{\pppcarac test}}}}
\newcommand{\testp}{{\hbox{{\ppppcarac test}}}}
\newcommand{\testpp}{{\hbox{{\pppppcarac test}}}}
\newcommand{\tor}{{\hbox{{\ppppcarac tor}}}}
\newcommand{\trainingpp}{{\hbox{{\ppppcarac train}}}}
\newcommand{\ptrial}{{\hbox{{\pppcarac trial}}}}
\newcommand{\pptrial}{{\hbox{{\ppppcarac trial}}}}
\newcommand{\ppptrial}{{\hbox{{\pppppcarac trial}}}}
\newcommand{\TRIAL}{{\hbox{{\pppcarac TRIAL}}}}
\newcommand{\PGD}{{\hbox{{\pppcarac PGD}}}}
\newcommand{\Unit}{\mathbf{1}}
\newcommand{\wind}{{\hbox{{\ppppcarac wind}}}}
\newtheorem{proposition}{Proposition}
\journal{ArXiv}
\numberwithin{equation}{section}
\title{Supervised Learning of Random Neural Architectures Structured by Latent Random Fields on Compact Boundaryless Multiply-Connected Manifolds}
\author[1]{Christian Soize \corref{cor1}}
\ead{christian.soize@univ-eiffel.fr}
\address[1]{Universit\'e Gustave Eiffel, MSME UMR 8208, 5 bd Descartes, 77454 Marne-la-Vall\'ee, France}
\date{April 2025}
\begin{document}

\begin{frontmatter}

\begin{abstract}
This paper introduces a new probabilistic framework for supervised learning in neural systems. It is designed to model complex, uncertain systems whose random outputs are strongly non-Gaussian given deterministic inputs. The architecture itself is a random object stochastically generated by a latent anisotropic Gaussian random field defined on a compact, boundaryless, multiply-connected manifold. The objective is not to propose an alternative to existing fixed-topology architectures or to outperform current models in specific empirical settings. Rather, the goal is to establish a novel conceptual and mathematical framework in which neural architectures are realizations of a geometry-aware, field-driven generative process. In this formulation, both the neural topology and synaptic weights emerge jointly from a latent random field, integrating structural and topological uncertainty as intrinsic features of the model. A reduced-order parameterization governs the spatial intensity of an inhomogeneous Poisson process on the manifold, from which neuron locations are sampled. Input and output neurons are identified via extremal evaluations of the latent field, while connectivity is established through geodesic proximity and local field affinity. Synaptic weights are conditionally sampled from the field realization, inducing stochastic output responses even for deterministic inputs. To ensure scalability, the architecture is sparsified via percentile-based diffusion masking, yielding geometry-aware sparse connectivity without ad hoc structural assumptions. Supervised learning is formulated as inference on the generative hyperparameters of the latent field, using a negative log-likelihood loss estimated through Monte Carlo sampling from single-observation-per-input datasets. The paper initiates a mathematical analysis of the model, establishing foundational properties such as well-posedness, measurability, and a preliminary analysis of the expressive variability of the induced stochastic mappings, which support its internal coherence and lay the groundwork for a broader theory of geometry-driven stochastic learning. A numerical illustration on a torus is provided, not to demonstrate predictive superiority over existing models, but to elucidate how topological, anisotropic, and stochastic field parameters influence the emergent neural architecture and its induced function class.
\end{abstract}

\begin{keyword}
Random neural architectures on manifolds \sep Latent random fields on manifolds \sep Supervised learning \sep Compact boundaryless multiply-connected manifolds
\end{keyword}

\end{frontmatter}

\section{Introduction}
\label{Section1}
%
\noindent \textit{(i) Objective of the paper}.
The objective of this paper is to develop a mathematically rigorous probabilistic framework for supervised learning in neural systems, designed to model complex, uncertain systems whose random outputs are strongly non-Gaussian given deterministic inputs.
The architectures are not fixed but are instead stochastically generated. These architectures are defined on compact, boundaryless, multiply-connected manifolds and are governed by latent Gaussian random fields. The proposed model generates both the neural topology and synaptic weights as emergent properties of a reduced-order representation of such a latent field.

Neurons are stochastically positioned via a Poisson point process whose spatial intensity is modulated by the latent field. Input and output neurons are selected from these points based on extremal evaluations of the field, and connections are established based on geodesic proximity and field affinity. Random weights are then assigned conditionally on the field realization, inducing a stochastic response even for deterministic inputs.

To enable scalability, a sparsification procedure is introduced that significantly reduces network connectivity via percentile-thresholded diffusion kernels. Crucially, supervised learning is not framed as direct weight optimization but as the statistical inference of the hyperparameters controlling the generative process. A negative log-likelihood loss is defined from the conditional density of outputs given inputs, which is estimated via Monte Carlo sampling over network realizations. This formulation allows training with only a single observation per input, reflecting the model’s capacity to internalize structural uncertainty.\\

\noindent \textit{(ii) Related work and background}.
Supervised learning of neural networks has been a foundational paradigm in modern machine learning, where models are trained to approximate functions by minimizing empirical risk over labeled datasets \cite{Bishop1995,Goodfellow2016}. Classical architectures, including multilayer perceptrons (MLPs), convolutional neural networks (CNNs), and recurrent neural networks (RNNs), have been extensively studied in this context, with well-established training algorithms such as stochastic gradient descent (SGD) and its variants \cite{Rumelhart1986,Lecun1998}.
More recently, research has moved beyond fixed architectures toward random and stochastic neural models. These include, randomly initialized networks used as priors in Bayesian inference \cite{Neal2012,Lee2017}, and, neural networks with random weights, as universal feature generators, exemplified by random kitchen sinks and extreme learning machines \cite{Rahimi2007,Huang2006}. A growing body of work explores neural architecture search (NAS), where network structure itself is treated as a variable to optimize, though often in a deterministic or heuristic fashion \cite{Zoph2016,Elsken2019}.
In parallel, theoretical developments have begun to address the stochastic modeling of network architectures, where randomness is incorporated not only in weights but also in connectivity patterns, depth, or activation functions \cite{Xie2019,Chen2020}. Such approaches aim to capture richer inductive biases, model uncertainty, or accommodate data with complex latent geometry. Notably, the randomization of architectures has been used for model averaging, robustness, and in Bayesian deep learning frameworks \cite{Blundell2015,Gal2016}.
Among the significant advances in geometry-aware learning architectures are Graph Neural Networks (GNNs), which extend deep learning to non-Euclidean domains such as graphs, manifolds, and combinatorial complexes. GNNs operate by aggregating and transforming information across nodes and their neighborhoods, capturing local and global relational structure \cite{Scarselli2008,Hamilton2017,Kipf2017}.
Foundational variants include Graph Convolutional Networks (GCNs) \cite{Kipf2017}, Graph Attention Networks (GATs) \cite{Velickovic2018}, and Message Passing Neural Networks (MPNNs) \cite{Gilmer2017}, which have been further extended to dynamic, heterogeneous, and hierarchical settings \cite{Battaglia2018,Zhou2020b}.
Additionally, recent studies have drawn connections between GNNs and manifold learning, where graph Laplacians serve as discrete approximations of differential operators on smooth manifolds \cite{Bronstein2017,Belkin2003,Grover2025}. However, these models typically assume fixed input graphs, with deterministic structure.
Unlike approaches that endow fixed graphs with probabilistic weights or node-level uncertainty, the present work defines a probability distribution over entire neural architectures, including the placement of neurons and their connectivity, as a function of a continuous latent field defined over a manifold.
Efforts to incorporate probabilistic structure into GNNs include Bayesian GNNs \cite{Hasanzadeh2020,Wang2024} and methods for learning distributions over graph topologies \cite{You2018}. However, these approaches remain largely constrained to discrete graph spaces and do not leverage continuous latent processes or intrinsic geometric priors \cite{Chen2025}.
In contrast, the proposed model treats the neural architecture as an emergent, field-driven object, where both topology and weight distribution arise from a latent Gaussian random field on a compact manifold. This coupling allows the stochastic neural representation to inherit smoothness, anisotropy, and topological complexity directly from the geometry.
While some advances in manifold learning—such as intrinsic graph convolution, diffusion models~\cite{DeBortoli2022}, and variational autoencoders on curved spaces—have begun to address geometric complexity, these typically rely on fixed embeddings or learned kernels rather than stochastic generative fields.

In parallel, Gaussian process-based surrogate modeling has provided alternative approaches for uncertainty quantification in physics-informed systems. Related surrogate modeling strategies based on Gaussian processes have also been developed in computational physics contexts \cite{Yang2019c,Bilionis2012,Tripathy2016}. Gaussian process regression provides a surrogate modeling technique that enables exploration of posterior distributions~\cite{Williams2006,Kennedy2001}. In the deterministic setting, Physics-Informed Neural Networks PINNs \cite{Raissi2019,Yuan2025} provide a framework for solving dynamical systems governed by ordinary or partial differential equations. In the probabilistic setting, extending PINNs to nonlinear stochastic partial differential equations  poses significant challenges, and several strategies have been proposed in the recent literature \cite{Chen2021,Yang2021b}. Generative PINNs, including those based on variational autoencoders (VAE-PINNs) or generative adversarial networks (GAN-PINNs) \cite{Yang2022,Zhong2023}, aim to model the full probability distribution of the solution field.

One of the goals of the proposed model is to represent non-Gaussian output behavior induced by random mappings, using a small number of neurons (e.g., $20$ input, $80$ hidden, $100$ output) and limited training data (e.g., $1000$ samples). These non-Gaussian features emerge from the structural randomness of the model itself and are not artificially injected as output noise. The challenge of constructing surrogate models for stochastic systems in the small-data regime aligns with the probabilistic learning framework on manifolds is proposed in \cite{Soize2016,Soize2019b,Soize2020b,Soize2021a,Soize2024,Soize2024b}.

Classical and generalized Gaussian processes  have been defined via kernel constructions, \cite{Kree1986,Rasmussen2006,Malyarenko2014,Akian2022,Zhi2023}). The work of Lindgren et al.\ \cite{Lindgren2011,Lindgren2022} established a connection between Gaussian fields and Gaussian Markov random fields (GMRFs) through SPDEs of the form $\curL = (\tau_0 - \Delta)^{\alpha/2} u(\bfx) = \curW(\bfx)$, with $\alpha = \nu + d/2$, $\nu > 0$, on a Euclidean domain $\bfx \in \RR^d$. This formulation is effective for stationary and isotropic fields and has been extended to anisotropic and manifold-based settings.
When the fractional order $\alpha/2$ is the integer $1$, the SPDE becomes a standard elliptic equation. In such a case, the weak formulation of the corresponding boundary value problem allows the use of finite element methods for constructing realizations of the Gaussian random field, and this formulation can be extended to a multiply-connected manifold on which the Gaussian random field lives (see, for instance, \cite{Staber2018}).
For fractional powers $\alpha \in (0,2)$, Bonito et al. \cite{Bonito2015} introduced efficient numerical algorithms for approximating fractional powers of elliptic operators, although their work focuses on flat domains without explicit spatial anisotropy or complex geometric constraints.
More recent developments adapt SPDE models to cortical surfaces but still often assume relatively simple topologies and mostly isotropic behavior. Fuglstad et al. \cite{Fuglstad2015,Fuglstad2015b} proposed modeling spatial anisotropy in Gaussian processes through local metric tensors and deformation models. Similarly, Borovitskiy et al. \cite{Borovitskiy2020} developed intrinsic kernels on manifolds based on heat kernels and geodesic distances; these approaches typically require kernel construction or metric learning.
Recent advances in geometric machine learning highlight manifold-based probabilistic modeling, including generative diffusion models \cite{DeBortoli2022}, manifold VAEs, and intrinsic graph convolutional frameworks \cite{Bronstein2021,Monti2017}.

A distinct yet relevant line of research involves generative models of graphs and functions, including GraphVAE \cite{Simonovsky2018}, GraphAF \cite{Shi2020}, and other deep generative approaches that learn probability distributions over graph topologies or node features. These models differ significantly from the current work: they typically learn from observed graphs, use discrete latent representations, and are not grounded in continuous field-based geometry. Nevertheless, they share the ambition of sampling from a graph distribution rather than optimizing over fixed structures. Similarly, in the context of Gaussian processes on manifolds, recent works \cite{Borovitskiy2020,Wilson2020} construct intrinsic kernels via heat kernels or spectral approximations, allowing for manifold-aware inference with GP priors. While such methods focus on function-space uncertainty, the present model treats architectural uncertainty as primary, generating both the structure and the mapping as a single stochastic object from a latent field. This distinguishes the proposed approach from classical GP regression, while complementing its geometric foundations.\\

\noindent \textit{(iii) A short survey on the methodology proposed}.
The construction of the random neural architecture on the compact, boundaryless, multiply-connected manifold $\curS$ is performed s follows.
\begin{itemize}
\item A real-valued, anisotropic Gaussian random field $\{U(\bfx),\bfx \in \curS\}$ is constructed on a given compact, boundaryless, multiply-connected 2-dimensional manifold immersed in the 3-dimensional Euclidean space $\RR^3$. This random field is defined
     {\color{black}
      as the solution of a stochastic partial differential equation (SPDE) with a strongly elliptic operator $\curL_{n_h}$
     that incorporates a family of  anisotropic coefficient fields $[K_{n_h}]$ on the $\curS$-manifold, indexed by $n_h\in\NN$;
      the solution
       }
       is a second-order,  centered, Gaussian real-valued random field on $\curS$. The manifold $\curS$ is geometrically approximated by a manifold $\curS_h$, constructed using the finite element method. The Gaussian random field on the manifold $\curS_h$ is obtained by discretizing  the weak formulation of the stochastic PDE on $\curS_h$. For fixed $n_h$, this centered, second-order, Gaussian,  real-valued random field is  parameterized by the anisotropic field $[K_{n_h}]$,
       {\color{black} which is itself parameterized by two bounded and positive-valued fields $h^{(1)}_{n_h}(\bfx)$ and $h^{(2)}_{n_h}(\bfx)$  for $\bfx \in \curS$}.

\item An inhomogeneous Poisson point process is introduced with an intensity related to the Gaussian field $U$, allowing the generation of a set of $N$ points $\curV = \{\bfx^i, i =1,\ldots, N\} \subset \curS_h$, which represent the random locations of $N$ neurons on $\curS_h$.

\item A sparsity binary mask is constructed to define the neural graph topology (edge construction), using a self-tuning diffusion kernel combined with percentile-based sparsification, resulting in a significantly reduced number of edges $n_e \ll N(N-1)$. This mask is then applied to a randomly generated dense weight matrix to obtain a sparse random weight matrix. The random dense matrix is constructed as follows: for each pair  $(\bfx^i,\bfx^j) \in \curV \times \curV$, the geodesic distance $d_g(\bfx^i,\bfx^j)$ on $\curS_h$ is computed. A random field strength difference $\Delta S_{ij} = S_i - S_j$ is introduced, where $S_i = U(\bfx^i)$. Preliminary (fully connected) random edge weights $[\bfW]$ are defined by $[\bfW]_{ij} = g(d_g(\bfx^i,\bfx^j), \Delta S_{ij}); \sigma_i,\sigma_j, \zeta_s \,\sigma_S$, where $g: \RR^+ \times \RR \rightarrow \RR^+$ is a prescribed mapping that depends on the local geometric bandwidths $\sigma_i$, $\sigma_j$, and a feature similarity scale $\zeta_s \, \sigma_\bfS$. The parameter $\zeta_s > 0$ controls the relative sensitivity of edge weights to differences in neuron feature values $S_i$, normalized by $\sigma_\bfS$, where $\sigma_\bfS^2 = N^{-1}\operatorname{tr}[C_\bfS]$ and $[C_\bfS]$ is the covariance matrix of the random vector $\bfS = (S_1, \ldots, S_N)$. This model encodes adaptive geometric proximity and field-based affinity: edge weights are high when neurons are geodesically close and have similar field values, and exponentially small otherwise, due to either spatial separation or field mismatch. Since the fully connected graph contains $N(N-1)$ edges, which is computationally prohibitive, sparsification is achieved                                                                                              by applying the binary sparsity mask to the fully connected weight matrix.

\item The hyperparameter of the random neural network on the manifold $\curS_h$ is denoted by $\bftheta_t\in\curC_{\bftheta_t}\subset\RR^{n_{\bftheta_t}}$, where $n_{\bftheta_t}$ is relatively small. It is written as $\bftheta_t = (\bftheta,\bfbeta)$, where $\bftheta = (h_1, h_2,\zeta_s,\bfbeta^{\,(1)}_{n_h}\! , \bfbeta^{\,(2)}_{n_h})$.
    The positive scalars $h_1$ and $h_2$ are  hyperparameters, and $\bfbeta^{\,(1)}_{n_h}$ and $\bfbeta^{\,(2)}_{n_h}$ are $\RR^{n_h}$- valued hyperparameters corresponding to low-rank representations of
    {\color{black}
    the positive-valued fields $\{h^{(1)}_{n_h}(\bfx) ,\bfx\in\curS\}$ and $\{h^{(2)}_{n_h}(\bfx),\bfx\in\curS\}$.
    These fields are, in addition, assumed to be bounded below by a positive constant.
     These hyperparameters specify the anisotropic tensor field $\{[K_{n_h}(\bfx)],\bfx\in\curS\}$, which governs the construction of the Gaussian random field $U$. With hyperparameter vector $\bftheta$, the field $U$ is denoted by $\{U(\bfx ;\bftheta),\bfx\in \curS_h\}$.
     }
    The positive scalar $\zeta_s$ controls the measure of similarity between field values at neuron locations.
     {\color{black}
    The parameter $\tau_0 > 0$ is fixed; it shifts the spectrum of $\curL_{n_h}$ upward, widening the spectral gap and improving conditioning.
     }
     The hyperparameter $\bfbeta\in\RR^{n_b}$ contains the $n_b$ coefficients of a low-rank representation of the bias vector $\bfb = \sum_{j=1}^{n_b} \beta_j \, \bfh^j$ with $n_b\leq N$.
    A realization of the random neural architecture on the manifold $\curS_h$ is generated for each given value of $\bftheta$.
     Given the $n_e$ edges resulting from the sparsification, the  weights are modeled by a non-Gaussian random sparse matrix $[\bfW_\bftheta^\psp]$, which is constructed using the Gaussian random field $U(\cdot ; \bftheta)$ on $\curS_h$, evaluated at
    the $N$ neuron locations $\{\bfx^i_\bftheta, i =1,\ldots, N\}$. For this value of $\bftheta$, we obtain an $N$-neuron system with random weights $[\bfW_\bftheta^\psp]$ and deterministic biases given by the  vector $\bfb$. This system allows us to estimate the probability density function used in constructing the loss function for a supervised input-output training dataset, which is used to identify the vector-valued hyperparameter $\bftheta_t$.
  \end{itemize}
The two-stage generative process underlying the stochastic neural architecture is summarized in Fig.~\ref{figure0}. The first row describes the geometry-driven construction of the architecture, while the second row details the assignment of weights and the inference of generative hyperparameters from supervised data.\\

\begin{figure}[ht]
\centering
\begin{tikzpicture}[
    node distance=1.4cm and 1.5cm,
    every node/.style={draw, rounded corners, align=center, minimum height=1.1cm, minimum width=3.3cm},
    arrow/.style={-{Latex[length=3mm]}, thick}
  ]

  \node (field) {Latent Gaussian \\ random field};
  \node (poisson) [right=of field] {Poisson process \\ for neuron sampling};
  \node (connect) [right=of poisson] {Sparse graph \\ construction};

  \node (weights) [below=of field] {Random edge weights \\ conditioned on field};
  \node (network) [right=of weights] {Stochastic neural system \\ $[\bfW_\bftheta^\psp], \bfb$};
  \node (inference) [right=of network] {Training: inference \\ of hyperparameters $\bftheta_t$};

  \draw [arrow] (field) -- (poisson);
  \draw [arrow] (poisson) -- (connect);

  \draw [arrow] (weights) -- (network);
  \draw [arrow] (network) -- (inference);

  \draw [arrow] (connect) -- (network);
  \draw [arrow] (field) -- (weights);

\end{tikzpicture}
\caption{Two-stage schematic of the proposed generative pipeline. The first row shows the geometric construction of the random neural architecture from a latent field. The second row describes the construction of weights and the supervised learning of hyperparameters.}
\label{figure0}
\end{figure}
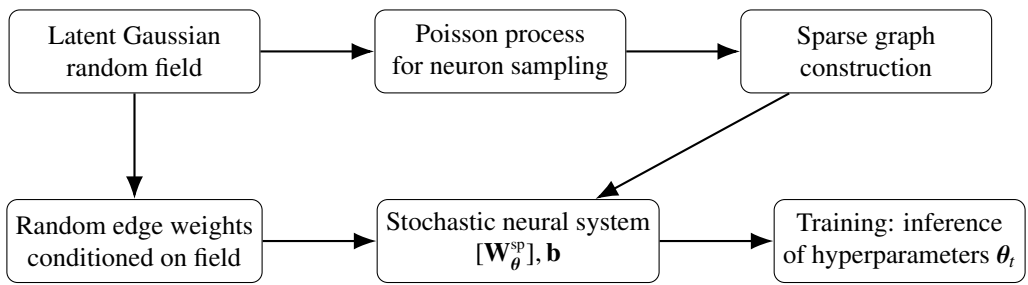

\noindent \textit{(iv) Overview of the learning problem and loss formulation}.
The learning problem is formulated as the statistical inference of the generative hyperparameter vector $\bftheta_t= (\bftheta,\bfbeta)$, where $\bftheta$ governs the construction of the stochastic neural architecture and $\bfbeta$ encodes the neuron biases. Rather than training synaptic weights directly via gradient-based optimization, as in conventional supervised learning, the proposed framework defines a conditional output probability density function $p^\ANNpp(\yy \,\vert \,\xx; \bftheta_t)$,
which emerges from the random mappings induced by realizations of the latent field. A single deterministic
{\color{black} input $\xx\in\RR^{n_\ppin}$ gives rise to a $\RR^{n_\ppout}$-valued random output}
 $\YY$ through architecture sampling, random weight generation, and forward computation.
{\color{black}
For the supervised learning, the training dataset consists of $n_d$ points $(\xx^i,\yy^i)\in\RR^{n_\ppin}\times\RR^{n_\ppout}$.
The set $\{\xx^i, i=1,\ldots,n_d\}$ denote $n_d$ independent realizations of an $\RR^{n_\ppin}$-valued random variable $\XX$ with a probability measure $P_\XX (d\xx)$.
For each input $\xx^i$, there is a single output $\yy^i \in\RR^{n_\ppout}$, corresponding to a realization of the random variable $\YY$ with values in $\RR^{n_\ppout}$.
}
The supervised learning objective is to minimize the empirical negative log-likelihood over observed input–output pairs,
$\curJ(\bftheta_t) = -\sum_{i=1}^{n_d} \log p^\ANNpp(\yy^i \,\vert \,\xx^i; \bftheta_t)$.
Since the conditional densities $p^\ANNpp(\yy \,\vert \,\xx; \bftheta_t)$ are analytically intractable, they are estimated via Monte Carlo simulation over realizations of  neural architectures generated from $\bftheta_t$.
{\color{black} This allows learning from datasets in which each input vector is associated with a single realization of the random output vector, a regime that is poorly suited to standard deterministic neural networks}.
The details of the sampling strategy, likelihood estimation, and optimization procedure are presented in Section~\ref{Section8}. This overview emphasizes the key conceptual inversion: learning is recast as likelihood-based identification of the generative hyperparameters, rather than direct optimization of weights on a fixed architecture.\\

\noindent \textit{(v) Novelty and contribution}.
This paper introduces a novel probabilistic framework for supervised learning in neural systems whose structure is not fixed, but stochastically generated by a latent Gaussian random field defined on a compact, boundaryless, multiply-connected manifold $\curS$. Unlike conventional neural networks with predefined connectivity, the proposed approach models both the topology and the random weights of the neural network as emergent properties of a hidden random field. Specifically, a Gaussian
{\color{black} random field $U(\cdot ; \bftheta)$},
parameterized by a hyperparameter vector  $\bftheta$, governs the spatial intensity of a Poisson point process over the manifold $\curS$, yielding the stochastic locations of neural nodes. From these points, a subset of $n_\pin$ input and $n_\pout$ output neurons is selected based on the
{\color{black} extreme values of $U(\cdot ; \bftheta)$}
evaluated at the Poisson-distributed locations. The connectivity graph is then constructed using both spatial proximity and dependence on the realized field. The weights are randomly generated conditionally on the field, allowing variability in the neural responses for fixed inputs. The training task consists of estimating the hyperparameters $\bftheta_t$ from supervised input-output data, using the negative log-likelihood loss function constructed from conditional Monte Carlo simulations. The originality of the proposed framework lies in the following:
\begin{itemize}
\item It defines neural architectures as realizations from a stochastic generative process on compact, boundaryless, multiply-connected manifolds, enabling principled modeling of geometric uncertainty and topology.
\item Both neuron placements and synaptic weights emerge from a reduced-order latent Gaussian random field and a Poisson spatial process, without assuming a fixed topology or connectivity.
\item The selection of input and output neurons is also stochastic, based on extremal evaluations of the latent field at Poisson-sampled locations, which further internalizes the structural randomness into the architecture.
\item Graph sparsification is not imposed heuristically but arises from percentile-based diffusion masking, modulated by geodesic distances and field-affinity scores.
\item The resulting network defines a stochastic mapping from deterministic inputs to random outputs, capturing functional uncertainty induced by the latent generative process.
\item Supervised learning is formulated as statistical inference of the hyperparameters $\bftheta_t$ that govern the architecture generator, rather than direct weight optimization on a fixed network.
\item A negative log-likelihood loss is minimized using conditional Monte Carlo estimation across architecture realizations, enabling learning {\color{black} from a dataset for which each input vector is associated with a single realization of the random output vector}.
\end{itemize}
We propose a Stochastic Graph Neural Network (GNN) whose architecture is sampled based on hyperparameter vector $\bftheta_t = (\bftheta,\bfbeta)$. Unlike standard GNNs or feedforward neural networks, the graph structure varies across simulations, enabling flexible and adaptive representation learning.\\

\noindent This formulation is particularly advantageous for implementing Physics-Informed Neural Networks (PINNs) with randomized architectures. Specifically, the latent random field defined on a manifold can serve as the stochastic germ for constructing the random fields that appear as coefficients in the partial differential equations (PDEs) governing inhomogeneous physical systems. In this way, the randomized neural network architecture becomes intrinsically aligned with the mathematical structure of the physical model underlying the inverse statistical problem to be solved from a target dataset. For example, in the case of complex systems with periodic random microstructures, a boundaryless manifold proves useful. Conversely, when the physical domain exhibits boundaries, the model incorporates a manifold with boundary, along with appropriately defined boundary conditions for the stochastic PDE that generates the latent random field. %
{\color{black} In this approach, the vector-valued stochastic germ used in the stochastic PDE of the physics model  is also used for the reduced-order latent random field of the ANN, and consequently, ensures}
that the neural network architecture is statistically consistent with, and structurally adapted to, the physical and probabilistic characteristics of the problem.\\

\noindent \textit{(vi) Organization of the paper}.
The paper is structured as follows:
\begin{itemize}
\item Section~\ref{Section2} introduces the construction of a parameterized anisotropic Gaussian random field on a compact, boundaryless, multiply-connected 2D manifold embedded in $\RR^3$. This includes the formulation of the associated SPDE, its weak form, finite element discretization, and the reduced-order representation derived from a spectral decomposition of the discretized covariance.

\item Section~\ref{Section3} presents the inhomogeneous Poisson point process used for neuron placement on the manifold. A rejection-based sampling procedure is described, guided by the latent field intensity.

\item Section~\ref{Section4} details the construction of the stochastic neural graph. Input and output neurons are selected based on field extremality, and edge connectivity is defined using a diffusion kernel modulated by geodesic distance and field-derived feature similarity. Sparsification is achieved through a percentile-thresholding strategy.

\item Section~\ref{Section5} describes the generation of random edge weights conditioned on the latent field and sparsified graph topology, yielding realizations of the random sparse weight matrix $[\bfW_\bftheta^\psp]$.

\item Section~\ref{Section6} introduces a low-rank representation for neuron biases to reduce the number of trainable parameters. While this representation is optional, it supports efficient bias modeling in large networks.

\item Section~\ref{Section7} presents the forward computation in the Stochastic Graph Neural Network (SGNN), formulated as a constrained nonlinear system involving random architecture realizations.

\item Section~\ref{Section8} defines the supervised learning problem as the identification of the generative hyperparameters $\bftheta_t$, using a Monte Carlo-based negative log-likelihood loss and a constrained optimization scheme.

\item Section~\ref{Section9} outlines the validation and testing methodology, including overfitting diagnostics, accuracy evaluation, confidence interval estimation, computation of the continuous ranked probability score (CRPS), and comparison of the conditional probability density functions of random outputs for a given deterministic input.

\item Section~\ref{Section10} is devoted to the mathematical analysis of the SGNN model, focusing on its well-posedness and statistical properties.

\item Section~\ref{Section11} provides a numerical illustration of the proposed model using a random neural architecture defined on a torus, viewed as a compact, boundaryless, multiply-connected manifold.

\item Section~\ref{Section12} concludes with a discussion of the results, limitations, and potential extensions of the proposed framework.
\end{itemize}

\noindent \textit{(vii) Important remarks on positioning and intent}.
The aim of this work is not to propose an alternative to existing fixed-topology neural architectures or to outperform current models in specific empirical settings. Rather, it is to explore a new conceptual and mathematical framework in which the neural architecture itself is a random object, a realization of a geometry-aware, field-driven generative process defined over compact boundaryless manifolds. This formulation integrates structural uncertainty at the topological level and offers a probabilistically grounded perspective where connectivity, weights, and node roles emerge jointly from a latent anisotropic field. We view this not as a refinement of current methods, but as the opening of a distinct research direction, one that invites reconsideration of the architectural priors, inductive biases, and stochastic mechanisms underpinning learning on non-Euclidean and manifold-structured domains.

While this approach shares certain high-level concerns with Bayesian machine learning—such as the modeling of uncertainty and stochastic representation, it departs from the Bayesian deep learning paradigm. In particular, the proposed framework does not involve prior distributions over neural weights, nor does it rely on posterior inference within a fixed architecture. Instead, both the architecture and the weight distribution are emergent properties of a generative process defined by a latent Gaussian random field on a manifold. The resulting randomness is structural and geometric in nature, rather than epistemic. Learning is therefore recast as the inference of the hyperparameters governing the generative field, via a likelihood-based formulation, rather than as posterior inference over parameters. This distinction underlines the originality of the framework: it establishes a stochastic foundation for learning systems in which function, geometry, and topology are jointly governed by an underlying random field, rather than externally imposed or optimized through conventional learning procedures.

This work also initiates the mathematical analysis of a stochastic neural model whose architecture is generated by a latent Gaussian random field on a compact manifold. While a complete theoretical characterization remains open, the results already establish fundamental properties, such as a preliminary analysis of the expressive variability of the induced stochastic mappings, supporting the model internal coherence and expressive potential. These initial developments lay the groundwork for a broader mathematical theory of geometry-driven stochastic learning systems.

Finally, the numerical illustration presented in this work is not intended to demonstrate superior predictive accuracy or to benchmark against existing architectures. Rather, it serves to elucidate how topological, anisotropic, and latent stochastic parameters influence the emergent neural architecture and its induced function class. These diagnostics illustrate the operational consequences of the proposed generative model, highlighting how structural uncertainty and geometric priors shape the expressive behavior of the resulting stochastic neural system.\\

\noindent {\textit{(viii) Conventions for the variables, vectors, and matrices}}.\\
\noindent $x,\eta$: lower-case Latin or Greek letters are deterministic real variables.\\
$\bfx,\bfeta$: boldface lower-case Latin or Greek letters are deterministic vectors.\\
$X$: upper-case Latin letters are real-valued random variables.\\
$\bfX$: boldface upper-case Latin letters are vector-valued random variables.\\
$[x]$: lower-case Latin letters between brackets are deterministic matrices.\\
$[\bfX]$: boldface upper-case letters between brackets are matrix-valued random variables.\\

\noindent {\textit{(ix) Algebraic notations}}.

\noindent $\NN$: set of natural numbers including $0$.\\
$\RR$: set of real numbers.\\
$\RR^m$: Euclidean vector space of dimension $m$.\\
\noindent $\MM_{m,n}$, $\MM_m$ : set of the $(m\times n)$, $(m\times m)$,  real matrices.\\
$\MM_m^+$, $\MM_m^{+0}$: set of  the positive-definite, positive,  $(m\times m)$ real matrices.\\
$[I_{m}]$: identity matrix in $\MM_m$.\\
$\langle \, \bfx ,\bfy\,\rangle$: usual Euclidean inner product of $\bfx$ and $\bfy$.\\
$\Vert \, \bfx\,\Vert$: Euclidean norm of $\bfx$ equal to $\langle \, \bfx ,\bfx\,\rangle^{1/2}$.\\
$\Vert\, [x]\, \Vert_F$: Frobenius norm of matrix  $[x]$.\\

\noindent {\textit{(x) Convention used for random variables}}.
In this paper, for any finite integer $m\geq 1$, the Euclidean space $\RR^m$ is equipped with the $\sigma$-algebra $\curB_{\RR^m}$. If $\bfY$ is a $\RR^m$-valued random variable defined on the probability space $(\Omega,\curT,\curP)$, $\bfY$ is a  mapping $\omega\mapsto\bfY(\omega)$ from $\Omega$ into $\RR^m$, measurable from $(\Omega,\curT)$ into $(\RR^m,\curB_{\RR^m})$, and $\bfY(\omega)$ is a realization (sample) of $\bfY$ for $\omega\in\Omega$. The probability distribution of $\bfY$ is the probability measure $P_\bfY(d\bfy)$ on the measurable set $(\RR^m,\curB_{\RR^m})$ (we will simply say on $\RR^m$). The Lebesgue measure on $\RR^m$ is noted $d\bfy$ and when $P_\bfY(d\bfy)$ is written as $p_\bfY(\bfy)\, d\bfy$, $p_\bfY$ is the probability density function (PDF) on $\RR^m$ of $P_\bfY(d\bfy)$ with respect to $d\bfy$. We denotes by $E$ the mathematical expectation operator that is such that $E\{\bfY\} = \int_{\RR^m} \bfy \, P_\bfY(d\bfy)$. Random variable $\bfY$ is a second-order random variable if $E\{\Vert\bfY\Vert^2\} < +\infty$. The set of all the second-order random variables defined on $(\Omega,\curT,\curP)$ with values in $\RR^m$ is a Hilbert space denoted as $L^2(\Omega,\RR^m)$, equipped with the inner product
$\langle \bfY ,\bfY'\rangle_\Omega  = E\{ \langle \bfY ,\bfY'\rangle \}$ and the associated norm  $\Vert\bfY\Vert_\Omega = \langle \bfY ,\bfY \rangle_\Omega^{1/2}$.
Finally, a normalized Gaussian random vector $\bfV$ is a vector whose components are all statistically independent, with each component being Gaussian, centered, and having unit variance.

\section{Modeling of a parameterized anisotropic latent Gaussian random field on a compact, boundaryless, multiply-connected 2-dimensional manifold  immersed in 3D Euclidean space, and its reduced-order representation-based random generator}
\label{Section2}
The construction of a Gaussian random field on a compact, boundaryless, multiply-connected manifold $\curS$ of dimension 2 immersed in the Euclidean space of dimension 3 is based on the Karhunen-Lo\`eve expansion of random fields. However, the explicit construction of such a random field on the manifold $\curS$ requires not only truncating the expansion, but also constructing a manageable parametrization of the manifold with a finite (though large) number of parameters, and defining, over this parameterized representation of the manifold, the mean function and the covariance operator of the random field. We thus propose an efficient construction based on the following steps:
\begin{itemize}
\item Use the finite element method to construct an approximation $\curS_h$ of the multiply-connected manifold $\curS$,  using 3-node isoparametric triangles \cite{Zienkiewicz2005}. The parameters consist of all the 3D coordinates of the $n_o$ finite element nodes of the mesh. This yields a geometrical approximation $\curS_h$ of the manifold $\curS$ and allows the construction of the finite element basis  $\{\varphi_p(\bfx),\bfx\in\curS_h\}_{p=1}^{n_o}$ for representing any sufficiently smooth function defined on $\curS_h$.
\item Define the centered, second-order, Gaussian,  real-valued random field $U$ on the manifold $\curS$,  represented on $\curS_h$ using a relatively small number of hyperparameters. For this  Gaussian random field $U$ indexed by $\curS$, following the approach  in  \cite{Lindgren2011,Lindgren2022}, we define a stochastic linear boundary value problem on $\curS$, whose nonhogeneous source term is a Gaussian white generalized random field $\curB$ indexed by $\curS$. The coefficients of the linear differential operator depend on the location on $\curS$ and on hyperparameters that control the anisotropy of the random field. The unique  solution of this stochastic problem is the centered generalized Gaussian random field, represented by a classical centered, second-order, Gaussian, real-valued random field $U$ indexed by $\curS$ (see \cite{Kree1986}).
\item Construct the weak formulation of the stochastic boundary value problem and perform its spatial discretization  using the finite element model $\curS_h$ of the manifold $\curS$ \cite{Dautray2013}. This yields a finite-dimensional stochastic representation $U(\bfx) = \sum_{p=1}^{n_o} U_{p}\, \varphi_p(\bfx)$ for $\bfx\in\curS_h$, which satisfies a linear random matrix equation of the form $(\tau_0 \, [g]  + [\kappa_{n_h}]) \,\bfU = \bfB$,
    {\color{black} where $\bfU = (U_{1},\ldots, U_{n_o})$ and $\bfB$ is a centered, second-order, Gaussian, $\RR^{n_o}$-valued random variable}.
\item Express the covariance matrix $[C_\bfU]$ of the $\RR^{n_o}$-valued random variable $\bfU$, which is the solution of the above linear random matrix equation.
\item To reduce the stochastic dimension of the representation of the random field $U$, construct  a reduced-order stochastic model of $\bfU$ with dimension $m\ll n_o$, using a classical principal component analysis. This involves solving the eigenvalue problem for the matrix $[C_\bfU]$, retaining the $m$ largest eigenvalues (represented by the diagonal matrix $[\lambda^m]\in\MM_m^+$) and their associated eigenvectors (represented by the matrix $[\Phi^m] \in\MM_{n_o,m}$).
\item The final representation $\{U^m(\bfx), \bfx\in\curS_h\}$ of the random field $U$ on $\curS_h$ is given by
   $U^m(\bfx) = \langle \, \bfpsi^m(\bfx),\bfH\,\rangle$, where $\bfpsi^m(\bfx) = [\lambda^m]^{1/2} [\Phi^m]^T\, \bfvarphi(\bfx)$,
   {\color{black} $\bfvarphi(\bfx) = (\varphi_1(\bfx),\ldots ,\varphi_{n_o}(\bfx) )$},
   and $\bfH$ is the normalized Gaussian $\RR^m$-valued random variable with mean $E\{\bfH\} = \bfzero_m$ and covariance matrix
   $E\{\bfH\otimes\bfH\} = [I_m]$.
\end{itemize}
\subsection{Construction of a parameterized anisotropic latent Gaussian random field and its generator}
\label{Section2.1}
%
\noindent\textit{(i) Geometry}.
Let $\bfx = (x_1, x_2, x_3)$ denote a point in the Cartesian coordinate system of $\RR^3$.
We consider a smooth, compact, boundaryless, multiply-connected 2-dimensional manifold $\curS \subset \RR^3$, oriented by its outward unit {\color{black} normal $\bfn(\bfx)$}
 to $\curS$, and immersed in the Euclidean space $\RR^3$.\\

\noindent\textit{(ii) Gaussian white generalized random field}.
Let $d\sigma(\bfx)$ be the intrinsic surface measure on $\curS$.
Let $\{\curB(\bfx),\bfx\in\curS\}$ be a real-valued, centered,  Gaussian white generalized random field defined on the probability space $(\Omega,\curT,\curP)$ and indexed by $\curS$. Its covariance $C_\curB$ is a generalized function \cite{Guelfand1964,Kree1986}
such that, for all real-valued continuous function $f$  on $\curS$,
\begin{equation}\label{eq2.0}
\langle C_\curB , f\otimes f\rangle = \Vert \, f\,\Vert^2_{L^2(\curS)} \, ,
\end{equation}
which can, formally, be written as
$\int_\curS\int_\curS C_\curB(\bfx,\bfx')\,f(\bfx)\, f(\bfx')\, d\sigma(\bfx)
\, d\sigma(\bfx') = \int_\curS f(\bfx)^2\, d\sigma(\bfx)$.\\

\noindent\textit{(iii) Stochastic partial differential equation}.
Let $\{U(\bfx),\bfx\in\curS\}$ be the centered real-valued random field defined on the probability space $(\Omega,\curT,\curP)$ and indexed by $\curS$, such that
\begin{equation} \label{eq2.1}
E\{U(\bfx)\} =  0 \quad , \quad \forall \bfx\in\curS   \, .
\end{equation}
The random field $U$ on $\curS$ is constructed as the solution of the stochastic partial differential equation,
\begin{equation} \label{eq2.2}
(\tau_0 - \langle \,\nabla  , [K_{n_h}(\bfx)]\, \nabla\, \rangle ) \, U(\bfx)
                               = \curB(\bfx)  \quad , \quad \forall \bfx\in\curS  \, .
\end{equation}
In Eq.~\eqref{eq2.2}, $\tau_0 > 0$ is fixed, and $\bfx \mapsto [K_{n_h}(\bfx)]$ is a function from $\curS$ into $\MM_3^{+0}$.
Let $\curL_{n_h}$ be the operator defined by
\begin{equation} \label{eq2.2a}
\curL_{n_h} = \tau_0 -\langle \nabla , [K_{n_h}(\bfx)] \, \nabla \rangle\, ,
\end{equation}
where $\nabla$ denotes the surface gradient acting tangentially to $\curS$. At each point $\bfx \in \curS$,  the matrix $[K_{n_h}(\bfx)]$, when projected onto the tangent space $T_\bfx \curS$, defines a symmetric, positive-definite endomorphism in $\mathrm{End}(T_\bfx \curS)$.
{\color{black} In addition, the construction of the field $[K_{n_h}]$ is carried out in order that, there exit two constants $k_\star$ and $k^\star$ with $0 < k_\star \leq k^\star < +\infty$ such that
\begin{equation} \label{eq2.3}
k_\star\, \Vert\bfu\Vert^2 \, \leq \langle \,[K_{n_h}(\bfx)]\, \bfu , \bfu \, \rangle  \leq \, k^\star\, \Vert\bfu\Vert^2
                                \quad , \quad \forall \bfu\in T_\bfx\curS \quad , \quad \forall\bfx\in\curS  \, .
\end{equation}
{\color{black} These bounds} guarantee the uniform ellipticity of the operator $\curL_{n_h}$}.\\

\noindent\textit{Remark 1}. We consider a centered random field, as a non-zero mean function would not influence the proposed hyperparameterized probabilistic model of the random neural architecture, and thus would have no effect on the supervised learning process used to estimate the optimal hyperparameter value controlling the neural network.\\

\noindent\textit{(iv) Anisotropic $\curS$-manifold model}.
Let $\bfe^1(\bfx), \bfe^2(\bfx), \bfe^3(\bfx)$ be a direct local orthonormal basis at the point $\bfx \in \curS$, such that $\bfe^3(\bfx) = \bfn(\bfx)$. Consequently, $\bfe^1(\bfx)$ and $\bfe^2(\bfx)$ belong to the tangent space $T_\bfx \curS$. Let $h^{(1)}_{n_h}(\bfx)$ and $h^{(2)}_{n_h}(\bfx)$ be positive real-valued fields on $\curS$. We then define the diagonal tensor field $[K_{n_h}]$ by
\begin{equation}\label{eq2.4}
[K_{n_h}(\bfx)] = h^{(1)}_{n_h}(\bfx)\, \bfe^1(\bfx) \otimes \bfe^1(\bfx) + h^{(2)}_{n_h}(\bfx)\,\, \bfe^2(\bfx) \otimes \bfe^2(\bfx)
                        \quad , \quad \forall\, \bfx \in \curS \, ,
\end{equation}
where, for $k\in \{1,2\}$,  the field $h^{(k)}_{n_h}$  is an $n_h$-finite representation of a real-valued field on $\curS$.
It can be seen that $[K_{n_h}(\bfx)] \in \MM_3^{+0}$. When restricted to the tangent space $\mathrm{End}(T_\bfx \curS)$, it defines a symmetric, positive-definite endomorphism of $T_\bfx \curS$, that is, $[K_{n_h}(\bfx)]\, \vert \, {T_\bfx \curS} \in \mathrm{End}(T_\bfx \curS)$.
As such, it defines an anisotropic Riemannian metric on $\curS$, since the restriction of $[K_{n_h}(\bfx)]$ to $T_\bfx \curS$ is positive-definite. Consequently, the differential operator $\langle \nabla , [K_{n_h}(\bfx)] \nabla \rangle$ becomes intrinsic to the manifold $\curS$, that is, invariant under isometries and independent of the immersion into $\mathbb{R}^3$. For $k\in\{1,2\}$ and
{\color{black} for given $n_h \geq 1$,}
 a low-rank representation of $h^{(k)}_{n_h}(\bfx)$ is introduced and written as
{\color{black}
\begin{equation}\label{eq2.5}
0 < c_\star^{(k)} \,\leq \, h^{(k)}_{n_h}(\bfx) = h_k + \sum_{j=1}^{n_h} \beta_j^{\,(k)}\,\hh^{(k,j)}(\bfx) \, \leq \,  c^{(k)\star}\quad , \quad \forall \, \bfx \in \curS \, ,
\end{equation}
where $h_k > 0$ and $0 < c_\star^{(k)} \,\leq  c^{(k)\star}$, which implies that Eq.~\eqref{eq2.3} holds, and where
}
\begin{equation}\label{eq2.6}
\int_\curS \hh^{(k,j)}(\bfx)\, \hh^{(k,j')}(\bfx) \, d\sigma(\bfx) = \delta_{jj'}
\quad , \quad
\int_\curS \hh^{(k,j)}(\bfx)\,  d\sigma(\bfx) = 0 \, .
\end{equation}
For each given $h_k > 0$, the hyperparameters $\{\beta_j^{\,(k)}\}_j$ belong to an admissible set
{\color{black}
$\curC_{\adp ,k}$ ensuring that $h^{(k)}_{n_h}(\bfx)$ satisfies Eq.~\eqref{eq2.5}
}
for all $\bfx\in\curS$. We then have
\begin{equation}\label{eq2.7}
\bfbeta^{\,(k)}_{n_h} = (\beta_1^{\,(k)},\ldots , \beta_{n_h}^{\,(k)})\, \in \, \curC_{\adp ,k} \, \subset \, \RR^{n_h}
\quad , \quad k\in \{1,2\}\, .
\end{equation}
Note that for $n_h=0$, we have $h^{(k)}_{n_h}(\bfx) = h_k$, a positive constant.\\

\noindent\textit{(v) Weak formulation and existence of a unique second-order Gaussian random field}.
Let $\HH^1(\curS)\subset L^2(\curS)$ denote the Sobolev space of order $1$, equipped with the surface measure $d\sigma$. The weak formulation
{\color{black} of Eq.~\eqref{eq2.2} consists in finding $U$ in $L^2_\curP(\Omega,\HH^1(\curS))$, such that,  for all $u \in \HH^1(\curS)$
},
\begin{equation} \label{eq2.8}
\int_\curS \tau_0\, U(\bfx)\, u(\bfx) \, d\sigma(\bfx) + \int_\curS \langle\, [K_{n_h}(\bfx)]\, \nabla U(\bfx) \, , \nabla u(\bfx) \,\rangle \, d\sigma(\bfx) = B(u) \quad , \quad B(u) = \int_\curS \curB(\bfx)\, u(\bfx)\, d\sigma(\bfx) \, .
\end{equation}
It can be seen (see Eq.~\eqref{eq2.0}) that $B(u)$ is a centered, second-order, Gaussian, real-valued random variable such that
$E\{B(u)^2\} = \int_\curS u(\bfx)^2\, d\sigma(\bfx) < +\infty$. In Eq.~\eqref{eq2.8}, the partial derivatives $\partial U(\bfx)/\partial x_i$ are understood in the mean-square sense.
Under the assumptions on $\tau_0$ and the field $[K_{n_h}]$, it can be shown (using, for instance, \cite{Dautray2013,Kree1986}) that the random field $U$ indexed by $\curS$ is a centered, second-order, Gaussian, real-valued  random field, meaning that $E\{U(\bfx)^2\} < +\infty$ for all $\bfx\in\curS$.\\

\noindent \textit{(vi) Construction of an approximation by the finite element method}.
As previously explained, the manifold $\curS$ is meshed with $3$-node finite elements (triangles) with linear interpolation functions in the isoparametric finite element framework \cite{Zienkiewicz2005} yielding the geometrical approximation $\curS_h$ of $\curS$. The total number of finite element nodes is denoted by $n_o$.
Let $\{\varphi_p, p=1,\ldots , n_o\}$ be the corresponding finite element basis. Note that this functions are continuous on $\curS_h$. Keeping the same notation for the finite element approximation of $U$, we have
\begin{equation}\label{eq2.9}
U(\bfx) = \sum_{p=1}^{n_o} U_{p}\, \varphi_p(\bfx) \quad , \quad \forall\, \bfx\in\curS_h \, .
\end{equation}
Substituting $U$ defined by Eq.~\eqref{eq2.9} into Eq.~\eqref{eq2.8} (in which $\curS$ is replaced by $\curS_h$) and taking for $u$ the basis functions $\{\varphi_p\}_p$, we obtain the $\RR^{n_o}$-valued  stochastic equation,
\begin{equation}\label{eq2.10}
( \tau_0\,[g] + [\kappa_{n_h}] ) \, \bfU = \bfB \quad , \quad \bfU = (U_{1},\ldots , U_{n_o})
                    \quad , \quad \bfB = (B_1,\ldots , B_{n_o}) \, ,
\end{equation}
where $\bfU$ and $\bfB$  are a centered, second-order, Gaussian, $\RR^{n_o}$-valued random variables, and where the matrix $[g]\in \MM_{n_o}^+$, the matrix $[\kappa_{n_h}]\in \MM_{n_o}^{+}$, and the $\RR^{n_o}$-valued random variable $\bfB$ are written, for all $p$ and $q$ in $\{1,\ldots , n_o\}$, as
\begin{equation}\label{eq2.11}
[g]_{qp} = \! \int_{\curS_h}\! \varphi_p(\bfx)\, \varphi_q(\bfx) \, d\sigma(\bfx) \,\,\, , \,\,\,
[\kappa_{n_h}]_{qp} = \! \int_{\curS_h} \!\langle \, [K_{n_h}(\bfx)]\, \nabla \varphi_p(\bfx) , \nabla\varphi_q(\bfx)\, \rangle \, d\sigma(\bfx) \,\,\, , \,\,\, B_q = \! \int_{\curS_h} \!\curB(\bfx)\, \varphi_q(\bfx)\, d\sigma(\bfx)\, .
\end{equation}
With the finite element method, $[g]$ and $[\kappa_{n_h}]$ are sparse matrices.
The sparse positive-definite symmetric real matrix $\tau_0\,[g] + [\kappa_{n_h}] \in \MM_{n_o}^+$  is thus invertible, and the random field $U$, defined by Eq.~\eqref{eq2.6}, can be rewritten, for all $\bfx\in\curS_h$,  as
\begin{equation}\label{eq2.12}
U(\bfx) = \langle \, \bfU , \bfvarphi(\bfx)\, \rangle  \quad, \quad
\bfU = (\tau_0\,[g] + [\kappa_{n_h}])^{-1}\, \bfB \quad , \quad
\bfvarphi(\bfx) = (\varphi_1(\bfx), \ldots , \varphi_{n_o}(\bfx))\in\RR^{n_o} \, .
\end{equation}
%

\noindent \textit{(vii) Covariance matrix of $\bfU$}.
Under the introduced hypotheses, $\bfU$ is a centered, second-order,  $\RR^{n_o}$-valued random variable, whose covariance matrix  $[C_{\bfU}] = E\{\bfU \otimes \bfU\} \in \MM_{n_o}^+$ is invertible.
{\color{black} Using Eqs.~\eqref{eq2.11} and \eqref{eq2.12}}
 yields
\begin{equation}\label{eq2.13}
[C_{\bfU}] = (\tau_0 [g] + [\kappa_{n_h}])^{-1} \, [g] \, (\tau_0 [g] + [\kappa_{n_h}])^{-T} \, .
\end{equation}
Note that $(\tau_0 [g] + [\kappa_{n_h}])^{-T} = (\tau_0 [g] + [\kappa_{n_h}])^{-1}$, and since $[g]$ is invertible, we can write
$[C_{\bfU}]^{-1} = (\tau_0 [g] + [\kappa_{n_h}]) \, [g]^{-1} \, (\tau_0 [g] + [\kappa_{n_h}])$. At this stage of the model construction, the two matrices $[C_{\bfU}]$ and $[C_{\bfU}]^{-1}$ appear as symmetric, full $n_o\times n_o$ real matrices,
{\color{black} depending on $\tau_0$, $h_1$, $h_2$, $\bfbeta^{\,(1)}_{n_h}$, and $\bfbeta^{\,(2)}_{n_h}$}.\\

\noindent \textit{(viii) Eigenvalue problem, reduced-order representation $U^m$ of the random field $U$, and its generator}.
Let $\lambda_1 \geq \lambda_2 \geq \ldots \geq \lambda_m > 0$ be the $m \leq n_o$ largest eigenvalues of $[C_{\bfU}]$, ordered in decreasing order, and let $\bfPhi^1,\ldots, \bfPhi^m$ be the associated orthonormal eigenvectors. Define $[\lambda^m]\in\MM_m^+$ as the diagonal matrix such that $[\lambda^m]_{\alpha\beta} = \lambda_\alpha\, \delta_{\alpha\beta}$, and let $[\Phi^m] = [\bfPhi^1 \ldots \bfPhi^m] \in \MM_{n_o,m}$ be the rectangular matrix whose columns are these eigenvectors. Then we have
\begin{equation}\label{eq2.14}
[C_{\bfU}]\, [\Phi^m] =  [\Phi^m] \, [\lambda^m]\quad , \quad [\Phi^m]^T\, [\Phi^m] = [I_m] \, .
\end{equation}
For $m \leq n_o - 1$, let  $\bfU^m$ be the reduced-order representation of the centered random vector $\bfU$ defined by
\begin{equation}\label{eq2.15}
\bfU^m = [\Phi^m]\, [\lambda^m]^{1/2} \, \bfH  \, ,
\end{equation}
where $\bfH=(H_1,\ldots,H_m)$ is a centered, normalized second-order, Gaussian, $\RR^m$-valued random variable, with independent components,
\begin{equation}\label{eq2.16}
E\{\bfH\} = \bfzero_m \quad , \quad E\{\bfH\otimes\bfH\} = [I_m]\, .
\end{equation}
Note that $\bfU^m$ depends on the hyperparameters $h_1$, $h_2$, $\bfbeta^{\,(1)}_{n_h}$, and $\bfbeta^{\,(2)}_{n_h}$ (we recall that $\tau_0$
{\color{black} is prescribed  and is not considered as a hyperparameter). The model is defined by Eqs.~\eqref{eq2.15} and \eqref{eq2.16}, where $m$ is considered as a fixed parameter of the model rather than as a}
hyperparameter to be identified in the context of supervised learning.
Indeed, fixing $m$ amounts to specifying the model of the Gaussian field. A larger fixed value of $m$ introduces more statistical fluctuations across different scales in the field, which can help increase the variability of the network random weights. In summary, the goal is not to construct a "good" approximation of $\bfU$, but rather to control the stochastic model of the Gaussian field,  in part through the choice of $m$. Nevertheless, computing the projection error is useful, as it quantifies the level of "filtering" induced by the selected value of $m$.
Once the parameter $m$ is fixed such that $1\leq m \ll n_o$, the approximation error can then be computed by
\begin{equation}\label{eq2.17}
\err_\pPCA(m) = \frac{E\{\Vert\, \bfU - \bfU^m\,\Vert^2\}}{ E\{\Vert\, \bfU\,\Vert^2\}} =
 1 - \frac{\sum_{\alpha=1}^m \lambda_\alpha}{\tr\{[C_{\bfU}]\}} \quad , \quad  m < n_o -1  \, .
\end{equation}
Using Eqs.~\eqref{eq2.12} and \eqref{eq2.15} yields the reduced-order representation $U^m$ of the random field $U$,
\begin{equation} \label{eq2.18}
U^m(\bfx) =  \langle \, \bfpsi^m(\bfx) , \bfH\, \rangle \quad , \quad  \forall\, \bfx\in\curS_h\, ,
\end{equation}
where  $\bfpsi^m(\bfx)=(\psi^m_1(\bfx),\ldots, \psi^m_m(\bfx))\in\RR^m$
{\color{black} is written}
as
\begin{equation} \label{eq2.19}
 \bfpsi^m(\bfx) = [\lambda^m]^{1/2} [\Phi^m]^T\, \bfvarphi(\bfx)\, .
\end{equation}
Note that since $\bfvarphi \in C^0(\curS_h,\RR^{n_o})$, we have $\bfpsi^m \in C^0(\curS_h,\RR^{m})$.
Let $\{\bfeta^\ell ,\ell=1,\ldots, n_\psim\}$ be $n_\psim$ independent realizations of $\bfH$.
From Eq.~\eqref{eq2.18}, the realizations $\{ u^{m,\ell}(\bfx),\bfx\in\curS_h\}_{\ell=1}^{n_\psim}$ of $\{U^m(\bfx),\bfx\in\curS_h\}$ are explicitly computed by
{\color{black}
\begin{equation}\label{eq2.20}
u^{m,\ell}(\bfx) = \langle \, \bfpsi^m(\bfx) , \bfeta^\ell\, \rangle \quad , \quad  \forall\, \bfx\in\curS_h\, .
\end{equation}
}
Finally, considering that $m$, $n_h \geq 0 $, and $\tau_0 > 0$ are fixed independently of $\bftheta$, the parameterization of the probabilistic model proposed for the random field $U^m$ involves the hyperparameters $h_1$, $h_2$, and, if $n_h > 0$, the vectors
$\bfbeta^{\,(1)}_{n_h}\in \curC_{\adp ,\, 1}$ and $\bfbeta^{\,(2)}_{n_h}\in \curC_{\adp ,\, 2}$, where $\curC_{\adp ,\, k} \subset\RR^{n_h}$ is defined by Eq.~\eqref{eq2.7}
{\color{black} (see Eq.~\eqref{eq8.17})}.
\subsection{Numerical strategies for computing the reduced-order representation $U_c^m$ of the random field $U_c$}
\label{Section2.2}
To construct the reduced-order representation $U^m$ (defined by Eq.~\eqref{eq2.18}) of the random field $U$, we need to compute the dominant eigenspace of dimension $m$ of the full covariance matrix $[C_{\bfU}] \in \MM^+_{n_o}$ (see Eq.~\eqref{eq2.14}), which is defined by Eq.~\eqref{eq2.13}.\\

\noindent \textit{(i) Direct computation}. If the $m$-dimensional dominant eigenspace of the full matrix $[C_{\bfU}]$ can be directly computed, the matrices $[\Phi^m]$ and $[\lambda^m]$ used in the reduced-order representation $U^m(\bfx)$ (defined by Eq.~\eqref{eq2.18}) are obtained.\\

\noindent \textit{(ii) An alternative attempt}. If method (a) is too computationally expensive, one may consider computing the eigenspace associated with the smallest eigenvalues of
{\color{black} $[C_{\bfU}]^{-1} = (\tau_0 [g] + [\kappa])^{T} \, [g]^{-1} \, (\tau_0 [g] + [\kappa])$}
and then deducing the dominant eigenspace of $[C_{\bfU}]$.
{\color{black} Unfortunately, although $[g]$ is a sparse matrix, $[g]^{-1}$ is a full matrix. Thus there is no computational advantage to using this formulation, and alternatives  must be considered.}\\

\noindent \textit{(iii) A scalable alternative that avoids memory limitations}. It consists in constructing a random generator of $\bfU$ and then to use the generated realizations for computing the matrices $[\Phi^m]$ and $[\lambda^m]$.
Such a generator of $\bfU$ is based on the equation  $\bfU = (\tau_0\,[g] + [\kappa])^{-1}\, \bfB$
{\color{black} from Eq.~\eqref{eq2.10}},
where $\bfB$ is a centered, second-order, Gaussian random variable with values in $\RR^{n_o}$, whose covariance matrix is the sparse matrix $[g] \in \MM^+_{n_o}$.\\

\noindent (iii.a) If the Cholesky factorization $[g] = [L_g]^T \, [L_g]$, with $[L_g]$ an upper triangular matrix in $\MM_{n_o}$, can be computed, then the random vector $\bfB$ can be written as $\bfB = [L_g]^T\, \bfGamma$, where $\bfGamma$ is a normalized Gaussian $\RR^{n_o}$-valued random variable ($E\{\bfGamma\} = \bfzero_{n_o}$ and $E\{\bfGamma\otimes\bfGamma\} = [I_{n_o}]$). Let $\{\bfgamma^\ell,\ \ell = 1,\ldots, n_\psim\}$ be $n_\psim$ independent realizations of $\bfGamma$, grouped columnwise in the rectangular matrix $[\gamma] \in \MM_{n_o,n_\psim}$.
Let $[u] \in \MM_{n_o,n_\psim}$ be the rectangular matrix that groups columnwise the $n_\psim$ independent realizations $\{\bfu^\ell,\ \ell = 1,\ldots, n_\psim\}$ of $\bfU$, such that $\bfu^\ell = (\tau_0\,[g] + [\kappa])^{-1}\, [L_g]^T\, \bfgamma^\ell$. Then the matrix $[u]$ is computed by solving the linear system of equations with a sparse matrix,
\begin{equation} \label{eq2.21}
(\tau_0\,[g] + [\kappa]) \, [u] = [L_g]^T\,[\gamma] \, .
\end{equation}

\noindent (iii.b) If the Cholesky factorization of $[g]$ is computationally too expensive, a lumped approximation (commonly used with the finite element method in structural dynamics \cite{Bathe1976}) can be employed to construct a diagonal approximation $[g^\pdiag] \in \MM^+_{n_o}$ of $[g] \in \MM^+_{n_o}$, such that $[g^\pdiag]_{ij} = \delta_{ij}\, \sum_{j'=1}^{n_o} [g]_{ij'}$. Therefore, the diagonal entry $(i,i)$ of $[g^\pdiag]^{1/2}$ is $\sqrt{[g]_{ii}}$. In that case, $\bfB = [g^\pdiag]^{1/2}\, \bfGamma$, and Eq.~\eqref{eq2.21} is replaced by

\begin{equation} \label{eq2.22}
(\tau_0\,[g] + [\kappa]) \, [u] = [g^\pdiag]^{1/2}\,[\gamma] \, .
\end{equation}

\noindent (iii.c) Let $\underline{\hat\bfu} = \frac{1}{n_\psim} \sum_{\ell=1}^{n_\psim}\bfu^\ell$ be the estimate of $\underline \bfu = \bfzero_{n_o}$ and let $[\hat u]\in\MM_{n_o,n_\psim}$ be the matrix formed  by the centered realizations $\{\bfu^\ell - \underline{\hat\bfu} \, ,  \ell = 1,\ldots, n_\psim\}$ as columns. Consequently,
$\frac{1}{n_\psim - 1} [\hat u]\, [\hat u]^T$ is the estimate of $[C_{\bfU}]$.
Let $[\hat\Phi^m]\,[\hat S^m]\,[\hat\Phi^m]^T = [\hat u]$ be the truncated singular value decomposition (SVD) of $[\hat u]\in \MM_{n_o,n_\psim}$, where  the $m$ largest singular values are retained (\cite{Golub1993}). The diagonal entries of $[\hat S^m]$ are the singular values $\hat S_1\geq  \ldots \geq  \hat S_m > 0$, in  decreasing order. Thus, $[\hat\Phi^m]\in \MM_{n_o,m}$ satisfies $[\hat\Phi^m]^T [\hat\Phi^m] = [I_m]$. We then define the diagonal matrix $[\hat\lambda^m] = \frac{1}{n_\psim -1} \, [\hat S^m]^2$. Hence, {\color{black} $[\hat\Phi^m]$ and $[\hat\lambda^m]$ are approximations of $[\Phi^m]$ and $[\lambda^m]$,}
 respectively,  with convergence as $n_\psim$ goes to infinity.\\
\section{Generation of neurons via an inhomogeneous Poisson process}
\label{Section3}
{\color{black} In this section, the hyperparameter $\bftheta =(h_1,h_2,\zeta_s,\bfbeta^{\,(1)}_{n_h},\bfbeta^{\,(2)}_{n_h})$, defined in Section~\ref{Section1}-(iii) (where $\zeta_s$ is defined in Section~\ref{Section5}), is fixed}.\\

\noindent\textit{(i) Principle of the construction}.
For all $\bfx$ fixed in $\curS_h$, let $\Lambda(\bfx;\bftheta)$ be the variance of the real-valued random variable $U^m(\bfx;\bftheta)$.
Taking into account Eqs.~\eqref{eq2.16}, \eqref{eq2.19}, and \eqref{eq2.18}, $\Lambda(\bfx;\bftheta)$ is written as
\begin{equation}\label{eq3.1}
\Lambda(\bfx;\bftheta) = \Vert \bfpsi^m(\bfx;\bftheta)\Vert^2
               \quad , \quad \bfpsi^m(\bfx;\bftheta) = [\lambda^m(\bftheta)]^{1/2} [\Phi^m(\bftheta)]^T\, \bfvarphi(\bfx) \in\RR^m\, .
\end{equation}
To generate the positions $\{\bfx^i_\bftheta, i=1,\ldots , N\}$ of $N$ neurons on the manifold $\curS_h$, we employ a spatial point process that adapts to the local variability of the latent Gaussian field $\{U^m(\bfx;\bftheta),\bfx\in\curS_h\}$.
It should be noted that the variance $\Lambda(\bfx;\bftheta)$ may vanish at certain points on the manifold $\curS_h$,
especially when the reduced-order dimension $m$ is small. In such regions, the intensity function of the inhomogeneous Poisson process becomes zero, and no neurons will be sampled there. This behavior is consistent with the probabilistic structure of the model, in which neuron allocation is concentrated in areas of higher uncertainty (i.e., higher local variance). However, if large parts of $\curS_h$ exhibit vanishing variance, this may result in under-sampling or complete omission of those regions from the neural graph. In practice, this effect is mitigated by ensuring that the reduced-order dimension $m$ captures enough variability of the latent field over $\curS$.
We thus define an inhomogeneous Poisson point process $\PP_\Lambda$ on $\curS_h$ with intensity $\Lambda(\bfx;\bftheta) \, d\sigma(\bfx)$, where $d\sigma(\bfx)$ denotes the surface measure on the manifold $\curS_h$ induced by its embedding in $\RR^3$.
This means that, for any measurable subset $A \subset \curS_h$, the random number $\curN_A(\bftheta)$ of points falling in $A$ is a Poisson random variable with mean value
\begin{equation}\label{eq3.2}
E\{\curN_A(\bftheta)\} = \int_A \Lambda(\bfx;\bftheta)\, d\sigma(\bfx)\, .
\end{equation}
{\color{black} Conditional on the event $\curN_{\!\curS_h}(\bftheta) = N$,}
 the locations of the $N$ points are independently and identically distributed over $\curS_h$, with probability density function $p(\bfx;\bftheta)$ with respect to the measure $d\sigma(\bfx)$, given by
\begin{equation}\label{eq3.3}
p(\bfx;\bftheta) = \frac{\Lambda(\bfx;\bftheta)}{\int_{\curS_h} \Lambda(\bfy;\bftheta)\, d\sigma(\bfy)} \quad , \quad \bfx \in \curS_h\,.
\end{equation}
The set of accepted points $\{\bfx^i_\bftheta, i=1,\ldots , N\} \subset \curS_h$ constitutes the $N$ neurons of the graph,
\begin{equation}\label{eq3.4}
\curV_\bftheta = \{ \bfx^1_\bftheta, \bfx^2_\bftheta, \dots, \bfx^N_\bftheta \} \quad , \quad \bfx^i_\bftheta \sim \PP_\Lambda \, .
\end{equation}
In practice, the realization of $\PP_\Lambda$ is approximated numerically by the thinning procedure described below.
With this construction, the higher the local variance $\Lambda(\bfx;\bftheta)$, the more likely a neuron is placed at $\bfx$, thus allocating model capacity where the field is more uncertain or varying.
This construction follows standard properties of inhomogeneous Poisson point processes, including the conditional i.i.d. distribution of points given their number \cite{Daley2003,Baddeley2015}.\\

\noindent\textit{(ii) Sampling from an inhomogeneous Poisson process via rejection method}.
To construct a sample of neurons according to the inhomogeneous Poisson point process $\PP_\Lambda$ defined on the manifold $\curS_h$, we employ a
{\color{black} rejection-type (approximate thinning)} algorithm \cite{Lewis1979,Robert2004}, which is particularly suitable for spatial domains discretized by finite elements. The method enables efficient simulation of a realization of $\PP_\Lambda$ by filtering candidate points sampled from a proposal distribution over $\curS_h$ based on the target intensity function $\Lambda(\bfx;\bftheta)$. This process is fully compatible with the mesh-based representation of $\curS_h$ described in the geometric framework.
First, we generate a large set of $M$ candidate points $\{\bfz^j, j=1,\ldots, M\}\subset \curS_h$ using a proposal probability measure $\pi(\bfx)\, d\sigma(\bfx)$. A typical and computationally efficient choice for $\pi(\bfx)$ is the piecewise uniform distribution over the triangulated mesh of $\curS$, with each triangle contributing proportionally to its surface area. Such sampling can be performed using barycentric coordinates and local area weighting over each triangular element, following standard techniques in mesh-based Monte Carlo integration.
Then, for each candidate point $\bfz^j$, the normalized intensity value $\Lambda(\bfz^j;\bftheta)$ is computed using Eq.~\eqref{eq3.1}, which reflects the local variance of the latent Gaussian field $\{U^m(\bfx;\bftheta),\bfx\in\curS_h\}$ at that location. To control the thinning acceptance rates, we compute the maximum observed value over the sample,
\begin{equation}\label{eq3.5}
\Lambda_\pmax(\bftheta) = \max_{j=1,\ldots , M} \Lambda(\bfz^j;\bftheta) \, .
\end{equation}
The value $\Lambda_\pmax(\bftheta)$ is an empirical estimate obtained from the finite set of candidate points, and may
{\color{black} overestimate}
 the true supremum of $\Lambda(\bfx;\bftheta)$ over $\curS_h$; such {\color{black} overestimation} can reduce the efficiency of
 {\color{black} the rejection step} by decreasing the expected number of accepted neurons.
Each candidate point $\bfz^j$ is then retained (i.e., selected as a neuron) with  probability
\begin{equation}\label{eq3.6}
p_j(\bftheta) = \frac{\Lambda(\bfz^j;\bftheta)}{\Lambda_\pmax(\bftheta)}  \, ,
\end{equation}
by drawing an independent Bernoulli random variable $B_j(\bftheta) \sim \text{Bernoulli}(p_j(\bftheta))$. The resulting set of accepted neurons is then given by
\begin{equation}\label{eq3.7}
\curV_\bftheta = \left \{ \bfz^j \in \{\bfz^1,\dots,\bfz^M\} \, \vert \, B_j(\bftheta) = 1 \right\} \subset \curS_h \, .
\end{equation}
This set approximates a realization of the inhomogeneous Poisson point process $\PP_\Lambda$.
Note that this thinning algorithm provides an approximation of the inhomogeneous Poisson process $\PP_\Lambda$, which becomes accurate in the limit of a large number of candidate points $M$ and when the proposal distribution $\pi(\bfx)$ is close to uniform with respect to the surface measure $d\sigma(\bfx)$; exact sampling would require $\pi(\bfx)\propto 1$ and $M\rightarrow +\infty$.
{\color{black} Let $\curN_{\!\curS_h}(\bftheta)$ denote}
 the random variable representing the number of accepted points.
This sampling procedure results in a realization of a point process whose spatial distribution approximates the target inhomogeneous Poisson process $\PP_\Lambda$.
{\color{black} In particular, for any region $A \subset \curS_h$, the expected number of accepted points in $A$ approximates $\int_A \Lambda(\bfx;\bftheta)\, d\sigma(\bfx)$, and, conditional on the random number $\curN_{\!\curS_h}(\bftheta)$ of accepted points, the accepted locations are approximately i.i.d. over $\curS_h$ according to the normalized intensity function $p(\bfx;\bftheta)$ defined in Eq.~\eqref{eq3.3}.}
The expected number of accepted candidate points (i.e., neurons) is then approximated by
{\color{black}
\begin{equation}\label{eq3.8}
E\{\curN_{\!\curS_h}(\bftheta)\} = \frac{M}{\vert \curS_h\vert} \frac{1}{\Lambda_\pmax(\bftheta)} \int_{\curS_h} \Lambda(\bfx;\bftheta)\, d\sigma(\bfx)\, .
\end{equation}
}
This formula holds under the assumption that the proposal distribution $\pi(\bfx)$ is approximately uniform with respect to the surface measure $d\sigma(\bfx)$.
Consequently, regions on the manifold where the variance $\Lambda(\bfx;\bftheta)$ is large, typically indicating regions of higher uncertainty or variability of the latent field $U^m(\bfx;\bftheta)$, receive proportionally more neurons \cite{Spantini2017}.
From a numerical standpoint, this method is straightforward to implement and highly parallelizable.\\

\noindent\textit{(iii) On the relationship between the latent Gaussian random field and the spatial distribution of neurons}.
It is important to distinguish between the pointwise variance function $\bfx \mapsto \Lambda(\bfx;\bftheta)$ and any particular realization $\bfx \mapsto U^m(\bfx;\bftheta)$ of the associated latent Gaussian random field. While the variance $\Lambda(\bfx;\bftheta)$ quantifies the local uncertainty of the field at each point of the manifold, a realization may take high or low values regardless of the corresponding variance level. In particular, regions where the field vanishes or takes extreme values may still exhibit either high or vanishing variance, depending on the structure of the reduced-order representation.\\

\noindent\textit{(iv) Implementation of random sampling in the algorithm}.
In the numerical realization of the inhomogeneous Poisson process on the discretized manifold $\curS_h$, the sampling of neuron locations is performed using four independent random vectors: $\bfcurU_\pelem$, $\bfcurU_a$, $\bfcurU_b$, and  $\bfcurU_\pPoisson$, each with entries that are independent, uniformly distributed random variables on $[0,1]$. First, we introduce the random vector
$\bfcurU_\pelem$ with values in $[0 , 1]^M$, which is used to select triangles on the mesh in proportion to their surface areas. The number $M$ of candidate points is generated by $M= p_M \times  n_\pelem$, where $n_\pelem$ is the number of finite elements (triangles) in the mesh,  and $p_M$ is a given tunable parameter representing the number of points per element (for instance, we can choose $p_M=2$). Each entry in $\bfcurU_\pelem$ determines, through inversion of the cumulative area function, which mesh element is chosen for candidate point placement. Within each selected triangle, the random vectors $\bfcurU_a$  and $\bfcurU_b$, with values in $[0 , 1]^M$,  are employed as barycentric coordinates to generate a point uniformly inside the triangle. If their sum exceeds one, a reflection is applied to ensure the sample remains within the triangle. This approach results in a set of $M$ candidate points $\{\bfz^j\}$ that are approximately uniformly distributed over $\curS_h$ with respect to the surface measure.
Once the intensities $\Lambda(\bfz^j;\bftheta)$ at all candidate points have been computed, a discrete probability distribution proportional to these intensities is constructed. The final selection of $N$ neurons from the $M$ candidates is then achieved by drawing $N$ independent samples according to this distribution. The $[0,1]^N$-valued random variable $\bfcurU_\pPoisson$ is mapped via the cumulative distribution function of the normalized intensities, yielding the indices of the selected neuron locations. This sampling procedure ensures that the empirical distribution of neurons closely matches the theoretical spatial distribution
{\color{black} prescribed by the intensity function of the inhomogeneous Poisson process},
thereby concentrating neurons in regions of higher variance of the latent Gaussian field and accurately reflecting the probabilistic structure of the model.
\section{Neural connectivity graph construction from geometry and latent fields}
\label{Section4}
In Section~\ref{Section3}, we introduced the stochastic model for generating $N$ neurons with locations  on the manifold $\curS_h$, governed by an inhomogeneous Poisson process whose intensity is modulated by the local variability of the latent Gaussian field $\{U^m(\bfx; \bftheta), \bfx \in \curS\}$. The hyperparameter vector $\bftheta$ controls the statistical properties of this field and is inferred from supervised input–output training data. Consequently, the neurons locations $\{\bfx^1_\bftheta,\ldots ,\bfx^N_\bftheta\}$ depend on $\bftheta$, which is considered fixed throughout this section.
For a given value of $\bftheta$, we consider a
{\color{black} realization $\{u^m_\bftheta(\bfx), \bfx \in \curS_h\}$ of the random field $\{U^m(\bfx; \bftheta), \bfx \in \curS_h\}$.}
Note that all generated realizations are assumed to be  statistically independent across the different values of $\bftheta$ used by the training algorithm (see Section~\ref{Section8}).
Using this value of $\bftheta$, the generation of the geometric–probabilistic neural topology begins using the realization $\{u^m_\bftheta(\bfx), \bfx \in \curS_h\}$. More precisely, for a fixed realization $\omega_\bftheta \in \Omega$, we denote $u^m_\bftheta(\bfx) = U^m(\bfx;\bftheta;\omega_\bftheta)$.
From the resulting set of $N$ neurons,  subsets of $n_\pin$ input and $n_\pout$ output neurons (independent of $\bftheta$) are selected based on the lowest and highest values, respectively,  of  $\{u^m_\bftheta(\bfx_\bftheta^j) , j=1,\ldots, N\}$, evaluated at the Poisson-sampled locations, as previously explained.
The neural connectivity graph is defined by a binary sparsity mask derived by combining spatial proximity with dependence on the {\color{black} realization $\{u^m_\bftheta(\bfx), \bfx \in \curS_h\}$  of}
 the latent random field . This mask is applied to the fully connected weight matrix to obtain the sparsely connected weights. Synaptic weights are then randomly assigned, conditional on the latent field realization, introducing variability in neural responses even under fixed input values. Scalability is achieved through the sparsification mechanism incorporated into the stochastic neural topology.
\subsection{Input and output neuron selection from the latent field for graph topology}
\label{Section4.1}
For a fixed value of $\bftheta$, let $\{u^m_\bftheta(\bfx), \bfx \in \curS_h\}$ be the given realization of the random field $\{U^m(\bfx;\theta), \bfx \in \curS_h\}$ introduced above, from which the graph construction proceeds. Let $\{\bfx_\bftheta^j, j=1,\ldots, N\}$ denote the $N$ neurons stochastically located on the manifold $\curS_h$ as described in Section~\ref{Section3}, and depending on $\bftheta$. For each neuron located at $\bfx_\bftheta^j$, the value $u^m_\bftheta(\bfx_\bftheta^j)$ of the realized Gaussian random variable $U^m(\bfx^j;\bftheta)$ is computed.
The subset of $n_\pin$ input neurons is selected as the nodes corresponding to the $n_\pin$ smallest values among $\{u^m_\bftheta(\bfx_\bftheta^j),j=1,\ldots, N\}$. Let $\curJ_\bftheta^\pin = \{i^{\,\bftheta}_1,\ldots,i^{\,\bftheta}_{n_\ppin}\} \subset \{1,\ldots,N\}$ be the index set corresponding to the sorted order,
\begin{equation}\label{eq4.1}
u^m_\bftheta(\bfx_\bftheta^{i^{\,\bftheta}_1}) \leq u^m_\bftheta(\bfx_\bftheta^{i^{\,\bftheta}_2}) \leq \dots \leq u^m_\bftheta(\bfx_\bftheta^{i^{\,\bftheta}_{n_\pppin}}) \, .
\end{equation}
Similarly, the subset of $n_\pout$ output neurons is selected as the nodes corresponding to the $n_\pout$ largest values of $u^m_\bftheta(\bfx^j)$. Let $\curJ_\bftheta^\pout = \{o^{\,\bftheta}_1,\ldots,o^{\,\bftheta}_{n_\ppout}\} \subset \{1,\ldots,N\}$ be the index set such that,
\begin{equation}\label{eq4.2}
u^m_\bftheta(\bfx_\bftheta^{o^{\,\bftheta}_1}) \geq u^m_\bftheta(\bfx_\bftheta^{o^{\,\bftheta}_2}) \geq \ldots \geq u^m_\bftheta(\bfx_\bftheta^{o^{\,\bftheta}_{n_\pppout}})\, .
\end{equation}
Since $N$, $n_\pin$, and $n_\pout$ are given, the indices of the $n_\pint$ internal (hidden) neurons, which depend on $\bftheta$, are grouped in $\curJ_\bftheta^\pint$, and we have
\begin{equation}\label{eq4.3}
N = n_\pin + n_\pint + n_\pout \quad , \quad \curJ_\bftheta^\pin\cup \curJ_\bftheta^\pint\cup\curJ_\bftheta^\pout = \{1,\ldots , N\}
\quad , \quad \curJ_\bftheta^\pin\cap \curJ_\bftheta^\pint\cap\curJ_\bftheta^\pout = \emptyset.
\end{equation}
Note that if $n_\pin$, $n_\pout$, $n_\pint$
{\color{black}  (and thus $N$) are fixed independently of $\bftheta$, the index sets}
 $\curJ_\theta^\pin$, $\curJ_\theta^\pint$, and $\curJ_\bftheta^\pout$ still depend on $\bftheta$.
This selection strategy induces a functional differentiation of neurons based on the intensity landscape of the latent field:
{\color{black} input neurons lie in low-activation regions (the smallest values of the random latent field), whereas output neurons lie in high-activation regions (the largest values of the random latent field).}
This choice introduces a geometric and probabilistic bias into the architecture, governed by the underlying generative process.

The input and output amplitudes of the random neural network are then represented by the vectors $\xx = (\xx_1,\ldots,\xx_{n_\ppin})\in\RR^{n_\ppin}$ and $\yy = (\yy_1,\ldots,\yy_{n_\ppout})\in\RR^{n_\ppout}$, respectively, whose  components correspond to the activation values of the neurons selected by the process described above.
\subsection{Sparsity binary mask construction for neural graph topology}
\label{Section4.2}
Having generated the vertex set $\curV_\bftheta = \{ \bfx_\bftheta^1,\ldots ,\bfx_\bftheta^N\}$ from the Poisson process (Section~\ref{Section3}) and the preliminary fully connected weight matrix $[w^g_\bftheta]$ (Section~\ref{Section4.2.2}-(i)), we construct a set of weighted edges $\curE_\bftheta$ to obtain
{\color{black} a sparse matrix}
$[w_\bftheta^\psp]$. This defines the graph $\curG_\bftheta = (\curV_\bftheta, \curE_\bftheta)$, whose topology reflects both spatial geometry and field-induced connectivity.
{\color{black} This topology captures local geometric structure and manifold connectivity and critically influences subsequent training.}
In what follows, we briefly review standard methodologies for edge construction and sparsification, before presenting the approach adopted in this work.
\subsubsection{Short survey of edge construction and sparsification methodologies}
\label{section4.2.1}
The preliminary weight matrix  $[w^g_\bftheta]$ corresponds to a  fully connected graph on $N$ nodes, resulting in $O(N^2)$ edges, which is often computationally prohibitive. To obtain a sparser and more tractable representation, several well-established methods are commonly used:
\begin{itemize}
    \item $k$-nearest neighbors ($k$-NN) graph. For each node $\bfx_\bftheta^i$, retain only the $k$ edges $(\bfx_\bftheta^i,\bfx_\bftheta^j)$ with the highest weights $[w_\bftheta]_{ij}$, corresponding to the $k$ most strongly connected neighbors. This results in a locally adaptive graph with fixed degree $k$ \cite{Luxburg2007,Maier2009}.

    \item Radius-based graph. Include the edge $(\bfx_\bftheta^i,\bfx_\bftheta^j)$ whenever the approximate geodesic distance satisfies $\hat d_g(\bfx^i_\bftheta,\bfx^j_\bftheta)$ $ \leq r$, where the radius $r$ can be chosen either globally or based on local statistics. This approach induces a variable node degree while preserving spatial locality \cite{Penrose2003}.

    \item Spectral sparsification. Construct a sparse graph with weight matrix $[w^\psp_\bftheta]$ that preserves the spectral (Laplacian) properties of the original fully connected graph. Techniques such as effective resistance sampling are often used to approximate the graph spectrum while reducing the edge count to $O(N \log N)$ \cite{Spielman2011}.

    \item Locally scaled diffusion kernel. Following \cite{Zelnik2004}, define the approximate node-dependent scales $\hat \sigma_{\bftheta,i}$ as the geodesic distance between $\bfx^i_\bftheta$ and its $k$-th nearest neighbor, denoted $\bfx^{i,k}_\bftheta$. These scales reflect local sampling density. Then define the affinity as $[w^g_\bftheta]_{ij} = {\exp}( - \hat d_g(\bfx^i_\bftheta, \bfx^j_\bftheta)^2 /(\hat\sigma_{\bftheta,i} \, \hat\sigma_{\bftheta,j}))$ and
        $[w^g_\bftheta]_{ii} = 0$, which adaptively adjusts the kernel bandwidth according to the density around each node, yielding a smoother and more robust similarity structure.
\end{itemize}
In this work, we have selected the locally scaled diffusion kernel method.

\subsubsection{Self-tuning diffusion kernel with percentile-based sparsification}
\label{Section4.2.2}
We construct a neuron–neuron affinity graph by combining intrinsic geodesic distances computed on the mesh with a self-tuning diffusion kernel \cite{Zelnik2004}, followed by percentile-based thresholding to enforce a desired sparsity level.
While the preliminary affinity weights are defined by Eq.\eqref{eq4.4}, the sparsification procedure considers only geometric information. Specifically, we use a temporary kernel based solely on intrinsic geodesic distances, omitting neuron feature values, which will be introduced
{\color{black} in Section~\ref{Section5} for}
defining the random weights of the neural architecture.
This kernel is used to derive a binary sparsity mask $[M^\psp_\bftheta] \in \{0, 1\}^{N\times N}$, which determines the connectivity pattern of the final graph. The original affinity weights from Eq.~\eqref{eq4.4} are then assigned to the selected edges.\\

\noindent\textit{(i) Computation of edge weights for constructing the binary sparsity mask}.
Consider two neurons located at positions $\bfx^i_\bftheta$ and $\bfx^j_\bftheta$ on the manifold $\curS_h$. We define the geodesic distance $d_g(\bfx^i_\bftheta, \bfx^j_\bftheta)$ as the length of the shortest path connecting these two points on $\curS_h$.
A finite element approximation $\hat{d}_g(\bfx^i_\bftheta, \bfx^j_\bftheta)$ of this geodesic distance is given by Eq.~\eqref{eq4.7} in Section~\ref{Section4.2.2}-(ii).
Using this geodesic metric, we construct the preliminary (fully connected) edge weight matrix $[w^g_\bftheta]$, whose entries are defined by
\begin{equation} \label{eq4.4}
   [w^g_\bftheta]_{ij} = \exp\!\left(-\frac{d_g(\bfx^i_\bftheta , \bfx^j_\bftheta)^2}{\sigma_{\bftheta,i}\,\sigma_{\bftheta,j}}\right)
   \quad , \quad  [w^g_\bftheta]_{ii}=0 \, .
\end{equation}
Here, the parameter $\sigma_{\bftheta,i}$ (resp. $\sigma_{\bftheta,j}$) is local geometric bandwidths, defined by the geodesic distance from neuron location $\bfx_\bftheta^i$ (resp. $\bfx_\bftheta^j$) to its $k$-th nearest neighbor (estimated using Eq.~\eqref{eq4.9}). These bandwidths act as local scaling factors that adaptively normalize distances according to neuron density around each point, ensuring robustness to spatially varying sampling rates.
The exponential term encodes adaptive geometric proximity, generalizing ideas from \cite{Belkin2003, Coifman2006}, and ensures that neurons close in geodesic distance and located in denser regions are assigned stronger weights. Consequently, a pair of neurons will have a large edge weight only if they are spatially proximal, while distant pairs will have exponentially small weights.
{\color{black} As previously explained, since a fully connected graph} is computationally prohibitive for large neuron counts, a sparsification procedure (discussed in the next section) is applied to obtain a computationally tractable network while preserving the essential geometric and probabilistic structure of the neuron topology.\\

\noindent\textit{Remark 2}. In Section~\ref{Section5}, we will consider a more general expression than Eq.~\eqref{eq4.4} for constructing the random weight matrix $[\bfW_\bftheta]$ and its sparse approximation $[\bfW^\psp_\bftheta]$ for the random neural architecture. These will be constructed using the binary sparsity mask introduced here. This more general formulation of the weights will impose additional constraints by promoting strong connections between neurons with similar latent field values. For now, we use the simpler expression in Eq.~\eqref{eq4.4} to construct the binary mask matrix that yields the sparse connectivity pattern of the random neuron system.\\

\noindent\textit{(ii) Intrinsic geodesic distances.}
{\color{black} Recall that the manifold $\curS_h \subset \RR^3$ denotes the finite-element mesh of $\curS$, consisting}
of $n_o$ nodes (see Section~\ref{Section2.1}-(vi)). Denote the coordinates of the $p$-th finite element node by
$\bfx^p_o = (x^p_{o,1}, x^p_{o,2}, x^p_{o,3}) \in \curS \subset \RR^3$ for $p \in\{ 1, \ldots, n_o\}$,
and define the set of all mesh node coordinates as $\{\bfx^p_o  , p=1,\ldots , n_o\}$. Note that the finite element nodes belong to $\curS_h$ but also to $\curS$.
We now consider the triangular connectivity of the mesh (i.e., the list of triangle elements defined over the nodes). Using this connectivity, we construct the adjacency matrix $[A] \in \MM_{n_o}$, where each entry represents the Euclidean distance between directly connected mesh nodes,
\begin{equation}\label{eq4.5}
A_{pq} = \Vert \bfx^p_o - \bfx^p_o \Vert \,\,  \text{if} \,\, (p,q) \,\, \text{is an edge of the mesh}, \,\, \text{and}\, =  0 \,\, \text{otherwise} \, .
\end{equation}
To compute intrinsic geodesic distances between the nodes of the mesh, we first compute the shortest-path geodesic distances between all pairs of mesh nodes using the adjacency matrix $[A]$.
\begin{equation}\label{eq4.6}
[D_g]_{pq} = \min_{\xi \, \in \, \Xi_{pq}} \sum_{(p',\, q')\, \in \, \xi} A_{p'q'} \quad \, \quad \forall\, (p,q) \in \{1,\ldots,n_o\} \times \{1,\ldots,n_o\}\, ,
\end{equation}
where $\Xi_{pq}$ is the set of all paths in the graph induced by the mesh connectivity that connect nodes $p$ and $q$. This shortest-path computation can be implemented efficiently using algorithms such as Dijkstra’s or Fast Marching methods on graphs \cite{Kimmel1998}.
Intrinsic geodesic distances accurately reflect manifold-based separation and avoid artificial shortcuts through Euclidean space.
Since the neuron locations $\{\bfx_\bftheta^i, i=1,\ldots, N\}$ are not located at the mesh nodes $\{\bfx^p_o, p=1,\ldots , n_o\}$, but rather lie in the interior of finite elements, the geodesic distance between two neurons must be interpolated from the mesh-based distances $[D_g]_{pq}$.
Let $\{\varphi_p(\bfx), p=1,\ldots, n_o\}$ denote the finite element basis functions associated with the mesh nodes (see Section~\ref{Section2.1}-(vi)).
Then, for two arbitrary points $\bfx$ and $\bfy$ in $\curS_h$, an interpolated approximation $\hat{d}_g(\bfx,\bfy)$ of their true geodesic distance $d_g(\bfx,\bfy)$, is defined by
\begin{equation} \label{eq4.7}
\hat{d}_g(\bfx,\bfy) = \sum_{p=1}^{n_o} \sum_{q=1}^{n_o} \varphi_p(\bfx)\, [D_g]_{pq}\, \varphi_q(\bfy) \, .
\end{equation}
This bilinear interpolation ensures that $\hat{d}_g(\bfx^p_o, \bfx^q_o) = [D_g]_{pq}$ at the mesh nodes, is independent of $\bftheta$, and extends geodesic distances to all of $\curS_h$.
Applying Eq.~\eqref{eq4.7} to the neuron coordinates $\{\bfx^i_\bftheta , i=1,\ldots ,N\}$ yields the geodesic distance matrix between neurons,
\begin{equation} \label{eq4.8}
[D_{g,\bftheta}^{(\mathrm{neu})}]_{ij} = \hat{d}_g(\bfx^i_\bftheta, \bfx^j_\bftheta) = \sum_{p=1}^{n_o} \sum_{q=1}^{n_o} \varphi_p(\bfx^i_\bftheta)\, [D_g]_{pq}\, \varphi_q(\bfx^j_\bftheta) \quad , \quad i,j \in \{1,\ldots, N\} \, .
\end{equation}
%

\noindent\textit{(iii) Local bandwidth estimation $\hat \sigma_{\bftheta,i}$}.
To obtain locally adaptive scales, for each neuron $i \in \{1, \ldots, N\}$, we use the geodesic distances
$\{ [D_{g,\bftheta}^{(\mathrm{neu})}]_{ij} , j \neq i\}$ to all other neurons $j$. These distances are sorted in increasing order:
$\{ [D_{g,\bftheta}^{(\mathrm{neu})}]_{i,(1)} < [D_{g,\bftheta}^{(\mathrm{neu})}]_{i,(2)} < \ldots < [D_{g,\bftheta}^{(\mathrm{neu})}]_{i,(M)} \}$, and the approximate local bandwidth $\hat \sigma_{\bftheta,i}$ is defined as the distance to the $(k+1)$th nearest neighbor,
{\color{black}
\begin{equation}\label{eq4.9}
  \hat \sigma_{\bftheta,i} =[D_{g,\bftheta}^{(\mathrm{neu})}]_{i,(k+1)}  \quad ,  \quad k = \min(\sqrt{N},\,N-1) \, .
\end{equation}
}
{\color{black} This choice of $k$ trades off}
fine-scale local geometry (for small $k$) and robustness to noise or outliers (for larger $k$). This local scale $\hat \sigma_{\bftheta,i}$ adapts to the sampling density of the neuron distribution: it is smaller in dense regions and larger in sparse regions.\\

\noindent\textit{(iv) Self-tuning diffusion kernel}.
Using the locally adaptive scales $\{\hat\sigma_{\bftheta,i} , i=1,\ldots , N\}$ defined in Eq.~\eqref{eq4.9}, we construct a symmetric affinity matrix $[w^g_\bftheta] \in \MM_N$ (corresponding to the estimation of Eq.~\eqref{eq4.4}), whose entries encode both proximity and local density adaptation. The self-tuning diffusion kernel is defined as
\begin{equation}\label{eq4.10}
[w^g_\bftheta]_{ij} = \exp\left(- \frac{[D_{g,\bftheta}^{(\mathrm{neu})}]_{ij}^2}{\hat\sigma_{\bftheta,i}\,\hat\sigma_{\bftheta,j}} \right) \quad , \quad [w^g_\bftheta]_{ii} = 0 \, , \quad  i,j \in \{1, \ldots, N\} \, .
\end{equation}
This kernel follows the approach proposed in \cite{Zelnik2004}, where each node adapts its own local bandwidth based on its neighborhood density. The resulting graph affinity is stronger between neurons that are both close in geodesic distance and embedded in dense regions (i.e., when $\hat\sigma_{\bftheta,i}\, \hat\sigma_{\bftheta,j}$ is small), and weaker for pairs in sparser regions. This self-tuning property eliminates the need to manually specify a global scale and ensures that both local and global geometric features are faithfully encoded in the resulting weighted graph.\\

\noindent\textit{(v) Percentile-based sparsification.}
To control the sparsity of the graph, we introduce a global percentile threshold parameter $\tau_\pprc \in [0,100]$. Let $a_\pprc^\bftheta$ denote the $\tau_\pprc$th percentile of the upper-triangular (off-diagonal) entries of the symmetric matrix $[w^g_\bftheta]$,
\begin{equation}\label{eq4.11}
a_\pprc^\bftheta = \texttt{prctile}\left( \{ [w^g_\bftheta]_{ij} \,:\, 1 \le i < j \le N \} ,\, \tau_\pprc \right) \, .
\end{equation}
Here, $\texttt{prctile}(z,\tau_\pprc)$ denotes the value below which a fraction $\tau_\pprc/100$ of the elements in the
set $\curZ = \{ [w^g_\bftheta]_{ij} \,:\, 1 \le i < j \le N \}$ fall.
We define a binary sparsity edge-mask $[M^\psp_\bftheta]\in \{0,1\}^{N \times N}$ that encodes the sparsity pattern induced by the thresholded self-tuning kernel $[w^g_\bftheta]$ as follows,
{\color{black}
\begin{equation}\label{eq4.12}
[M^\psp_\bftheta]_{ij} = 1 \,\,\text{if} \,\, [w^g_\bftheta]_{ij} \ge a_\pprc^\bftheta \,\, \text{and}\,\, i \ne j\, , \,\, \text{and}\,\,
 = 0 \,\, \text{otherwise} \, .
\end{equation}
}
The final sparse, undirected neural affinity matrix is denoted by $[w^\psp_\bftheta] \in \MM_N$, which corresponds to the sparse field-informed weights matrix as the Hadamard (elementwise) product,
\begin{equation}\label{eq4.13}
[w^\psp_\bftheta]_{ij} = [M^\psp_\bftheta]_{ij} \, [w^g_\bftheta]_{ij} \, ,
\end{equation}
where $[w^g_\bftheta]$ is
{\color{black} defined by Eq.~\eqref{eq4.10}.}
This construction guarantees that sparsity is determined purely from geometry (via $[w^g_\bftheta]$).
In addition, this procedure preserves only the top $(100 - \tau_{\mathrm{prc}})\%$ strongest affinities, thereby enforcing global control over graph density. For instance, the choice such as $\tau_{\mathrm{prc}} = 75\%$ (i.e., keeping the top $25\%$ of edge weights) yields a favorable trade-off between computational efficiency and preservation of the manifold's local geometry.
The number of undirected edges in the sparse graph is then given by
\begin{equation}\label{eq4.14}
\vert\, \curE_\bftheta\, \vert = \frac{1}{2} \sum_{i=1}^{N} \sum_{\substack{j=1 \\ j \ne i}}^{N} [M^\psp_\bftheta]_{ij} \, .
\end{equation}
%

\noindent\textit{(vi) Connectivity validation of the random graph}.
For each fixed $\bftheta$, once the construction of the binary sparsity edge-mask $[M^\psp_\bftheta]$,
{\color{black} defined by Eq.~\eqref{eq4.12},}
is achieved, a test is applied to ensure that $[M^\psp_\bftheta]$ is a symmetric, zero-diagonal adjacency matrix corresponding to an undirected simple graph in which all nodes are connected through at least one path—meaning the graph is connected and contains no isolated nodes or disconnected components.

\section{Random weight construction for sparsified connectivity in the geometric-probabilistic neural
graph given the hyperparameter}
\label{Section5}
%
\noindent\textit{(i) Random weight construction}.
As previously explained, the supervised input-output training data is used to identify the hyperparameter $\bftheta$ via a random sparse weight matrix that depends on $\bftheta$.
Let the real-valued random field $\{ U^m(\bfx;\bftheta), \bfx \in \curS_h \}$ be the Gaussian random field, indexed by the manifold $\curS_h$, as defined by Eq.~\eqref{eq2.18}.
{\color{black} Following Eq.~\eqref{eq4.10},}
let $S^\bftheta_j$ be the Gaussian real-valued random variable defined by
\begin{equation} \label{eq5.1}
  S^\bftheta_j = U^m(\bfx_\bftheta^j;\bftheta)  \quad , \quad j \in \{1, \ldots, N\} \, ,
\end{equation}
where $\{\bfx^j_\bftheta , j=1,\ldots,N\}$ is defined by Eq.~\eqref{eq3.4}.
Let $\sigma_{\bfS_\bftheta}^2 = N^{-1}\operatorname{tr}[C_{\bfS_\bftheta}]$, where $[C_{\bfS_\bftheta}]$ is the covariance matrix of the $\RR^N$-valued random variable $\bfS_\bftheta = (S_{\bftheta,1}, \ldots, S_{\bftheta,N})$.
We define the random  matrix $[\bfW_\bftheta]$ with values in $\MM_N$, which combines both deterministic geometric  and  random field-based similarities,
\begin{equation} \label{eq5.2}
  [\bfW_\bftheta]_{ij} = [w^g_\bftheta]_{ij}\,
    \exp\!\left(-\frac{(\Delta S^{\bftheta}_{ij})^2}{2\zeta_s^2\,\sigma_{\bfS_\bftheta}^2}\right), \quad [\bfW_\bftheta]_{ii} = 0 \, ,
\end{equation}
where $[w^g_\bftheta]_{ij}$ is
{\color{black} defined by Eq.~\eqref{eq4.10} and}
$\Delta S^\bftheta_{ij} = S^\bftheta_i - S^\bftheta_j$, with the following interpretations:
\begin{itemize}
\item The random difference $\Delta S_{\bftheta,ij}$ is defined as $\Delta S_{\bftheta,ij} = S_{\bftheta,i} - S_{\bftheta,j}$, where $S_{\bftheta,i}$ and $S_{\bftheta,j}$ are the values of the
    {\color{black} random field $U^m(\cdot\, ;\bftheta)$}
     neurons $i$ and $j$, as defined in Eq.~\eqref{eq5.1}.
\item The parameter $\zeta_s > 0$ controls the relative sensitivity of the random edge weights to differences
     {\color{black} $\Delta S_{\bftheta,ij}$ normalized}
      by $\sigma_{\bfS_\bftheta}$. This normalization ensures that the influence of dissimilarity is invariant to changes in the field scale. Larger values of $\zeta_s$ result in weaker penalization of normalized field differences, while smaller values enforce stricter similarity for strong connections.
\end{itemize}
As previously explained, the first exponential term in Eq.~\eqref{eq5.2} encodes adaptive geometric proximity, and ensures that neurons closer in geodesic distance and located in denser regions are assigned stronger weights. The second exponential term imposes additional constraints by promoting strong connections between neurons with similar latent field values.
To enforce sparsity while preserving the most significant connections, we apply the binary mask matrix $[M^\psp_\bftheta] \in \MM_N$, {\color{black} defined in Eq.~\eqref{eq4.12}.}
The final sparse random matrix is then given by the Hadamard (elementwise) product,
\begin{equation} \label{eq5.3}
  [\bfW^\psp_\bftheta]_{ij} = [M^\psp_\bftheta]_{ij} \, [\bfW_\bftheta]_{ij} \quad ,\quad i,j\in\{1,\ldots, N\}\, .
\end{equation}
The resulting sparse random matrix $[\bfW^\psp_\bftheta]$  with values in $\MM_N$, encodes the geometry- and field-informed stochastic connectivity of the neural graph at hyperparameter value $\bftheta$.\\

\noindent\textit{Remark 3}. It should be noted that the weights  $[\bfW_\bftheta]_{ij}$ defined by Eq.~\eqref{eq5.2} are  positive-valued random variables. Note that, even with positive weights and real biases, a feedforward ANN with nonlinear activations functions (such as ReLU, sigmoid, etc.), can still act as a  universal approximator (see, for instance, \cite{Sonoda2017}). However, we do not consider a classical feedforward ANN but a parameter-conditionned stochastic Graph Neural Network  (stochastic GNN). \\

\noindent\textit{(ii) Generation of independent realizations of the random weight matrix $[\bfW^\psp_\bftheta]$}.
For a fixed value of $\bftheta$, $n_\psim$ independent realizations $\{[\bfw^{\psp,\ell}_\bftheta], \ell=1,\ldots , n_\psim\}$ of $[\bfW^\psp_\bftheta]$  can be generated using $n_\psim$ independent realizations $s^{\bftheta,\ell}_i$ of the Gaussian random variable $S^{\bftheta}_i $, such that
\begin{equation}\label{eq5.4}
s^{\bftheta,\ell}_i = U^m(\bfx_\bftheta^j; \bftheta;\omega_\ell) \quad , \quad \omega_\ell\in\Omega \, ,
\end{equation}
where $\{U^m(\bfx\, ; \bftheta;\omega_\ell),\bfx\in\curS_h\}$ is the realization $\omega_\ell$ of the random field
$\{U^m(\bfx\, ; \bftheta),\bfx\in\curS_h\}$ defined by Eq.~\eqref{eq2.18} with Eq.~\eqref{eq2.19}, which are rewritten as
\begin{equation}\label{eq5.5}
U^m(\bfx; \bftheta;\omega_\ell)=  \langle \, \bfpsi^m(\bfx;\bftheta) , \bfeta^\ell)\, \rangle \quad , \quad  \forall\, \bfx\in\curS_h\, ,
\end{equation}
where $\{\bfeta^\ell = \bfH(\omega_\ell), \ell=1,\ldots,n_\psim\}$ are $n_\psim$ independent realizations of the normalized Gaussian $\RR^m$-valued random variable, and where
\begin{equation} \label{eq5.6}
\bfpsi^m(\bfx;\bftheta) = [\lambda^m(\bftheta)]^{1/2} [\Phi^m(\bftheta)]^T\, \bfvarphi(\bfx)\, .
\end{equation}
%

\noindent\textit{(iii) Final definition of hyperparameter $\bftheta$}.
We can now define the hyperparameter $\bftheta$ and it admissible set $\curC_\bftheta$,
{\color{black}
\begin{equation}\label{eq5.7}
\bftheta = (h_1\, ,h_2\, ,\zeta_s , \bfbeta^{\,(1)}_{n_h}\!, \bfbeta^{\,(2)}_{n_h}) \in \curC_\bftheta\subset \RR^{3 +2 n_h}\quad, \quad
         \curC_\bftheta =\{ h_1 > 0 \, , h_2 > 0 \, , \zeta_s > 0 \, , \bfbeta^{\,(1)}_{n_h} \in \curC_{\adp,1} \, ,
         \bfbeta^{\,(2)}_{n_h} \in \curC_{\adp,2}\}\, ,
\end{equation}
}
where for $k\in\{1,2\}$, the admissible set $\curC_{\adp,k} \subset \RR^{n_h}$ is defined (see Eq.~\eqref{eq2.5}) by
{\color{black}
\begin{equation}\label{eq5.8}
\curC_{\adp,k} = \{ \bfbeta^{\,(k)}_{n_h} \in\RR^{n_h} \, , \,
0 < c_\star^{(k)} \,\leq \, h^{(k)}_{n_h}(\bfx) = h_k + \sum_{j=1}^{n_h} \beta_j^{\,(k)}\,\hh^{(k,j)}(\bfx) \, \leq \,  c^{(k)\star} \, , \, \forall \, \bfx \in \curS
   \}\, .
\end{equation}
}
\section{Low-rank representation of neuron biases in large neural networks}
\label{Section6}

When constructing the probabilistic model of the neural network, the bias vector of the neurons may also be modeled. Unlike the weight parameters are random, the bias vector acts only as an offset and does not exhibit any localized or spatially structured behavior. For a large number $N$ of neurons, the bias vector $\bfb\in\RR^N$ can be reduced in dimensionality by expressing it as
\begin{equation}\label{eq6.1}
\bfb = \sum_{j=1}^{n_b} \beta_j \bfh^j \quad , \quad n_b \ll N\, ,
\end{equation}
using $n_b$ algebraically independent vectors  $\{\bfh^j\}_{j=1}^{n_b}$ in $\RR^{N}$, where $\{\beta_j\}_{j=1}^{n_b}$ are hyperparameters.  However, in many applications, no low-rank reduction is applied. There are several strategies to construct the basis vectors $\{\bfh^j\}_j$, either by using a predefined basis, typically fixed, orthonormal, and not learned, or by learning the basis during training. Nevertheless, these classical aspects are not the focus of this work, and we refer the reader, for instance, to \cite{Xu2019,Idelbayev2020,Huh2021,Galanti2023,Shen2023,Park2024} for further details.
\section{Implementation of the Stochastic Graph Neural Network}
\label{Section7}
For each value of the hyperparameter vector $\bftheta$, a realization of the random neural architecture is generated, as explained at the beginning of Section~\ref{Section4}, yielding a new configuration of the network connectivity and weights. This allows the model to explore a diverse set over possible neural architectures.\\

\noindent \textit{(i) Computation of neural activations in the general graph neural network}.
For $\bftheta$ fixed in $\curC_\bftheta$ (see Eq.\eqref{eq5.7}), the general neural network that we constructed in Sections~\ref{Section3} to \ref{Section6} consists of $N$ neurons, labeled from $1$ to $N$, interconnected by a directed graph. For each given  $\bftheta$, this network does not follow a layered or feedforward architecture. Instead, it is characterized by the following components.
\begin{itemize}
   \item As explained in Section~\ref{Section4.1}, a subset $\curJ_\bftheta^\pin = \{i^{\,\bftheta}_1,\ldots,i^{\,\bftheta}_{n_\ppin}\}$ of $n_\pin$ input neurons and a subset $\curJ_\bftheta^\pout = \{o^{\,\bftheta}_1,\ldots,o^{\,\bftheta}_{n_\ppout}\}$ of $n_\pout$ output neurons are selected. These subsets depend on $\bftheta$ and satisfy $\curJ_\bftheta^\pin \cap \curJ_\bftheta^\pout = \emptyset$.
    \item The binary connectivity matrix $[M^\psp_\bftheta]\in \{0,1\}^{N \times N}$,
     {\color{black} defined in Eq.~\eqref{eq4.12},}
      where $[M^\psp_\bftheta]_{ij} = 1$ indicates a directed connection from neuron $j$ to neuron $i$.
    \item The $\MM_N$-valued random weight matrix $[\bfW_\bftheta]$, defined by Eq.~\eqref{eq5.2}, encodes  full connectivity among all $N$ neurons, with entries given by positive-valued random variables. A total of $n_\psim$ independent realizations
        $\{[\bfw^\ell_\bftheta], \ell=1,\ldots , n_\psim\}$ are generated, as described  in Section~\ref{Section5}-(ii). From Eqs.~\eqref{eq4.10} and \eqref{eq5.2}, each $[\bfw^\ell_\bftheta]\in\MM_N$ is symmetric, with zero diagonal $[\bfw^\ell_\bftheta]_{ii} = 0$, and strictly positive off-diagonals $0 < [\bfw^\ell_\bftheta]_{ij} < 1$ for $i \neq j$.
    \item The sparse random weights matrix $[\bfW^\psp_\bftheta]$, associated with the binary connectivity matrix, is defined as     $[\bfW^\psp_\bftheta]_{ij} = [M^\psp_\bftheta]_{ij} \, [\bfW_\bftheta]_{ij}$. The $n_\psim$ independent realizations
         $\{[\bfw^{\psp,\, \ell}_\bftheta], \ell=1,\ldots , n_\psim\}$ are then computed by $[\bfw^{\psp,\, \ell}_\bftheta]_{ij} = [M^\psp_\bftheta]_{ij} \, [\bfw^\ell_\bftheta]_{ij}$.
    \item The deterministic bias vector $\bfb = \sum_{j=1}^{n_b} \beta_j \bfh^j$, defined by Eq.~\eqref{eq6.1}), provides  a low-rank representation of the bias, where  $\bfbeta=(\beta_1,\ldots,\beta_{n_b})$ is a hyperparameter vector. However, in many applications, $n_b$ is simply taken to be $N$, thus foregoing dimensionality reduction.
    \item An activation function $f: \RR \to ]-1 , 1[$, applied element-wise, is chosen as $f(x) = \tanh(x)$ whose derivative at origin is $1$.
        {\color{black} Note that}
         $f(x) = x/(1+\vert x\vert)$ could also be used. However, $f(x) = \max(0,x)$ is not suitable in the present formulation, as the gradient is computed numerically using central finite differences. More importantly, we require the range of $f$ to be a bounded set, see paragraph~(iv)).
\end{itemize}
Using the notations introduced at the end of Section~\ref{Section4.1} for the input-output amplitudes, for fixed $\bftheta\in\curC_\bftheta$, given an input vector $\xx = (\xx_1,\ldots,\xx_\pin)\in\RR^{n_\ppin}$ applied at the neurons indexed by $\curJ_\bftheta^\pin$, and a given realization $[\bfw^{\psp,\, \ell}_\bftheta]$ of the sparse weight matrix $[\bfW^\psp_\bftheta]$, we aim to compute the corresponding
{\color{black} realization $\yy^\ell(\bftheta_t \mid \xx) \in \RR^{n_\ppout}$ of}
the random output
{\color{black} vector $\YY(\bftheta_t \, \vert \, \xx)$ at}
the neurons indexed by $\curJ_\bftheta^\pout$. This output consists of the activation values at the output neurons, after the influence of the input has propagated through the network. Unlike in feedforward neural networks, this propagation is defined implicitly due to the recurrent and potentially cyclic structure of the graph.\\

\noindent \textit{(ii) Random neural network equation}.
The notation $\bfz = f(\bfxi)$, where $\bfz$ and $\bfxi$ belong to  $\RR^N$, means that $z_i = f(\xi_i)$ for $i=1,\ldots, N$, that is,  the function $f$ is applied element-wise.
{\color{black} Let $\bfA_{\bftheta_t} =(A_{\bftheta_t,1},\ldots , A_{\bftheta_t,N})$ denote the full $\RR^N$-valued random activation vector of all neurons. Let $\bfa^\ell_{\bftheta_t} =(a^\ell_{\bftheta_t,1},\ldots , a^\ell_{\bftheta_t,N}) \in \RR^N$ represent any realization of $\bfA_{\bftheta_t}$.
}
This random activation vector collects all the individual neuron activations and depends on the input $\bfx$ (not explicitly shown),
{\color{black} which must be a realization of the probability measure $P_\XX(d\xx)$).
Each component $A_{\bftheta_t ,i}$
}
represents the random activation of neuron $i$, for $i=1,\ldots, N$. The random activations must satisfy the following constrained random equation,
{\color{black}
\begin{equation} \label{eq7.1}
    \bfA_{\bftheta_t} = \bff\,( \,[\bfW^{\psp}_\bftheta] \,\bfA_{\bftheta_t} \, +\, \bfb\, ) \, ,
\end{equation}
}
subject to the constraint induced by the prescribed input value $\xx\in\RR^{n_\ppin}$ applied to the deterministic input neurons.
{\color{black}
It should be noted that the random solution $\bfA_{\bftheta_t}$ of Eq.~\eqref{eq7.1} effectively depends on $\bftheta_t= (\bftheta,\bfbeta)$, because the vector $\bfb$ in the equation depends on $\bfbeta$ (see Eq.~\eqref{eq6.1}).
}
\\

\noindent \textit{(iii) Reformulation for solving the random neural network equation}.
We partition the $\RR^N$-valued random vector $\bfA_{\bftheta_t}$ into two disjoint random vectors: an $\RR^{\hat n}$-valued random vector
 $\hat \bfA_{\bftheta_t}$ and the  deterministic input $\xx\in\RR^{n_\ppin}$, which is  independent of $\bftheta_t$, such that  $\hat n + n_\pin = N$, and
{\color{black}
\begin{equation} \label{eq7.2}
  \hat \bfA_{\bftheta_t}  = [\hat O_\bftheta]\, \bfA_{\bftheta_t} \quad , \quad \xx = [O^{\,\pin}_\bftheta]\, \bfA_{\bftheta_t} \, ,
\end{equation}
}
where $[\hat O_\bftheta]\in\MM_{\hat n,N}$ and $[O^{\,\pin}_\bftheta]\in\MM_{n_\ppin , N}$ are binary matrices (containing only $0$ and $1$) that depend on $\bftheta$, and where the indices corresponding to the components of $\hat \bfA_{\bftheta_t}$ and $\xx$ form a complementary partition of the indices of $\bfA_{\bftheta_t}$. Each column of $[\hat O_\bftheta]$ and  $[O^{\,\pin}_\bftheta]$ exactly contains one $1$ (and the rest of the entries is $0$).
This partition is constructed using the set $\widehat\curJ_\bftheta$ of the $\hat n = N - n_\pin$ free (non-input) neurons, defined by
$\widehat\curJ_\bftheta = \{1,\ldots, N\}\,  \backslash\,  \curJ_\bftheta^\pin  = \{j_1^{\,\bftheta},\ldots , j_{\hat n}^{\,\bftheta} \}$, where the set $\curJ_\bftheta^\pin$ is defined in Section~\ref{Section4.1}.
Since the matrix $[\PP_\bftheta]\in\MM_N$,
{\color{black} defined by $[\PP_\bftheta] \, \bfA_{\bftheta_t} = (\hat\bfA_{\bftheta_t} ,\xx) = ([\hat O_\bftheta]\, \bfA_{\bftheta_t}  , [O^{\,\pin}_\bftheta]\, \bfA_{\bftheta_t})\in \RR^N$,}
is a permutation matrix, it follows that $[\PP_\bftheta]$ is orthogonal, that is, $[\PP_\bftheta]^T\, [\PP_\bftheta] =[I_N]$. Consequently,  we have $[\PP_\bftheta]^T = [\, [\hat O_\bftheta]^T \,  [O_\bftheta^{\,\pin}]^T\,]$ and
{\color{black} $\bfA_{\bftheta_t} = [\PP_\bftheta]^T \, ( \hat\bfA_{\bftheta_t}  , \xx)$}, which proves that
{\color{black}
\begin{equation}\label{eq7.3}
\bfA_{\bftheta_t}  = [\hat O_\bftheta]^T\, \hat \bfA_{\bftheta_t}  + [O^{\,\pin}_\bftheta]^T\, \xx  \, .
\end{equation}
}
Using Eqs.~\eqref{eq7.2} and \eqref{eq7.3}, the constrained Eq.~\eqref{eq7.1} leads us to  the following random neural network equation,
{\color{black}
\begin{equation}\label{eq7.4}
\hat\bfA_{\bftheta_t}  = \hat\bff ( \, [\hat\bfW_\bftheta]\, \hat\bfA_{\bftheta_t}  + \hat \bb_{\bftheta_t} ) \, ,
\end{equation}
}
where $\hat\bff$ is defined analogously to $\bff$, but acting on $\RR^{\hat n}$,
$[\hat\bfW_\bftheta]$ is the random matrix with values in $\MM_{\hat n}$, and $\hat\bb_{\bftheta_t}$ is an $\RR^{\hat n}$-valued random variable defined as,
\begin{equation}\label{eq7.5}
[\hat\bfW_\bftheta] = [\hat O_\bftheta]\, [\bfW^{\psp}_\bftheta]\, [\hat O_\bftheta]^T \quad , \quad
\hat \bb_{\bftheta_t} = [\hat O_\bftheta]\,[\bfW^{\psp}_\bftheta]\,[O^{\,\pin}_\bftheta]^T\, \xx + \hat \bfb_\bftheta \quad , \quad
\hat \bfb_\bftheta = [\hat h_\bftheta]\, \bfbeta
\quad , \quad [\hat h_\bftheta] = [\hat O_\bftheta]\,[h] \, ,
\end{equation}
where the hyperparameter $\bfbeta\in\RR^{n_b}$ (a vector-valued
{\color{black} component of $\bftheta_t$) and}
$[h]\in\MM_{N,\, n_b}$ are defined in Eq.~\eqref{eq6.1}, which expresses the low-rank representation   $\bfb = [h]\,\bfbeta$.
The random
{\color{black} output $\YY(\bftheta_t \, \vert \, \xx)$ of}
the random neural network is the $\RR^{n_\ppout}$-valued random variable given by
{\color{black}
\begin{equation} \label{eq7.6}
\YY(\bftheta_t \, \vert \, \xx) = [O_\bftheta^{\,\pout}]\, \hat\bfA_{\bftheta_t} \quad , \quad [O_\bftheta^{\,\pout}]\in\MM_{n_\ppout,\hat n} \, ,
\end{equation}
}
where each column of the $\bftheta$-dependent deterministic matrix $[O_\bftheta^\pout]$  exactly contains one $1$ (and other entries are $0$).\\

\noindent\textit{(iv) Existence, uniqueness, and numerical solution}.
Each realization $\ell$ of Eq.~\eqref{eq7.4} can be written as
{\color{black}
\begin{equation}\label{eq7.7}
\hat\bfa^\ell_{\bftheta_t}  = \hat\bff ( \, [\hat\bfw^\ell_\bftheta]\, \hat\bfa^\ell_{\bftheta_t}  + \hat \bb^\ell_{\bftheta_t} ) \, .
\end{equation}
}
The nonlinear system defined by Eq.~\eqref{eq7.7} is solved using a fixed-point iteration algorithm with under-relaxation to ensure convergence. An alternative approach could be  the Newton–Raphson method, which would require computation of the Jacobian.
This system can be reformulated as the fixed‐point problem
\begin{equation}\label{eq.7.8}
\hat\bfa = \bfF (\hat\bfa) \quad , \quad \bfF:\RR^{\hat n}\to\RR^{\hat n} \, .
\end{equation}
Here, the indices $\ell$ and $\bftheta_t$ have been suppressed
{\color{black} in $\hat\bfa^\ell_{\bftheta_t}$}
 and $\bfF$, and we define
\begin{equation}\label{eq7.9}
\bfF(\hat\bfa)= \hat\bff\,(\, [\hat\bfw^{\ell}_\bftheta] \, \hat\bfa \, + \, \hat\bb^\ell_{\bftheta_t}\,  )
\quad , \quad \hat\bfa \in\RR^{\hat n}\, .
\end{equation}
It is easy to verify that the matrix $[\hat\bfw^\ell_\bftheta]\in\MM_{\hat n}$ is  symmetric, has zero diagonal entries ($[\hat\bfw^\ell_\bftheta]_{ii} = 0$), and strictly positive off-diagonal entries ($0 < [\hat\bfw^\ell_\bftheta]_{ij} < 1$ for $i \neq j$).
The analysis of Eqs.~\eqref{eq7.4} and \eqref{eq7.5} shows that the parameter $n_b$ must satisfy $n_b \leq \hat n = N - n_\pin$.\\

\noindent\textit{(iv-1) Existence of at least one solution}.
Let $\curD = [-1,1]^{\hat n} \subset \RR^{\hat n}$.  For any $\hat\bfa\in\RR^{\hat n}$, each component of $\bfF(\hat\bfa)$ lies in  the open interval  $]-1,1[\subset [-1,1]$, so $\bfF$ defines a continuous map $F: \curD \to \curD$.  Since $\curD$ is nonempty, compact, and convex, the Brouwer fixed‐point theorem \cite{Deimling1980} guarantees the existence of
{\color{black} at least one $\hat\bfa^\ell_{\bftheta_t}\in \curD$ with
 $\bfF(\hat\bfa^\ell_{\bftheta_t})=\hat\bfa^\ell_{\bftheta_t}$}.\\

\noindent\textit{(iv-2) Fixed‐point iteration (under‐relaxation) for constructing a solution}.
Although uniqueness  does not generally hold, a scalable iterative scheme can be used. Given an initial guess  $\hat\bfa^{(0)}\in\RR^{\hat n}$ and a relaxation parameter $\alpha\in ]0,1]$,
iterate for $i=0,1,2,\dots$,
\begin{equation}\label{eq7.10}
\hat\bfa^{(i+1)} = (1-\alpha)\,\hat\bfa^{(i)} + \alpha \, \bfF(\hat\bfa^{(i)}) \, ,
\end{equation}
and stop then when $\Vert \hat\bfa^{(i+1)}-\hat\bfa^{(i)}\bigr \Vert < \varepsilon$
for a prescribed tolerance $\varepsilon > 0$.  Upon convergence, set
$\hat\bfa^\ell_\bftheta\approx\hat\bfa^{(i+1)}$, form
{\color{black} $\hat\bfa^\ell_\bftheta =(\hat a^\ell_{\bftheta,1},\ldots , \hat a^\ell_{\bftheta,\hat n})$},
and extract
{\color{black} the output  $\yy^\ell(\bftheta_t \, \vert \, \bfx)$ via}
 Eqs.~\eqref{eq7.6} and \eqref{eq7.7}.
In practice, for a relatively small number of neurons (typically a few hundred), numerical experiments indicate that the fixed-point iteration is efficient in terms of computational cost. This observation is limited to the specific test cases and computational setup considered, and should not be generalized to larger network sizes or different hardware configurations.

\section{Training the vector valued hyperparameter from Network Supervision}
\label{Section8}
Using the low-rank representation of the bias vector, which involves the hyperparameter $\bfbeta \in \RR^{n_b}$ with $n_b \ll N$ (see Section~\ref{Section6}), the total hyperparameter $\bftheta_t$ and its admissible set are defined as
\begin{equation} \label{eq8.1}
\bftheta_t = (\bftheta, \bfbeta) \in \curC_{\bftheta_t} = \curC_\bftheta \times \RR^{n_b} \, ,
\end{equation}
where $\bftheta$ and $\curC_\bftheta$ are defined in Eq.~\eqref{eq5.7}.\\

\noindent\textit{(i) Fixing a realization of the random germs generating the Gaussian latent random field and the inhomogeneous Poisson process}.
As we explained in Section~\ref{Section1}, training the neural network consists of estimating an optimal value of the hyperparameter $\bftheta_t = (\bftheta,\bfbeta)$ by minimizing the cost function $\curJ(\bftheta_t)$, defined as the negative log-likelihood function and evaluated on the input-outputs of the training dataset. For the optimization problem, based on the projected gradient descent with Adam updates and trial-based initialization, to be well-posed, we draw a single realization of each of the random vectors $\bfH$ (see Eq.~\eqref{eq2.18}) and $\bfcurU_\pelem$, $\bfcurU_a$, $\bfcurU_b$, and $\bfcurU_\pPoisson$ before starting the optimization algorithm. Thus, these realizations are not modified
during optimization and are independent of $\bftheta$ (and also of $\bfbeta$). This does not mean that the architecture of the neural network is deterministic; it remains random and changes for each value of $\bftheta$. Indeed, when $\bftheta$ is modified, the spectrum of
the covariance matrix of $\bfU^m$ changes, due to changes of
{\color{black} $h_1$, $h_2$, $\bfbeta^{\,(1)}_{n_h}$, and $\bfbeta^{\,(2)}_{n_h}$}
within $\bftheta$ (see Eq.~\eqref{eq3.1}), and the Poisson process is also modified, which implies a change in the location of the $N$ neurons. Moreover, Section~\ref{Section4} shows that the location of the inputs-outputs on the one hand, and the binary sparsity mask $[M_\bftheta^\psp]$ defined by Eq.~\eqref{eq4.12} and the edge weight matrix $[w_\bftheta^g]$
{\color{black} defined by Eq.~\eqref{eq4.10}},
on the other part, both depend on $\bftheta$.
However, as we explained in Section~\ref{Section5}-(ii), for each value of $\bftheta$ (and in particular of $\zeta_s$), we generate $n_\psim$ realizations $\{\bfeta^\ell,\ell=1,\ldots , n_\psim\}$ of $\bfH$. This allows the realizations $s_i^{\bftheta,\ell}$ of $S_i^\bftheta$ (see Eqs.~\eqref{eq5.4} and \eqref{eq5.5}) to be generated, and thus the  realizations of the random matrix $[\bfW_\bftheta^\psp]$ (see Eqs.~\eqref{eq5.2} and \eqref{eq5.3}) can be deduced.  These realizations are used to estimate the conditional probability density function that appears in the loss function.\\

\noindent\textit{(ii) Definition of the cost function}.
In the random neural architecture with random weights and deterministic bias, that we have proposed, the model output is an $\RR^{n_\ppout}$-valued random variable $\YY(\bftheta_t \vert \xx)$ depending on a given $\RR^{n_\ppout}$-valued input $\xx$  and the  hyperparameter $\bftheta_t$.  For a fixed hyperparameter value $\bftheta_t\in\curC_{\bftheta_t}$ and a given input $\xx$, the random network produces $n_\psim$ independent realizations $\{\yy^\ell(\bftheta_t \vert \xx), \ell=1,\ldots, n_\psim\}$ of $\YY(\bftheta_t \vert\xx)$.
Given a training dataset $\{(\xx^i, \yy^i), i =1,\ldots, n_d\}$, the goal is to estimate the optimal hyperparameter $\bftheta_t$ by fitting the probability distribution of the model outputs $\YY(\bftheta_t \, \vert \,\xx^i)$ to the observed outputs $\yy^i$.
A statistical efficient and widely used cost function in this setting is the negative log-likelihood (NLL), defined as,
\begin{equation}\label{eq8.2}
\curJ(\bftheta_t) = -\sum_{i=1}^{n_d} \log p^\ANNpp(\yy^i \,\vert \,\xx^i; \bftheta_t) \quad , \quad \bftheta_t\in\curC_{\bftheta_t} \, ,
\end{equation}
where  $\yy\mapsto p^\ANNpp(\yy \,\vert \,\xx; \bftheta_t)$  denotes the conditional probability density function of $\YY(\bftheta_t \,\vert \,\xx)$.
Since the network random output $\YY(\bftheta_t \,\vert\,\xx)$ is only available through sampling, its conditional probability density function can be approximated using Gaussian Kernel Density Estimation (GKDE) \cite{Bowman1997,Gentle2009,Givens2013} as
\begin{equation}\label{eq8.3}
p^\ANNpp(\yy \,\vert \,\xx; \bftheta_t) = {\color{black} \frac{1}{n_\psim}\sum_{\ell=1}^{n_\psim} }
                     \frac{1}{ (\sqrt{2\pi}\, s_\pSB)^{\,n_\ppout}\,\sigma_{\yy,1}(\bftheta_t \vert \xx)\times\ldots\times\sigma_{\yy,n_\ppout}(\bftheta_t \vert \xx) } \,
                     \exp \Bigg ( -\frac{1}{2\, s_\pSB^2} \sum_{k=1}^{n_\ppout}
                       \Bigg (\frac{ \yy_k - \yy_k^\ell(\bftheta_t \vert \xx) } { \sigma_{\yy,k}(\bftheta_t \vert \xx)} \Bigg )^2 \Bigg ) \, ,
\end{equation}
where $s_\pSB = ( 4 /( n_\psim \,(2 + n_\ppout) )  )^{ 1/(4+n_\ppout)}$ and where $\sigma_{\yy,k}(\bftheta_t \vert \xx)$ is the empirical standard deviation estimated from $\{\yy^\ell(\bftheta_t \vert \xx), \ell=1,\ldots, n_\psim\}$.
Evaluating the kernel involves computing exponential functions, which can be computationally intensive. Nevertheless, GKDE remains an accurate and widely used method in practice. Several techniques have been developed to reduce its computational cost; see, for example, \cite{Silverman1986,Scott2015} for standard references on kernel density estimation theory and practice, and \cite{Yang2003,Gray2003,Raykar2010,Obrien2016} for acceleration methods and computational considerations. These acceleration methods preserve the statistical robustness of GKDE while significantly improving its computational efficiency. One such approximation involves approximating Eq.~\eqref{eq8.3} using the product of marginals, that is, the product of univariate KDEs for each component, with each marginal using its own optimal 1D Silverman bandwidth.\\

\noindent\textit{(iii) Optimization problem}.
The estimation of an optimal value  $\bftheta_t^\optp =(\bftheta^\optp,\bfbeta^\optp)$  of the hyperparameter $\bftheta_t=(\bftheta,\bfbeta)$ is obtained by solving the optimization problem,
\begin{equation}\label{eq8.4}
\bftheta_t^\optp = \arg \min_{\bftheta_t\in \curC_{\theta_t}} \curJ(\bftheta_t) \, ,
\end{equation}
where $\curJ(\bftheta_t)$ is defined in Eq.~\eqref{eq8.2} and the admissible set $\curC_{\theta_t}$ in Eq.~\eqref{eq5.7}.
This nonconvex optimization problem cannot  be solved directly using projected gradient descent, as the optimal point generally corresponds to a local minimum that depends on the initial conditions. Consequently, hybrid optimization strategies are necessary, such as  multi-start gradient methods, basin-hopping algorithms, or Bayesian optimization. In this work, we adopt a projected gradient descent scheme with Adam updates and trial-based initialization \cite{Kingma2015,Conn2009}.
The main steps of the proposed algorithm are as follows:\\

\noindent\textit{Step 1: Definition of a grid-based admissible set for the hyperparameters $h_1$, $h_2$, and $\zeta_s$}.
To prepare the trial-based method for global optimization, we introduce the first three components of $\bftheta$, namely   $\bftheta_\ptrial = (h_1, h_2, \zeta_s)$, and define its admissible set $\curC_\ptrial$ as
\begin{equation} \label{eq8.5}
\curC_\ptrial = \{\bftheta_\ptrial\in [c_1, d_1]\times [c_2, d_2] \times [c_3, d_3]\subset \RR^3 \,\,\, \vert \,\,\,  0 < c_i < d_i < +\infty  \, , \, i=1,2,3\} \, .
\end{equation}
The constants $c_i$ and $d_i$ are prescribed.  We then define a grid $\curC_\pgrid\subset \curC_\ptrial$ consisting of $n_\ppgrid$ nodes $\bftheta_\ptrial^{\, j} = (h_1^j, h_2^j, \zeta_s^{\, j})$ for $j =1,\ldots , n_\pgrid$.\\

\noindent\textit{Step 2: Trial-based method for global optimization, used to initialize local optimization with projected gradient descent and Adam updates}. In this trial-based step, we set  $n_h =0$, so that  $\bfbeta^{\,(1)}_{n_h}$ and $\bfbeta^{\,(2)}_{n_h}$ are not involved. Therefore, this step estimates an optimal value $\bftheta_\ptrial^\optp$ for the trial hyperparameter $\bftheta_\ptrial = (h_1,h_2,\zeta_s)$.
For each point $\bftheta_\ptrial^{\, j} \in \curC_\pgrid$, we estimate a corresponding value $\bfbeta^{\, j} \in \RR^{n_b}$, as detailed below. The trial-based global optimization problem then consists of solving the discrete minimization problem,
\begin{equation}\label{eq8.6}
\bftheta_\ptrial^\optp =  \arg \min_{\bftheta_\ppptrial\in \curC_\ppgrid } \curJ(\bftheta_\ptrial) \quad , \quad
  \bftheta_\ptrial^\optp  = ( h^\optp_{1,\ptrial} \, , h^\optp_{2,\ptrial} \, , \zeta^\optp_{s,\pptrial})   \, .
\end{equation}
The value of $\bfbeta$ corresponding to the optimal value $\bftheta_\ptrial^\optp$ is denoted by $\bfbeta_{\ptrial}^\optp$.\\

\noindent\textit{Estimation of $\bfbeta^{\, j}$ for $\bftheta^{\, j}\in \curC_\pgrid$}.
Let $\{\xx^i,i=1,\ldots, n_d\}$ denote the set of input vectors from the training dataset. For notational simplicity, we write $\bftheta^{\, j}$ instead of $\bftheta_\ptrial^{\, j}$.
For each $i\in\{1,\ldots, n_d\}$, we consider the random nonlinear neural equation (see Eqs.~\eqref{eq7.4} and \eqref{eq7.5}),
\begin{equation} \label{eq8.7}
\hat\bfA^{i,j} = \hat\bff (\, [\hat\bfW^j]\, \hat\bfA^{i,j} + [\hat\bfW^{\pin,j}] \, \xx^i + \hat\bfb^j)
\quad , \quad \bftheta^{\, j}\in \curC_\pgrid
\quad , \quad j \in\{1,\ldots , n_\pgrid \}\, ,
\end{equation}
where  $[\hat\bfW^j]= [ \hat O_{ \bftheta^{\, j} } ] \,[\bfW^{\psp}_{ \bftheta^{\, j} }]\, [\hat O_{\bftheta^{\, j}}]^T$,
$[\hat\bfW^{\pin,j}] = [\hat O_{\bftheta^{\, j}}]\,[\bfW^{\psp}_{\bftheta^{\, j}}]\,[O^\pin_{\bftheta^{\, j}}]^T$,
and
{\color{black} $\hat\bfb^j =[\hat O_{\bftheta^{\, j}}]\, [h]\, \bfbeta^j$}.
For each $i$ and $j$, we first solve Eq.~\eqref{eq8.7} for $\hat\bfA^{i,j}$ by temporarily setting $\hat\bfb^j = \bfzero_{\hat n}$
and  computing $n_\psim$ independent realizations $\{ [\bfw^{\psp,\ell}_{ \bftheta^{\, j} } ] , \ell =1,\ldots, n_\psim\}$ of the random matrix $[\bfW^{\psp}_{ \bftheta^{\,j} }]$. This yields $n_\psim$ independent realizations
$\{ \hat\bfa^{i,j,\ell}_\star , \ell=1,\ldots, n_\psim \}$ of the random vector $\hat\bfA^{i,j}$, such that
\begin{equation} \label{eq8.8}
\hat\bfa^{i,j,\ell}_\star = \hat\bff (\, [\hat\bfw^{j,\ell}]\, \hat\bfa^{i,j,\ell}_\star + [\hat\bfw^{\pin,j,\ell}] \, \xx^i) \, .
\end{equation}
{\color{black} Since the vector $\hat\bfb^j \in\RR^{\hat n}$, computed from Eq.~\eqref{eq8.10}, can be expressed as $\hat\bfb^j = [\hat h_{\bftheta^{\, j}} ]\, \bfbeta^j$
where $[\hat h_{\bftheta^{\, j}} ] = [\hat O_{\bftheta^{\, j}}]\, [h]$},
we now seek the vector $\bfbeta^j\in\RR^{n_b}$ for each $j\in\{1,\ldots , n_\pgrid\}$ such that the equation
\begin{equation} \label{eq8.9}
\hat\bfa^{i,j,\ell}_\star = \hat\bff (\, [\hat\bfw^{j,\ell}]\, \hat\bfa^{i,j,\ell}_\star + [\hat\bfw^{\pin,j,\ell}] \, \xx^i +
[\hat h_{\bftheta^{\, j}} ]\, \bfbeta^{\, j} ) \, ,
\end{equation}
is satisfied as accurately as possible.  We thus formulate the following least-squares inverse problem,
\begin{equation} \label{eq8.10}
\bfbeta^{\, j} = \arg \, \min_{\bfbeta \, \in \, \RR^{n_b}} \sum_{i=1}^{n_d} \sum_{\ell=1}^{n_\psim}
\,\, \Vert \, \hat\bfa^{i,j,\ell}_\star - \hat\bff ( \bfz^{i,j,\ell} + [\hat h_{\bftheta^{\, j}} ]\, \bfbeta ) \, \Vert^2 \, ,
\end{equation}
where $\bfz^{i,j,\ell} = [\hat\bfw^{j,\ell}]\, \hat\bfa^{i,j,\ell}_\star + [\hat\bfw^{\pin,j,\ell}] \, \xx^i$. This optimization problem can be efficiently solved using a gradient-based optimizer, such as L-BFGS.
Let  $j^\optp$ be the index such that $\bftheta_\ptrial^\optp = \bftheta^{\, j^{\optpp}}$. Then
\begin{equation} \label{eq8.11}
\bfbeta_{\ptrial}^\optp = \bfbeta^{\, j^\optpp} \, .
\end{equation}
%

\noindent\textit{Step 3: Local optimization using projected gradient descent with  Adam updates and trial-based initialization}.
{\color{black}
For  the local optimization, $h_1$, $h_2$ and $\zeta_s$ are set to the values $h^\optp_{1,\ptrial}$, $h^\optp_{2,\ptrial}$, and
$\zeta^\optp_{s,\pptrial}$, respectively, as defined in Eq.~\eqref{eq8.6}. Consequently,
\begin{equation}\label{eq8.12}
\bftheta_t = ( h^\optp_{1,\ptrial}, h^\optp_{2,\ptrial}, \zeta^\optp_{s,\pptrial}, \bfbeta^{\,(1)}_{n_h}\! , \bfbeta^{\,(2)}_{n_h} \!,\bfbeta) \, .
\end{equation}
In this step, the first three components of $\bftheta_t$ are fixed, and the initialization $\bftheta_0$ of  $\bftheta_t$ for the  projected gradient descent algorithm with Adam updates for solving the optimization problem defined by Eq.~\eqref{eq8.4} is
\begin{equation}\label{eq8.13}
\bftheta_0 = ( h^\optp_{1,\ptrial}, h^\optp_{2,\ptrial}, \zeta^\optp_{s,\pptrial}, \bfzero_{n_h},\bfzero_{n_h},
 \bfbeta_{\ptrial}^\optp) \, .
\end{equation}
In practice, the gradient of the cost function $\curJ$ (defined in Eq.~\eqref{eq8.2}) with respect to $\bftheta_t$ is computed only with respect to the free hyperparameters $\bfbeta^{\,(1)}_{n_h}\! , \bfbeta^{\,(2)}_{n_h} \!,\bfbeta$.
}
Differentiability is thus necessary. Examination of Eqs.~\eqref{eq7.4} to \eqref{eq7.6} shows that, for $n_h \geq 1$,  the only source of non-differentiability lies in the matrix-valued functions  $[\hat O_\bftheta]$, $[O_\bftheta^\pin]$, and  $[O_\bftheta^\pout]$, which are not differentiable with respect to $\bfbeta^{\,(1)}_{n_h}$ and $\bfbeta^{\,(2)}_{n_h}$, due to their dependence on neuron selection (see Section~\ref{Section10.4}).
{\color{black} To ensure differentiability, in Step 3 we fix the input-output neuron selection to match that obtained at the optimal trial-based value, i.e., $\bftheta_0$.
}
This means that, to solve the nonlinear random neural network equation defined by Eqs.~\eqref{eq7.4} to \eqref{eq7.6}, we treat the matrices $[\hat O_{\bftheta_0}]$, $[O_{\bftheta_0}^\pin]$, and  $[O_{\bftheta_0}^\pout]$  as constants dependent on $\bftheta_0$ but independent of $\bftheta_t$.
However, all other quantities of the random neural network, such as the latent Gaussian field, the Poisson field, neuron locations on the manifold, connectivity structure, sparse random weight matrix, and biases, remain dependent on $\bftheta_t$.
{\color{black} As previously explained, we apply projected gradient descent using the Adam optimizer, which maintains exponentially decaying averages of first- and second-order  moments of the gradients.}
The parameters  $\beta_1^{\,\adam}$, $\beta_2^{\,\adam}$, and $\varepsilon^\adam$ are set to $0.9$, $0.998$, and $10^{-8}$, respectively, in accordance with \cite{Kingma2015}.
At each iteration, the algorithm evaluates the  loss function $\curJ(\bftheta_t)$ and estimates its gradient via central finite differences. Adam updates are applied, followed by a projection step to enforce the constraints defined by $\curC_{\theta_t}$ (see paragraph (iv) below).
The procedure  continues until convergence, monitored via stabilization of both the cost function and parameter updates. All iterates and their corresponding loss values are stored for post-processing and diagnostic purposes.
When the optimize reaches $\bftheta_t = \bftheta_t^\optp$, the cost function  attains its  minimal value among  the explored domain. We define the optimal negative log-likelihood as
\begin{equation}\label{eq8.14}
\NLL_\ANNp^\opt = -\sum_{i=1}^{n_d} \log p^\ANNpp(\yy^i \,\vert \,\xx^i; \bftheta_t^\optp) \quad , \quad \bftheta_t^\optp\in\curC_{\bftheta_t} \, .
\end{equation}

\noindent\textit{Recording the convergence and stabilization criteria}.
The iteration algorithm of the projected gradient descent with Adam updates,
{\color{black} starting from $\bftheta_0$ defined by Eq.~\eqref{eq8.13},}
generates a sequence $\{\bftheta_t^{\,(n)}\}_{n\geq 1}$ for the hyperparameter $\bftheta_t$, where $n$ denotes the iteration number. To analyze the convergence of the iteration algorithm at each iteration $n$ of the optimization loop, we introduce the loss function,
\begin{equation}\label{eq8.15}
n\mapsto \NLL_\ANNp (n) = \curJ(\bftheta_t^{\,(n)}) \quad , \quad \bftheta_t^{\,(n)}\in\curC_{\bftheta_t} \, .
\end{equation}
where $\curJ$ is defined by Eq.~\eqref{eq8.2}, and is such that $\NLL_\ANNp^\opt = \curJ(\bftheta_t^\optp)$.
During the iteration loop, it is also important to monitor the stability of the parameters $\bftheta_t$. To this end, we introduce
the normalized maximum change in the sequence $\{\bftheta_t^{\,(n)}\}_{n\geq 1}$ over a sliding window of the most recent iterations. This monitoring is also used as a convergence criterion.
At iteration $n$, consider the most recent $n_\wind$ vectors,
$\bftheta_t^{\,(n-n_\wind + 1)} , \bftheta_t^{\,(n-n_\wind + 2)} ,\ldots , \bftheta_t^{\,(n)}$.
The normalized maximum change function is then defined as,
\begin{equation}\label{eq8.16}
n \mapsto \Delta_\max (n) =  \frac{ \max_{k\in\curK} \, \Vert \, \bftheta_t^{\,(k+1)} - \bftheta_t^{\,(k)}\Vert }
                                  { \max ( 1, \Vert\, \bftheta_t^{\,(n)} \Vert ) }
                                  \quad , \quad n > n_\wind
                                  \quad , \quad \curK = \{ n - n_\wind + 1 , \ldots , n-1 \} \, .
\end{equation}
%

\noindent\textit{(iv) Specification of the constraints applied to $h_1$, $h_2$, $\bfbeta^{\,(1)}_{n_h}$, and $\bfbeta^{\,(2)}_{n_h}$ when $n_h > 0$, used in Step~3}.
For $k\in\{1,2\}$, the admissible set for the coefficient vector is $\curC_{\adp,k}$ as defined in Eq.~\eqref{eq5.8}.
{\color{black}
The hyperparameters $h_1 > 0$ and $h_2 > 0$ are thus fixed to the values $h_1^\star = h^\optp_{1,\ptrial}$ and $h_2^\star = h^\optp_{2,\ptrial}$.
}
{\color{black}
After each unconstrained update, we project $\bfbeta^{\,(k)}_{n_h}$ onto an inner approximation of $\curC_{\adp,k}$ that guarantees the constraints in Eq.~\eqref{eq5.8}. In practice, we take this inner set to be the hypercube $[-B_k,B_k]^{n_h}$:
\begin{equation}\label{eq8.17}
\widehat{\curC}_{\adp,k} =  [-B_k,B_k]^{n_h} \, \subseteq\, \curC_{\adp,k} \, ,
\end{equation}
where $B_k>0$ is chosen as
\begin{equation}\label{eq8.18}
B_k \,=\, \min_{\bfx\,\in\,\curS} \,
\frac{ \max \!\big( 0\, , \min( h_k^\star - c_\star^{(k)} \, , \, c^{(k)\star} - h_k^\star ) \big )}
     { \sum_{j=1}^{n_h} \big| \, \hh^{(k,j)}(\bfx) \, \big| } \, .
\end{equation}
If the denominator vanishes at some $\bfx$, that point imposes no constraint and can be ignored in the minimum.
In the extreme case $c_\star^{(k)} = 0$ and $c^{(k)\star} = +\infty$, Eq.~\eqref{eq8.18} reduces to
\begin{equation}\label{eq8.18a}
B_k \,=\, \min_{\bfx\,\in\,\curS} \,
\frac{ h_k^\star }
     { \sum_{j=1}^{n_h} \big| \, \hh^{(k,j)}(\bfx) \, \big| } \, .
\end{equation}
With this choice, any $\bfbeta^{\,(k)}_{n_h}\in [-B_k,B_k]^{n_h}$ satisfies the inequalities in Eq.~\eqref{eq5.8}
for all $\bfx\in\curS$.}\\

\noindent\textit{(v) Numerical noise and stability of hyperparameter optimization}.
During the optimization of hyperparameters via gradient descent, small fluctuations in the computed loss function were systematically observed, despite smooth and stable convergence of the hyperparameters. Such fluctuations originate primarily from numerical noise inherent in floating-point arithmetic, amplified significantly by evaluations of nonlinear activation function, particularly for $f(z) = tanh(z)$, and to a lesser extend for $f(z) = z/(1+\vert z\vert)$) in their saturation regions. The parallel implementation on multiple CPU cores and the utilization of GPUs exacerbate these variations due to differences in the order of floating-point arithmetic operations across computational threads or GPU kernels. Such non-deterministic ordering introduces hardware-dependent stochastic perturbations that manifest as numerical noise.
Remarkably, even though the instantaneous loss values exhibit small fluctuations, the hyperparameters themselves converge smoothly. This stability arises because the hyperparameter update rule, based on gradient computations obtained via central finite differences, inherently averages out these independent numerical perturbations. The central difference approximation, symmetric by construction, effectively reduces noise in the gradient estimate, ensuring unbiasedness of gradient computations on average, provided hyperparameter updates are sufficiently incremental.
It should be emphasized that this explanation relies upon the assumptions that the numerical noise is unbiased, independent across iterations, and small relative to the scale of hyperparameter updates. Under these conditions, gradient-based hyperparameter optimization acts implicitly as a denoising mechanism, resulting in robust convergence despite observed fluctuations in the loss function.
\section{Procedure for testing  and validating the neural network with random neuronal architecture}
\label{Section9}
%
\subsection{Estimation of $\NLL_\ANNp$ from the random $\ANN$ on the test dataset}
\label{Section9.1}
Let $\{ ( \xx^i_\testp , \yy^i_\testp) , i=1,\ldots,n_\testp\}$  be the test dataset.
For the optimal value $\bftheta_t^\optp$ estimated in Section~\ref{Section8}, we evaluate the NLL,
{\color{black} denoted by $\NLL_\ANNp^\test$, as}
\begin{equation} \label{eq9.1}
\NLL_\ANNp^\test = - \sum_{i=1}^{n_\testpp} \log \, p^\ANNpp(\yy^i_\testp \, \vert \, \xx^i_\testp \, ; \bftheta_t^\optp) \, ,
\end{equation}
where the conditional density function $p^\ANNpp$ is defined by Eq.~\eqref{eq8.3} for the optimal random neural network applied to the test dataset.
\subsection{Criterion $\criter_o$ for assessing overfitting in random neural networks}
\label{Section9.2}
Traditional statistical learning theory connects overfitting to a trade-off between model complexity and data availability.
In classical models, overfitting typically occurs when a model
{\color{black} captures spurious fluctuations}
 or "noise" in the training data, leading to poor generalization.
However, in the present framework, the situation is fundamentally different. The randomness is not observational noise, but rather an inherent structural feature of the model, resulting from the stochastic generation of network configurations. Consequently, overfitting must be interpreted through a new perspective: not as fitting noise, but as selecting hyperparameters that induce overly specialized, improbable network structures.
The optimal negative log-likelihood, $\NLL_\ANNp^\opt$, computed using Eq.~~\eqref{eq8.14} on the training dataset of size $n_d$, and the negative log-likelihood, $\NLL_\ANNp^\test$, computed on an independent test dataset of size $n_\testp$ using Eq.~~\eqref{eq9.1}, provide fundamental insights into the predictive performance and potential overfitting of the random neural network.
Given the high complexity of the stochastic neural architecture generated by hyperparameter $\bftheta$ in $\bftheta_t = (\bftheta,\bfbeta)$, each configuration can approximate complex functions (see Section~\ref{Section10}), resulting in a potentially significant discrepancy between training and testing performance due to over-specialized random architectures. To quantify this form of overfitting, distinct from classical parameter-driven overfitting, we refine the criterion $\criter_o$ as follows:
\begin{equation} \label{eq9.2}
\criter_o = \frac{1}{ \vert \NLL_\ANNp^\test \vert } \times \Bigg\vert \frac{n_\testp}{n_d} \,
\NLL_\ANNp^\opt - \NLL_\ANNp^\test \Bigg\vert \, .
\end{equation}
This criterion measures the relative discrepancy between the scaled training and test negative log-likelihood values. Because the random network has no intrinsic regularization and exhibits architectural flexibility, $\criter_o$ serves as a statistical test to evaluate whether the hyperparameter selection $\bftheta_t^\optp$ leads to over-specialized random architectures.
Under the assumption that the training and test datasets are composed of independent and identically distributed samples (as is the case here), and with a sufficiently large test set (for instance, $n_\testp = n_d / 5$), the criterion $\criter_o$ provides robust insight into generalization. A value $\criter_o \ll 1$ indicates no significant overfitting: the network generalizes well beyond the training data. In contrast, a large value of $\criter_o$ reveals overfitting due to the selection of hyperparameters that induce random network realizations overly tailored to the training set.
\subsection{Criteria  for testing the accuracy}
\label{Section9.3}
We propose testing the accuracy of the random ANN model by comparing the confidence intervals with those estimated using the training dataset, for a given probability level $p_c$. Let $\YY(\bftheta_t^\opt \,\vert\,\xx)$ be the $\RR^{n_\ppout}$-valued random output of the random $\ANN$ model for a given input $\xx\in \RR^{n_\ppin}$. For each $k\in\{1, \ldots , n_\ppout\}$, let $\CI^\ANNpp_k(\xx)$ denote the $p_c$-confidence interval of the $k$-th component $\YY_k(\bftheta_t^\opt \,\vert\,\xx)$, estimated from $n_\psim$ independent realizations $\{\yy_k^\ell(\bftheta_t^\optp \vert \,\xx), \ell=1,\ldots, n_\psim\}$. Let $\CI^\trainingpp_k(\xx)$ denote the $p_c$-confidence interval estimated using the training dataset $\{(\xx^i,\yy^i) , i=1,\ldots, n_d\}$.\\

\noindent\textit{(i) Estimation of  the $p_c$-confidence interval $\CI^\ANNpp_k(\xx)$}.
Let $\yy_k^{(1)}(\xx) \leq \ldots \leq  \yy_k^{(n_\psim)}(\xx)$ be the sorted realization  $\{ \yy_k^\ell(\bftheta_t^\optp \vert \,\xx), \ell=1,\ldots, n_\psim\}$.  The symmetric $p_c$-confidence interval $\CI^\ANNpp_k(\xx)$ around the median is given by
\begin{equation}\label{eq9.3}
\CI^\ANNpp_k(\xx) = [\alpha^\ANNpp_k(\xx) \, , \beta^{\,\ANNpp}_k(\xx)] \, ,
\end{equation}
where  $\alpha^\ANNpp_k = \yy_k^{(j_\min)}(\xx)$ with $j_\min =  \max(1,\lceil n_\psim(1-p_c)/2 \rceil)$ and  $\beta^{\,\ANNpp}_k = \yy_k^{(j_\max)}(\xx)$  with $ j_\max = \min(n_\psim,\lfloor n_\psim(1+p_c)/2\rfloor)$. This corresponds to the empirical quantile method using simulated samples from the random $\ANN$ model. For constructing the criteria $\criter_a$ in paragraph (iii) below, we introduce the vectors $\bfalpha^\ANNpp(\xx) = (\alpha^\ANNpp_1(\xx),\ldots , \alpha^\ANNpp_{n_\ppout}(\xx))\in\RR^{n_\ppout}$ and
$\bfbeta^{\,\ANNpp}(\xx) = (\beta^{\,\ANNpp}_1(\xx),\ldots , \beta^{\,\ANNpp}_{n_\ppout}(\xx))\in\RR^{n_\ppout}$.\\

\noindent\textit{(ii) Estimation of  the $p_c$-confidence interval $\CI^\trainingpp_k(\xx)$}.
The conditional probability density function $p^\trainingpp_k(\yy_k \, \vert \, \xx)$ of the real-valued random  $k$-th output component, for a given input $\xx\in \RR^{n_\ppin}$, is estimated using GKDE based on the training dataset $\{(\xx^i,\yy^i) , i=1,\ldots, n_d\}$. Let $F^\trainingpp_k(\yy_k\vert\,\xx) = \int_{-\infty}^{\yy_k}  p^\trainingpp_k(z \, \vert \, \xx)\, dz$ be the cumulative distribution function. Therefore, the symmetric $p_c$-confidence interval $\CI^\trainingpp_k(\xx)$ around the median is given by
\begin{equation}\label{eq9.4}
\CI^\trainingpp_k(\xx) = [\alpha^\trainingpp_k(\xx) \, , \beta^{\,\trainingpp}_k(\xx)] \, ,
\end{equation}
where $\alpha^\trainingpp_k(\xx)=  (F^\trainingpp_k)^{-1}((1-p_c)/2  \vert \,\xx)$ and
$\beta^{\,\trainingpp}_k(\xx) = (F^\trainingpp_k)^{-1}((1+p_c)/2  \vert \,\xx)$,
where $(F^\trainingpp_k)^{-1}$ is the inverse (quantile) function of the cumulative distribution function $F^\trainingpp_k$.
Similarly to (i) above,  we introduce the vectors
$\bfalpha^\trainingpp(\xx) = (\alpha^\trainingpp_1(\xx),\ldots , \alpha^\trainingpp_{n_\ppout}(\xx))\in\RR^{n_\ppout}$ and
$\bfbeta^{\,\trainingpp}(\xx) = (\beta^{\,\trainingpp}_1(\xx),\ldots , \beta^{\,\trainingpp}_{n_\ppout}(\xx))\in\RR^{n_\ppout}$.\\

\noindent\textit{(iii) Criteria $\criter_{\bfalpha}$ and $\criter_{\bfbeta}$}.
The criterion  is based on the relative norm error on the lower and upper bounds of the $p_c$-confidence intervals for
the random $\ANN$ model and the training dataset, evaluated over the test dataset. For the lower bound, the criterion
$\criter_{\bfalpha}$ is defined by
\begin{equation}\label{eq9.5}
\criter_{\bfalpha} = \frac{1}{n_\testp} \sum_{i=1}^{n_\testpp} \criter_{\bfalpha}(\xx^i_\testp) \quad , \quad
\criter_{\bfalpha}(\xx) =  2\,\frac{\Vert \bfalpha^\ANNpp(\xx) - \bfalpha^\trainingpp(\xx)\Vert}{\Vert \bfalpha^\ANNpp(\xx) + \bfalpha^\trainingpp(\xx)} \, ,
\end{equation}
and for the upper bound, the criterion
$\criter_{\bfbeta}$ is defined by
\begin{equation}\label{eq9.6}
\criter_{\bfbeta} = \frac{1}{n_\testp} \sum_{i=1}^{n_\testpp} \criter_{\bfbeta}(\xx^i_\testp) \quad , \quad
\criter_{\bfbeta}(\xx) =  2\,\frac{\Vert \bfbeta^{\,\ANNpp}(\xx) - \bfbeta^{\,\trainingpp}(\xx)\Vert}{\Vert \bfbeta^{\,\ANNpp}(\xx) + \bfbeta^{\,\trainingpp}(\xx)} \, .
\end{equation}
%
%
\subsection{Continuous Ranked Probability Score (CRPS)}
\label{Section9.4}
The Continuous Ranked Probability Score (CRPS) \cite{Matheson1976,Hersbach2000,Gneiting2007} is a strictly proper scoring rule that quantifies the accuracy of a predictive cumulative distribution function (CDF) by comparing it to an observed outcome. Unlike the NLL criterion, which evaluates the pointwise likelihood, the CRPS captures the entire shape of the predictive distribution, making it particularly effective for assessing non-Gaussian outputs. It complements the NLL and confidence interval-based criteria by measuring how well the full predictive distribution aligns with the observed data.
For a given input $\xx \in \RR^{n_\ppin}$ and a selected output component $k\in\{1,\ldots,n_\ppout\}$, let $\YY_k(\bftheta_t^\opt \,\vert\,\xx)$ be the $k$-th component of the random output of the $\ANN$ model and let $F_k^\ANNpp(\cdot \, \vert \, \xx)$ be its cumulative distribution function (CDF). Given an observed value $\yy_k^\obs \in \RR$, the CRPS is defined by
\begin{equation}\label{eq9.7}
\CRPS^\ANNpp_k(\xx) = \int_{-\infty}^{+\infty} \left( F_k^\ANNpp(z \, \vert \, \xx) - \Unit_{]-\infty, \yy_k^\obspp]}(z) \right)^2 dz \, ,
\end{equation}
where $\Unit_{]-\infty, y]}$ denotes the Heaviside function, equal to $1$ is $z \leq y$ and $0$ otherwise.
Since $F_k^\ANNpp$ is not available in closed form but can be sampled via $n_\psim$ independent realizations $\{\yy_k^\ell(\bftheta_t^\optp \, \vert \, \xx)\}_{\ell=1}^{n_\psim}$, we use the standard Monte Carlo approximation
\begin{equation}\label{eq9.8}
\CRPS^\ANNpp_k(\xx) \approx \frac{1}{n_\psim} \sum_{\ell=1}^{n_\psim} \left| \yy_k^\ell - \yy_k^\obs \right| - \frac{1}{2 n_\psim^2} \sum_{\ell=1}^{n_\psim} \sum_{\ell'=1}^{n_\psim} \vert \yy_k^\ell - \yy_k^{\ell'} \vert \, ,
\end{equation}
where $\yy_k^\ell = \yy_k^\ell(\bftheta_t^\optp \, \vert \, \xx)$ and $\yy_k^\obs$ is the $k$-th component of the observed output $\yy^\obs$ for the given input $\xx$.
Let $\{(\xx^i_\testp,\yy^i_\testp)\}_{i=1}^{n_\testpp}$ be the test dataset. For each test input $\xx^i_\testp$, let $\yy^{i,\obs}_k = (\yy^i_\testp)_k$ be the observed $k$-th component. Then, the average CRPS over the test dataset is defined as
\begin{equation}\label{eq9.9}
\CRPS^\ANNpp_\testp = \frac{1}{n_\testp n_\ppout} \sum_{i=1}^{n_\testpp} \sum_{k=1}^{n_\ppout} \CRPS^\ANNpp_k(\xx^i_\testp) \, .
\end{equation}
The scalar value $\CRPS^\ANNpp_\testp$ provides a global measure of the distributional accuracy of the random $\ANN$ model on the test dataset. Since $\CRPS^\ANNpp_\testp$ has no absolute reference scale, it must be interpreted relatively by comparing it to the CRPS of a nonparametric cumulative distribution function $F_k^\trainingpp(\cdot \, \vert \, \xx)$, estimated from the training dataset, and then denoted as $\CRPS^\trainingpp_\testp$.
To quantify this comparison, we introduce the reference cumulative distribution function  $F_k^\trainingpp(\cdot \, \vert \, \xx)$ defined in Section~\ref{Section9.3}-(ii), and compute the criterion
\begin{equation}\label{eq9.10}
\criter_\CRPSpp = \frac{1}{n_\testp n_\ppout} \sum_{i=1}^{n_\testpp} \sum_{k=1}^{n_\ppout}
2 \, \frac{ \left| \CRPS^\ANNpp_k(\xx^i_\testp) - \CRPS^\trainingpp_k(\xx^i_\testp) \right| }
{ \CRPS^\ANNpp_k(\xx^i_\testp) + \CRPS^\trainingpp_k(\xx^i_\testp) } \, ,
\end{equation}
which evaluates the relative discrepancy between the CRPS computed from the random $\ANN$ model and that obtained from the GKDE-based baseline.
It should be emphasized that in Eq.~\eqref{eq9.7}, the cumulative distribution function $F_k^\ANNpp(z \, \vert \, \xx)$ is not estimated using GKDE, but corresponds to the empirical distribution derived from simulated realizations of the random $\ANN$ model for a fixed value of $\bftheta_t^\optp$. This justifies the use of the closed-form estimation in Eq.~\eqref{eq9.8}. In contrast, the cumulative distribution function $F_k^\trainingpp(\cdot \, \vert \, \xx)$ used in Eq.~\eqref{eq9.10} corresponds to an unknown conditional distribution inferred from observed data and is therefore estimated via GKDE based on the training dataset.
This methodological asymmetry arises from the structural difference between the two distributions: the random $\ANN$ model defines an explicit generative mechanism for $\YY_k(\bftheta_t^\opt \,\vert \,\xx)$, allowing for direct simulation and empirical estimation, whereas the data-driven distribution $p^\trainingpp_k$ must be approximated nonparametrically due to the absence of a generative model.
It is important to emphasize that a small difference between the global scores $\CRPS^\ANNpp_\testp$ and $\CRPS^\trainingpp_\testp$ does not imply a small value of the relative discrepancy criterion $\CRPS^\ANNpp_\testp$. The latter evaluates local divergences at each test sample and output component, independently of possible cancellations at the global scale. Thus, a relatively large value of $\CRPS^\ANNpp_\testp$ indicates the presence of pointwise prediction inconsistencies, even if the global distributional accuracy of the random ANN model appears satisfactory.

\subsection{Validation using the comparison of conditional probability density functions}
\label{Section9.5}
As a complement to the criteria based on $p_c$-confidence intervals for all points of the test dataset, and since the ouuputs are not Gaussian, for a selected input $\xx^i_\testp \in\RR^{n_\ppin}$ chosen in the test dataset and for a selected  $k$-th component output  chosen among all the output components, we
compare the graph of the conditional probability density function
$y\mapsto  p^\ANNpp_k(y \, \vert \, \xx^i_\testp \, ; \bftheta_t^\optp)$
with $y\mapsto p^\trainingpp_k(y \, \vert \, \xx^i_\testp)$.
The density $p^\ANNpp_k(\cdot \, \vert \, \xx^i_\testp \, ; \bftheta_t^\optp)$ on $\RR$ is the conditional probability density function
of the $k$-th component $\YY_k(\bftheta_t^\opt \,\vert\,\xx^i_\testp)$ of the random output, estimated from $n_\psim$ independent realizations $\{\yy_k^\ell(\bftheta_t^\optp \vert \,\xx^i_\testp), \ell=1,\ldots, n_\psim\}$ generated with the random $\ANN$ for $\theta_t=\bftheta_t^\optp$.
The density $p^\trainingpp_k(\cdot \, \vert \, \xx^i_\testp)$ on $\RR$ is the conditional probability density function
of the $k$-th component  of the random output, estimated using GKDE based on the training dataset $\{(\xx^i,\yy^i) , i=1,\ldots, n_d\}$.
\section{Mathematical analysis of the  proposed random neural network}
\label{Section10}
The present section introduces the first mathematical elements that characterize the proposed stochastic neural model from a theoretical perspective. The emphasis is not on deriving exhaustive results or closed-form theorems but on establishing foundational properties that substantiate the internal coherence, expressive capacity, and well-posedness of the architecture as a probabilistic function generator. In particular, we identify and prove structural properties related to the existence and uniqueness of the latent field, the measurability of the random architecture generator, and the statistical consistency of the induced stochastic mappings. These results already support a {\color{black} preliminary characterization, in a nonparametric sense, of the expressive variability and functional diversity of the model.}
Several deeper questions, such as the full characterization of the induced function class, identifiability of the generative hyperparameters, and convergence properties of the supervised learning estimator, remain open. These are mathematically subtle and require further development. Nevertheless, the analysis presented here provides a rigorous starting point for formalizing the proposed model as a geometric and stochastic learning framework grounded in applied mathematics.
\subsection{Expressive stochastic capacity of the random neural network model}
\label{Section10.1}
We consider the supervised learning of an unknown random mapping
\begin{equation*}
\xx\mapsto  \FF(\xx):  \RR^{n_\ppin} \to L^2_\curP(\Omega, \RR^{n_\ppout})\, ,
\end{equation*}
defined on the probability space $(\Omega,\curT,\curP)$, for which only a finite training dataset $\{(\xx^i, \yy^i)\}_{i=1}^{n_d}$ is available, where each $\yy^i = \FF(\xx^i; \omega_i)$ is a realization associated with  $\omega_i \in\Omega$.
{\color{black} The random mapping}
$\FF$ itself is not known and cannot be queried outside of the given training dataset.
In Sections~\ref{Section2} to \ref{Section7}, we have constructed a family of random neural network surrogates defined by random mappings
\begin{equation*}
\left \{ \xx \mapsto \FF^\ANNpp (\xx ;\bftheta_t) : \RR^{n_\ppin} \to {\color{black} L^2_\curP(\Omega, \RR^{n_\ppout} )} \,\, , \,\, \bftheta_t = (\bftheta,\bfbeta) \in \curC_{\bftheta_t}\subset\RR^{n_{\bftheta_t}} \right \}\, ,
\end{equation*}
structured via latent random fields on a compact, boundaryless, multiply-connected manifold. Our goal is to define a notion of \textit{expressive stochastic capacity} for this model class that is independent of any specific $\FF$ and that characterizes the expressive capacity of the architecture class.\\

\noindent To this end, we define the expressive stochastic capacity of the model class
$\{ \FF^\ANNpp ( \cdot \, ; \bftheta_t) : \bftheta_t \in \curC_{\bftheta_t} \}$
 as the \textit{density} of this set in the Hilbert space,
\begin{equation*}
\HH = L^2_{P_\XX}( \RR^{n_\ppin}\, ; \, L^2_\curP(\Omega, \RR^{n_\ppout}))\, ,
\end{equation*}
where $P_\XX$ is the probability measure of the random input $\XX$. Vector space $\HH$ is the Hilbert space of measurable functions on $\RR^{n_\ppin}$ with values in $L^2_\curP(\Omega, \RR^{n_\ppout})$, square integrable with respect to $P_\XX$. Therefore, the model class is said to possess expressive stochastic capacity if, for every target mapping $\FF \in \HH$ and every $\varepsilon > 0$, there exists a parameter vector $\bftheta^\optp \in \curC_{\bftheta_t}$ such that
\begin{equation*}
\int_{\RR^{n_\ppin}} E \left \{ \, \Vert \,\FF^\ANNpp(\xx; \bftheta_t^\optp) - \FF(\xx) \, \Vert^2 \right\} \,  P_\XX(d\xx)  < \varepsilon^2.
\end{equation*}
This capacity condition characterizes the approximation power of the model class independently of any specific training dataset or optimization method. In practice, the expressive stochastic capacity of the model could be ensured by constructing $\FF^\ANNpp(\xx; \bftheta_t)$ via a chaos expansion of the form
{\color{black} $\FF^\ANNpp(\xx; \bftheta_t) = \sum_{k=1}^{+\infty} \aaeclair_k(\xx\,; \bftheta_t) \,\xi_k$,}
 where $\{\xi_k\}_k$ is a Hilbert basis in $L^2_\curP(\Omega)$, and where the coefficient functions
{\color{black} $\aaeclair_k(\cdot\, ; \bftheta_t)$ with values in $\RR^{n_\ppout}$}
 are drawn from a class of expressive stochastic function generators.
Under suitable conditions, the resulting model class forms a dense subset of $\HH$, thereby exhibiting strong representational capacity of the random neural network architecture. \\

\noindent Basing a proof on such a chaos expansion seems difficult, and since there are three main ingredients in the proposed random neural network model which, by composition, give $\FF^\ANNpp(\cdot; \bftheta_t)$, we propose to prove separately the following four properties:
{\color{black}
\begin{itemize}
\item The expressive stochastic approximation of  centered, second-order, Gaussian, real-valued random fields indexed by $\curS$, and differentiability of the associated SPDE solutions with respect to the hyperparameter vector.
\item The expressive stochastic encoding of the probability measure of the random weight matrix.
\item The differentiability of the solution of the constrained random neural network equation.
\end{itemize}
}
Each of the following three sections addresses these three points.

\subsection{Expressive stochastic approximation of centered second-order Gaussian real-valued random fields indexed by $\curS$ and  differentiability of the associated SPDE solutions with respect to the hyperparameter vector}
\label{Section10.2}
We consider the  smooth, compact, oriented, boundaryless 2-dimensional Riemannian manifold $\curS$ defined in Section~\ref{Section2.1}-(i). Let $L^2_0(\curS)$ denote the Hilbert space of square-integrable functions with zero mean on $\curS$,
\begin{equation}\label{eq10.1}
L^2_0(\curS) = \left \{ f \in L^2(\curS) ~\middle |~ \int_{\curS} f(\bfx) \, d\sigma(\bfx) = 0 \right\} \, .
\end{equation}
Let $\HH_S$ be the set of self-adjoint, positive-definite, Hilbert-Schmidt operators on $L^2_0(\curS)$, corresponding to covariance operators of centered, second-order, Gaussian,  real-valued random fields indexed by $\curS$.
For each integer $n_h \geq 1$, we consider the stochastic partial differential equation defined by Eq.~\eqref{eq2.2},
\begin{equation}\label{eq10.2}
\curL_{n_h} U_{n_h}(\bfx) = \curB(\bfx) \quad , \quad  \bfx\in\curS \, ,
\end{equation}
where $\curL_{n_h} = \tau_0 - \langle \nabla , [K_{n_h}(\bfx)] \nabla \rangle $, with $\tau_0 > 0$, $\curB$ the Gaussian white generalized random field defined in Section~\ref{Section2.1}-(ii), and $\{[K_{n_h}(\bfx),\bfx\in\curS \}$ the symmetric positive-definite tensor field on $T_\bfx \curS$, defined in Sections~\ref{Section2.1}-(iii)-(iv) and such that, for $k\in\{1,2\}$, the functions
$\{\hh^{(k,j)} , j=1,\ldots, n_h\}$ are orthonormal in $L^2_0(\curS)$, and $h_k > 0$.
Therefore, the covariance operator of random field $U_{n_h}$ can be written as
$C_{U_{n_h}} = (\curL_{n_h}^{-1})^2$.
\begin{proposition}[Expressive stochastic representation of the Gaussian latent field model within the diagonal model] \label{Proposition1}
{\color{black}
For each $k\in\{1,2\}$, let $\{ \hh^{(k,j)} \}_{j \,\geq \,1}$ be a Hilbert basis of $L^2_0(\curS)$ and let $\bfbeta^{\,(k)}_{n_h}\in\curC_{\adp ,k}$ be such that Eq.~\eqref{eq2.5} and thus  Eq.~\eqref{eq2.3} hold. Hence, the diagonal tensor fields $[K_{n_h}]=\mathrm{diag} \big(h^{(1)}_{n_h},h^{(2)}_{n_h}\big)$ are uniformly elliptic on $\curS$.
Let $[K_\star] = \mathrm{diag}\big(h^{(1)}_\star,h^{(2)}_\star\big)$ be any uniformly elliptic diagonal tensor field on $\curS$ with $h^{(k)}_\star$ belonging to the $L^\infty(\curS)$-closure of $\mathrm{span}\{\hh^{(k,j)}\}_{j\ge 1}$, and let $C_\star\in\HH_S$ denote the covariance operator of the SPDE solution associated with $[K_\star]$. Then there exist two sequences of parameter vectors $\{\bfbeta^{\,(1)}_{n_h}\}_{n_h}$ and $\{\bfbeta^{\,(2)}_{n_h}\}_{n_h}$ (defining $[K_{n_h}]$ via Eqs.~\eqref{eq2.4} and \eqref{eq2.5}) such that the corresponding covariance operators $\{C_{U_{n_h}}\}_{n_h}\subset\HH_S$ satisfy
}
\begin{equation}\label{eq10.4}
\lim_{n_h \to \infty} \Vert \, C_\star - C_{U_{n_h}}  \Vert_{{}_{\HH_S}} = 0.
\end{equation}
In particular, the model class generated by diagonal tensor fields of the
{\color{black} form \eqref{eq2.4} with \eqref{eq2.5} is}
 dense in $\HH_S$ within the diagonal subclass.
\end{proposition}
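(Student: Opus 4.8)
The plan is to reduce the claim to a uniform approximation statement for the scalar coefficient fields, and then to transport that approximation — through the resolvent of the elliptic operator — to the Hilbert--Schmidt topology on covariance operators. I would carry this out in three steps.

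\emph{Step 1 (coefficient approximation).} By the density hypothesis, for each $n_h$ and each $k\in\{1,2\}$ one may choose $\bfbeta^{\,(k)}_{n_h}\in\RR^{n_h}$ so that the truncation $h^{(k)}_{n_h}=h_k+\sum_{j=1}^{n_h}\beta^{\,(k)}_j\,\hh^{(k,j)}$ satisfies $\varepsilon_{n_h}:=\max_{k}\Vert h^{(k)}_{n_h}-h^{(k)}_\star\Vert_{L^\infty(\curS)}\to0$ as $n_h\to\infty$ (this is exactly the statement that $h^{(k)}_\star$ lies in the $L^\infty(\curS)$-closure of $h_k+\mathrm{span}\{\hh^{(k,j)}\}_{j\ge1}$). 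Since $[K_\star]$ is uniformly elliptic and, as is implicit in the model, its components lie within the admissible band $c_\star^{(k)}\le h^{(k)}_\star\le c^{(k)\star}$, for $n_h$ large the approximant $h^{(k)}_{n_h}$ remains in a slightly smaller band bounded away from $0$ and $\infty$; hence $\bfbeta^{\,(k)}_{n_h}\in\curC_{\adp,k}$ and Eq.~\eqref{eq2.3} holds for $[K_{n_h}]$ with ellipticity constants $k_\star,k^\star$ \emph{independent of $n_h$}. Finally, since $[K_{n_h}]$ and $[K_\star]$ are simultaneously diagonal in the orthonormal tangent frame $(\bfe^1,\bfe^2)$, the pointwise operator norm of $[K_{n_h}(\bfx)]-[K_\star(\bfx)]$ on $T_\bfx\curS$ equals $\max_k|h^{(k)}_{n_h}(\bfx)-h^{(k)}_\star(\bfx)|$, whence $\Vert[K_{n_h}]-[K_\star]\Vert_{L^\infty(\curS)}\le\varepsilon_{n_h}$.

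\emph{Step 2 (resolvent convergence).} Write $\curL_\star=\tau_0-\langle\nabla,[K_\star]\nabla\rangle$ and let $a_{n_h},a_\star$ be the associated symmetric bilinear forms on $\HH^1(\curS)$, e.g.\ $a_\star(u,v)=\int_\curS\big(\tau_0\,u\,v+\langle[K_\star]\nabla u,\nabla v\rangle\big)\,d\sigma$. By Step~1 both forms are coercive on $\HH^1(\curS)$ with the \emph{same} constant $\alpha=\min(\tau_0,k_\star)>0$, so $\curL_{n_h}$ and $\curL_\star$ are self-adjoint positive operators on $L^2(\curS)$ with bounded inverses. Given $f\in L^2(\curS)$, set $u_{n_h}=\curL_{n_h}^{-1}f$, $u_\star=\curL_\star^{-1}f$; then $a_{n_h}(u_{n_h}-u_\star,w)=(a_\star-a_{n_h})(u_\star,w)$ for every $w\in\HH^1(\curS)$. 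Choosing $w=u_{n_h}-u_\star$, using $|(a_\star-a_{n_h})(u_\star,w)|\le\varepsilon_{n_h}\Vert u_\star\Vert_{\HH^1}\Vert w\Vert_{\HH^1}$, coercivity, and $\Vert u_\star\Vert_{\HH^1}\le\alpha^{-1}\Vert f\Vert_{L^2}$ yields $\Vert u_{n_h}-u_\star\Vert_{\HH^1}\le\alpha^{-2}\varepsilon_{n_h}\Vert f\Vert_{L^2}$, so that $\Vert\curL_{n_h}^{-1}-\curL_\star^{-1}\Vert_{L^2\to L^2}\le\alpha^{-2}\varepsilon_{n_h}\to0$.

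\emph{Step 3 (passage to Hilbert--Schmidt).} With $C_{U_{n_h}}=\curL_{n_h}^{-2}$ and $C_\star=\curL_\star^{-2}$, the identity $\curL_{n_h}^{-2}-\curL_\star^{-2}=(\curL_{n_h}^{-1}-\curL_\star^{-1})\curL_{n_h}^{-1}+\curL_\star^{-1}(\curL_{n_h}^{-1}-\curL_\star^{-1})$ and the bound $\Vert AB\Vert_{\HH_S}\le\Vert A\Vert_{L^2\to L^2}\Vert B\Vert_{\HH_S}$ give
\[
\Vert C_{U_{n_h}}-C_\star\Vert_{\HH_S}\ \le\ \Vert\curL_{n_h}^{-1}-\curL_\star^{-1}\Vert_{L^2\to L^2}\,\big(\Vert\curL_{n_h}^{-1}\Vert_{\HH_S}+\Vert\curL_\star^{-1}\Vert_{\HH_S}\big).
\]
It then suffices to bound $\Vert\curL_{n_h}^{-1}\Vert_{\HH_S}$ uniformly in $n_h$. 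The form comparison $a_{n_h}(u,u)\ge\int_\curS(\tau_0\,u^2+k_\star|\nabla u|^2)\,d\sigma$, together with the min--max principle, shows that the $j$-th eigenvalue of $\curL_{n_h}$ dominates the $j$-th eigenvalue $\mu_j$ of $\tau_0-k_\star\Delta_\curS$; by Weyl's law on the compact $2$-dimensional manifold $\curS$ one has $\mu_j\sim c\,j$, hence $\Vert\curL_{n_h}^{-1}\Vert_{\HH_S}^2\le\sum_{j\ge1}\mu_j^{-2}<\infty$, a bound independent of $n_h$ (and the same for $\curL_\star^{-1}$). Combining the three steps gives $\Vert C_{U_{n_h}}-C_\star\Vert_{\HH_S}\le C\,\varepsilon_{n_h}\to0$, i.e.\ Eq.~\eqref{eq10.4}. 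Since $C_\star$ is an arbitrary covariance operator produced by a uniformly elliptic diagonal tensor field whose components lie in the prescribed $L^\infty$-closures, the family $\{C_{U_{n_h}}\}$ is dense in $\HH_S$ within the diagonal subclass.

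\emph{Main obstacle.} The delicate point is Step~3: operator-norm convergence of the resolvents does not by itself give Hilbert--Schmidt convergence of the \emph{squared} inverses $\curL_{n_h}^{-2}$; this requires the uniform Hilbert--Schmidt bound on $\curL_{n_h}^{-1}$, which in turn relies on the ellipticity constants being uniform in $n_h$ (secured in Step~1) so that min--max together with Weyl's law produces $n_h$-uniform eigenvalue lower bounds — and it is precisely here that the hypothesis $\dim\curS=2$ enters, since $\curL_{n_h}^{-1}$ fails to be Hilbert--Schmidt once $\dim\curS\ge4$. Steps~1 and~2 are routine Lax--Milgram/C\'ea-type estimates. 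If one prefers Hilbert--Schmidt operators on $L^2_0(\curS)$ rather than on $L^2(\curS)$, compressing by the zero-mean projection only decreases Hilbert--Schmidt norms and does not affect any of the estimates above.
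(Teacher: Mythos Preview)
Your proof is correct and follows essentially the same route as the paper: $L^\infty$-approximation of the diagonal coefficients, norm--resolvent convergence of the elliptic inverses via form perturbation, and the factorization $\curL_{n_h}^{-2}-\curL_\star^{-2}=(\curL_{n_h}^{-1}-\curL_\star^{-1})\curL_{n_h}^{-1}+\curL_\star^{-1}(\curL_{n_h}^{-1}-\curL_\star^{-1})$ combined with the Hilbert--Schmidt ideal property. Your treatment is in fact more complete: the paper's final inequality tacitly requires $\sup_{n_h}\Vert\curL_{n_h}^{-1}\Vert_{\HH_S}<\infty$, which it does not justify, whereas you supply this via the uniform ellipticity constants, min--max comparison with $\tau_0-k_\star\Delta_\curS$, and Weyl's law --- correctly isolating the role of $\dim\curS=2$.
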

\begin{proof}
{\color{black}
Since $\curS$ is compact and without boundary, $\tau_0 > 0$, and $[K_{n_h}]$ and $[K_\star]$ are symmetric and uniformly positive-definite, the operators
$\curL_{n_h} = \tau_0 - \langle \nabla , [K_{n_h}(\bfx)] \nabla \rangle $
and
$\curL_\star = \tau_0 - \langle \nabla , [K_\star(\bfx)] \nabla \rangle $
are strongly elliptic, self-adjoint, and coercive \cite{Dautray2013}.
It follows that $\curL_{n_h}^{-1}$ and $\curL_\star^{-1}$ are compact, self-adjoint, and Hilbert-Schmidt on $L^2_0(\curS)$ \cite{Guelfand1964}, due to the compact embedding $\HH^1(\curS)$ into $L^2(\curS)$ \cite{Adams2003}.
Denote the covariances by $C_{U_{n_h}}=(\curL_{n_h}^{-1})^2$ and $C_\star=(\curL_{\star}^{-1})^2$.
By assumption, for $k\in\{1,2\}$ there exist truncated expansions $h^{(k)}_{n_h}\in\mathrm{span}\{\hh^{(k,j)}\}_{j\ge 1}$ such that $h^{(k)}_{n_h}\to h^{(k)}_\star$ in $L^\infty(\curS)$ while preserving the uniform ellipticity bounds (this is enforced by
$\bfbeta^{\, (k)}_{n_h}\in\curC_{\adp,k}$). Hence $[K_{n_h}]\to [K_\star]$ in $L^\infty(\curS)$.
Consider the bilinear forms
\begin{equation*}
a_{n_h}(u,v) = \tau_0\!\int_{\curS}u\, v \,d\sigma +\!\int_{\curS}\langle [K_{n_h}]\nabla u\,,\nabla v\rangle\,d\sigma
\quad , \quad
a_{\star}(u,v) = \tau_0\!\int_{\curS}u\, v\,d\sigma +\!\int_{\curS}\langle [K_\star] \nabla u \, ,\nabla v\rangle\,d\sigma\, .
\end{equation*}
Uniform ellipticity and $L^\infty$-convergence of coefficients yield
\begin{equation*}
\big\vert a_{n_h}(u,v)- a_{\star}(u,v)\big \vert\, \leq \, \Vert [K_{n_h}]-[K_\star]\Vert_{L^\infty}\,\Vert\nabla u\Vert_{L^2}
\,\Vert \nabla v\Vert _{L^2}\, .
\end{equation*}
Standard form-perturbation estimates imply the norm-resolvent convergence \cite{Kato1966}, hence
\begin{equation*}
\big\|\,\curL_{n_h}^{-1}-\curL_{\star}^{-1}\,\big\|_{\mathcal{L}(L^2_0,L^2_0)}\;\longrightarrow\;0.
\end{equation*}
Finally, write
$(\curL_{n_h}^{-1})^2-(\curL_{\star}^{-1})^2
=\big(\curL_{n_h}^{-1}-\curL_{\star}^{-1}\big)\curL_{n_h}^{-1}
+\curL_{\star}^{-1}\big(\curL_{n_h}^{-1}-\curL_{\star}^{-1}\big)$.
Since the product of a bounded operator with a Hilbert–Schmidt operator is Hilbert–Schmidt, we get
\begin{equation*}
\big \Vert \,C_{U_{n_h}} - C_\star\,\big \Vert_{\HH_S}\, \leq \,
\big \Vert\,\curL_{n_h}^{-1}-\curL_{\star}^{-1}\,\big \Vert \,
\Big(\,\big\Vert\,\curL_{n_h}^{-1}\,\big\Vert_{\HH_S}
 + \big\Vert\,\curL_{\star}^{-1}\,\big\Vert_{\HH_S}\Big)\, \longrightarrow \, 0.
\end{equation*}
This proves \eqref{eq10.4} within the diagonal subclass and completes the proof.
}
\end{proof}

\noindent\textit{Differentiability of the associated SPDE solutions with respect to parameters}.
The differentiability of the random field $U_{n_h}$ with respect to the parameters $(h_1,h_2,\beta_1^{(1)},\dots,\beta_{n_h}^{(1)},\beta_1^{(2)},\dots,\beta_{n_h}^{(2)})$ (see Eqs.~\eqref{eq2.5}) is guaranteed by the inverse mapping theorem in Banach spaces, due to the smooth dependence of the elliptic operator $\curL_{n_h}$ on these parameters. Indeed, standard elliptic PDE theory ensures that $\curL_{n_h}^{-1}$ is differentiable in the operator norm provided that uniform ellipticity  constraints for $h^{(k)}(\bfx)$ are strictly maintained. Consequently, the differentiability of $U_{n_h}$ directly follows.
\subsection{Expressive stochastic encoding of the probability measure of the random weight matrix}
\label{Section10.3}
As explained in Section~\ref{Section8}-(iii), the training of the hyperparameter $\bftheta_t$ is performed using the projected gradient descent with Adam updates and trial-based initialization, which yields a convergent sequence $\{\bftheta_t^{\,(n)}\}_{n\geq 1}$ for hyperparameter $\bftheta_t$, and thus induces a corresponding  sequence $\{\bftheta^{\,(n)}\}_{n\geq 1}$ for $\bftheta$.
For each fixed $\bftheta^{\,(n)}$, the sparse random matrix of the weights {\color{black} is given by $[\bfW^\psp_{\bftheta^{\,(n)}}]_{ij}= [M^\psp_{\bftheta^{\,(n)}}]_{ij}\,[\bfW_{\bftheta^{\,(n)}}]_{ij}$ as defined} in Eq.~\eqref{eq5.3}, where $[M^\psp_{\bftheta^{\,(n)}}]$ is  specified in Eq.~\eqref{eq4.12}, and $[\bfW_{\bftheta^{\,(n)}}]$ from  Eq.~\eqref{eq5.2}, is rewritten component-wise as    $[\bfW_{\bftheta^{\,(n)}}]_{ij} = [w^g_{\bftheta^{\,(n)}}]_{ij}\, F^{\,(n)}_{ij}$, where $[\bfF^{\,(n)}]$ is a  random symmetric $(N\times N)$ real matrix  with entries defined by
\begin{equation}\label{eq10.5}
F^{\,(n)}_{ij} = \exp\left(-\frac{(\Delta S^{\bftheta^{\,(n)}}_{ij})^2}{2(\zeta_s^{(n)})^2\,\sigma_{\bfS_{\bftheta^{\,(n)}}}^2}\right) \quad , \quad
F^{\,(n)}_{ij} = F^{\,(n)}_{ji} \quad , \quad F^{\,(n)}_{ij} \in \,]0 , 1] \,\, a.s. \, .
\end{equation}
In Eq.~\eqref{eq10.5}, $\zeta_s^{(n)}$ denotes the component of $\bftheta^{\,(n)}$ corresponding to  $\zeta_s$ in $\bftheta$.
Note that it is sufficient to prove the expressive stochastic encoding of the probability measure of the random weight matrix
$[\bfW_{\bftheta^{\,(n)}}]$. This is because, in the limit case of sparsity, that is, when the  global percentile threshold parameter $\tau_\pprc$, defined in Section~\ref{Section4.2.2}-(v), is set to $0$, we have $[\bfW^\psp_{\bftheta^{\,(n)}}] = [\bfW_{\bftheta^{\,(n)}}]$.
Let $[\bfF]$ be any symmetric second-order random matrix of size $(N\times N)$ with entries in $]\, 0 ,1]$ a.s.,
{\color{black}
such that for each $(i,j)$, $F_{ij}\in L^2_\curP(\Omega,\sigma(\Xi_{ij}))$ where $\sigma(\Xi_{ij})$ is the $\sigma$-algebra generated by  $\Xi_{ij}$. Then, we have to demonstrate that the stochastic model defined by Eq.~\eqref{eq10.5} constitutes  an expressive stochastic  encoding of the probability measure of $[\bfF]$.
For notational simplicity, we rewrite $\Delta S^{\bftheta^{\,(n)}}_{ij}$  as $Y_{ij}^{(n)}$.
}
%
\begin{proposition}[Expressive stochastic encoding of the probability measure of the random matrix weights]
\label{Proposition2}
Let $Y_{ij}^{(n)} = U^m(\bfx^i_{\bftheta^{\,(n)}}; \bftheta^{\,(n)} )$ $-  U^m(\bfx^j_{\bftheta^{\,(n)}}; \bftheta^{\,(n)})$ be the centered, second-order,  Gaussian, real-valued random variable (see Section~\ref{Section5}-(i)) and let $\sigma^{(n)}_{ij}$ be its standard deviation. Therefore, the real-valued random variable $\Xi_{ij} = Y_{ij}^{(n)} / \sigma^{(n)}_{ij}$ is centered, second-order, Gaussian, with unit variance, $E\{\Xi_{ij}^2\}=1$,  and the collection $\{\Xi_{ij}\}_{i,j=1}^N$ are jointly dependent.
For each $(i,j)$, the normalized Hermite polynomials $\{\curH_\alpha(\Xi_{ij})\}_{\alpha\geq 0}$ form a complete orthonormal system of $L^2_\curP(\Omega,\sigma(\Xi_{ij}))$.
Let $b^{(n)}_{ij}$ and $a^{(n)}_{ij}$ be positive scalars such that
\begin{equation}\label{eq10.6}
b^{(n)}_{ij} = a^{(n)}_{ij} \, / \, (\zeta_s^{(n)} )^2   > 0 \quad , \quad
a^{(n)}_{ij} =  (\sigma^{(n)}_{ij})^2 \, / \, (2 \,\sigma^2_{\bfS_{\bftheta^{\,(n)}}} )  > 0 \, .
\end{equation}
Let $\curH_\alpha(\xi)$ denote the normalized Hermite polynomials on $\RR$.
Then each random variable $F^{(n)}_{ij}$ defined by Eq.~\eqref{eq10.5}, is written as $F^{(n)}_{ij} = {\exp}(-b^{(n)}_{ij}\, \Xi_{ij}^2)$, and admits the polynomial chaos expansion,
\begin{equation}\label{eq10.7}
F^{\,(n)}_{ij} = \sum_{\alpha=0}^{\infty} f_{ij,\, \alpha}^{(n)} \,\,\curH_\alpha(\Xi_{ij}) \, ,
\end{equation}
where the coefficients are explicitly given by
\begin{equation}\label{eq10.8}
 f_{ij\, ,\, \alpha}^{(n)} = \frac{ \sqrt{ (2\alpha)! } }{\alpha!} \, \frac{ (-b^{\,(n)}_{ij})^\alpha }
                                                         { ( 1+2 b^{\,(n)}_{ij} )^{\,\alpha +1/2} } \, .
\end{equation}
If the sequence $b^{\,(n)}_{ij} > 0$ is such that the set of coefficient sequences $\{f^{(n)}_{ij,\alpha}\}_{\alpha \in \NN}$ indexed by $n \in \NN$ is dense in $\ell^2(\NN)$, i.e.,
\begin{equation}\label{eq10.9}
\forall \bfc = \{c_\alpha \}_{\alpha\in\NN } \in \ell^2(\NN)\, ,\, \forall \varepsilon > 0\, ,\, \exists \, n \in \NN \,\, \text{ such that } \sum_{\alpha=0}^{\infty} ( f^{(n)}_{ij\, ,\,\alpha} - c_\alpha )^2 < \varepsilon^2 \, ,
\end{equation}
then the stochastic model defined by Eq.~\eqref{eq10.5}
{\color{black}
constitutes an expressive stochastic encoding of the probability law within the entrywise subclass $\{[F]:\,F_{ij}\in L^2_\curP(\Omega,\sigma(\Xi_{ij})) \, , \,  \forall i,j\}$.
}
In particular, the sequence  $\{[\bfF^{(n)}]\}_{n\geq 1}$, generated by the stochastic model, is such that
{\color{black}
\begin{equation}\label{eq10.10}
\exists\,\{n_q\}_{q\geq 1}\ \text{strictly increasing such that}\ \
\lim_{q\to+\infty}E\{ \, \Vert \, [\bfF^{(n_q)}] - [\bfF] \,\Vert_F^2 \}= 0 \, .
\end{equation}
}
\end{proposition}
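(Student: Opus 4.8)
\noindent\textit{Proof plan.} The plan is to establish the proposition in three stages: first reduce $F^{(n)}_{ij}$ to a one-parameter family of even functions of a standard Gaussian variable; then compute its Wiener-Hermite chaos expansion in closed form, recovering Eq.~\eqref{eq10.8}; and finally transfer the hypothesized $\ell^2$-density of the coefficient families into $L^2$-convergence of the random matrices $[\bfF^{(n)}]$ in Frobenius norm. For the first stage, I would substitute the definitions~\eqref{eq10.6} of $a^{(n)}_{ij}$ and $b^{(n)}_{ij}$ into Eqs.~\eqref{eq5.2}--\eqref{eq10.5} and use that $Y^{(n)}_{ij}=\sigma^{(n)}_{ij}\,\Xi_{ij}$ by definition of $\Xi_{ij}$; this gives $(\Delta S^{\bftheta^{(n)}}_{ij})^2/(2(\zeta_s^{(n)})^2\sigma^2_{\bfS_{\bftheta^{(n)}}})=b^{(n)}_{ij}\,\Xi_{ij}^2$, hence $F^{(n)}_{ij}=\exp(-b^{(n)}_{ij}\Xi_{ij}^2)$ almost surely with $b^{(n)}_{ij}>0$. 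Since $\Xi_{ij}$ is centered Gaussian with unit variance, $\{\curH_\alpha(\Xi_{ij})\}_{\alpha\geq 0}$ is a complete orthonormal system of $L^2_\curP(\Omega,\sigma(\Xi_{ij}))$, and as $F^{(n)}_{ij}$ is bounded, hence square integrable, it admits the $L^2$-convergent expansion $F^{(n)}_{ij}=\sum_\alpha f^{(n)}_{ij,\alpha}\,\curH_\alpha(\Xi_{ij})$ with $f^{(n)}_{ij,\alpha}=E\{F^{(n)}_{ij}\,\curH_\alpha(\Xi_{ij})\}$.

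The computational heart is the second stage. I would evaluate the coefficients through the Hermite generating identity $\sum_{n\geq 0}(t^n/\sqrt{n!})\,\curH_n(\xi)=e^{t\xi-t^2/2}$ together with a Gaussian integral: completing the square yields $E\{e^{-b\Xi^2}\,e^{t\Xi-t^2/2}\}=(1+2b)^{-1/2}\exp(-bt^2/(1+2b))$. Expanding the right-hand side in powers of $t$ shows that only even orders contribute, and matching with $\sum_n (t^n/\sqrt{n!})\,E\{e^{-b\Xi^2}\curH_n(\Xi)\}$ gives $E\{e^{-b\Xi^2}\curH_{2\alpha}(\Xi)\}=(\sqrt{(2\alpha)!}/\alpha!)\,(-b)^\alpha(1+2b)^{-\alpha-1/2}$ while $E\{e^{-b\Xi^2}\curH_{2\alpha+1}(\Xi)\}=0$; with $b=b^{(n)}_{ij}$ this is exactly Eq.~\eqref{eq10.8}, so Eq.~\eqref{eq10.7} is to be read over the even-order normalized Hermite polynomials $\curH_{2\alpha}(\Xi_{ij})$, the odd projections vanishing by parity. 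As a by-product, Parseval gives $\sum_\alpha(f^{(n)}_{ij,\alpha})^2=E\{e^{-2b^{(n)}_{ij}\Xi_{ij}^2}\}=(1+4b^{(n)}_{ij})^{-1/2}<+\infty$, confirming $\{f^{(n)}_{ij,\alpha}\}_\alpha\in\ell^2(\NN)$.

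For the third stage, given a target matrix $[\bfF]$ with $F_{ij}\in L^2_\curP(\Omega,\sigma(\Xi_{ij}))$ for each $(i,j)$, expand $F_{ij}=\sum_\alpha c_{ij,\alpha}\,\curH_\alpha(\Xi_{ij})$ with $\{c_{ij,\alpha}\}_\alpha\in\ell^2(\NN)$; Parseval in each fibre $L^2_\curP(\Omega,\sigma(\Xi_{ij}))$ then gives $E\{(F^{(n)}_{ij}-F_{ij})^2\}=\sum_\alpha(f^{(n)}_{ij,\alpha}-c_{ij,\alpha})^2$, and summing over the $N^2$ pairs, $E\{\Vert[\bfF^{(n)}]-[\bfF]\Vert_F^2\}=\sum_{i,j=1}^N\sum_\alpha(f^{(n)}_{ij,\alpha}-c_{ij,\alpha})^2$. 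Applying the density hypothesis~\eqref{eq10.9} (in its joint form over the finitely many pairs $(i,j)$, or entrywise combined with the fact that only $N^2$ entries are involved) with $\varepsilon_q=1/q$ and extracting a strictly increasing sequence $\{n_q\}$ admissible for all entries at once, one obtains $E\{\Vert[\bfF^{(n_q)}]-[\bfF]\Vert_F^2\}<1/q^2\to 0$, which is Eq.~\eqref{eq10.10}.

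The main obstacle I anticipate is twofold. On the computational side, the closed form of $f^{(n)}_{ij,\alpha}$ rests on carrying out the square-completion and power-series extraction carefully, and on the structural observation that the chaos expansion of $\exp(-b\Xi^2)$ is supported on even Hermite modes only; consequently the entrywise subclass that this model can actually reach is the even, positive, square-integrable functions of $\Xi_{ij}$, not all of $L^2_\curP(\Omega,\sigma(\Xi_{ij}))$, a point worth recording explicitly. On the structural side, the delicate step is the passage from the entrywise density statement~\eqref{eq10.9} to a \emph{single} index $n_q$ approximating all $N^2$ matrix entries simultaneously: since $b^{(n)}_{ij}$ depends on $(i,j)$ only through the variances $\sigma^{(n)}_{ij}$ while the scalar $\zeta_s^{(n)}$ is common to all entries, one must either interpret~\eqref{eq10.9} as a genuine joint density hypothesis or argue that the shared scalar degree of freedom suffices for simultaneous approximation. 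This is where the real content of the ``if'' clause resides, and where a fully rigorous treatment would demand the most care.
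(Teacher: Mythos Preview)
Your proposal is correct and follows essentially the same three-step architecture as the paper's own proof: rewrite $F^{(n)}_{ij}=\exp(-b^{(n)}_{ij}\Xi_{ij}^2)$, expand in the Hermite chaos of $\Xi_{ij}$ with the explicit coefficients~\eqref{eq10.8}, then invoke the $\ell^2$-density hypothesis~\eqref{eq10.9} together with Parseval to pass to Frobenius-norm $L^2$-convergence along a subsequence.

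Where you differ is in rigor of execution rather than strategy. The paper merely asserts that the coefficients are ``calculated by $f^{(n)}_{ij,\alpha}=E\{F^{(n)}_{ij}\curH_\alpha(\Xi_{ij})\}$, are given by Eq.~\eqref{eq10.8}'' without derivation; your generating-function computation actually supplies this. You also flag two points the paper passes over in silence: (i) that the chaos of $\exp(-b\Xi^2)$ is supported on \emph{even} Hermite modes only, so the reachable subclass is strictly smaller than all of $L^2_\curP(\Omega,\sigma(\Xi_{ij}))$; and (ii) that extracting a \emph{single} subsequence $\{n_q\}$ working simultaneously for all $N^2$ entries requires interpreting~\eqref{eq10.9} as a joint hypothesis, since the scalar $\zeta_s^{(n)}$ is shared across entries. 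Both observations are correct and sharpen the statement; the paper's Step~3 simply asserts the existence of such a subsequence without addressing this coupling.
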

%
\begin{proof}
The proof is developed in three steps.\\

\noindent\textit{Step 1}. Let $b^{\,(n)}_{ij}$ and $\Xi_{ij}$ defined in the Proposition.
Then, Eq.~\eqref{eq10.5} can be rewritten as
\begin{equation}\label{eq10.11}
F^{\,(n)}_{ij} = \exp \left ( - b^{(n)}_{ij} \, \Xi_{ij}^2 \right )  \, .
\end{equation}

\noindent\textit{Step 2}. For $\alpha\in\NN$, let $\curH_\alpha(\xi)$ be the normalized Hermite polynomials on $\RR$, defined by the orthonormality relation,
\begin{equation*}
E\{\curH_\alpha (\Xi_{ij})\, \curH_\beta (\Xi_{ij})\} = \int_\RR \curH_\alpha (\xi)\, \curH_\beta (\xi)
 \, \frac{1}{\sqrt{2\pi}} \exp(-\xi^2/2) \, d\xi  = \delta_{\alpha\beta} \quad , \quad \curH_0(\xi) = 1\, .
\end{equation*}
Since $\Xi_{ij}$ is standard Gaussian, and $F^{(n)}_{ij} \in L^2_\curP(\Omega,\sigma(\Xi_{ij}))$, the Hermite chaos expansion of $F^{(n)}_{ij}$ is given (see for instance \cite{Cameron1947,Xiu2002,Soize2004a}), by Eq.~\eqref{eq10.7}
where the coefficients, calculated by $f_{ij,\, \alpha}^{(n)} = E\{ F^{\,(n)}_{ij}\,\curH_\alpha(\Xi_{ij})\}$, are given by
Eq.~\eqref{eq10.8}.
{\color{black}
Since $0< F^{\,(n)}_{ij}\le 1$ a.s. by \eqref{eq10.5}–\eqref{eq10.11}, we have
}
\begin{equation}\label{eq10.12}
E\{ \, \Vert \, [\bfF^{\,(n)}] \,  \Vert^2_F \, \} = \sum_{i=1}^N \sum_{j=1}^N E \{ ( F^{\,(n)}_{ij})^2\} \,  < +\infty \, ,
\end{equation}
which shows that $[\bfF^{\,(n)}]$ is a second-order random matrix.
The random matrix $[\bfF^{\,(n)}]$ admits the matrix-valued chaos expansion,
\begin{equation}\label{eq10.13}
[\bfF^{\,(n)}] = \sum_{\alpha=0}^{\infty} [f^{(n)}_\alpha] \circ [\bfcurH_\alpha] \, ,
\end{equation}
where $\circ$ denotes the Hadamard (entrywise) product,  $[f^{(n)}_\alpha]\in\MM_N$ has entries
$[f^{(n)}_\alpha]_{ij} =  f_{ij\, , \, \alpha}^{(n)}$, and $[\bfcurH_\alpha]$ is the random matrix with entries
$[\bfcurH_\alpha]_{ij} = \curH_\alpha(\Xi_{ij})$. Due to Eq.~\eqref{eq10.12}, the  series defined by Eq.~\eqref{eq10.13} converges in $L^2_\curP(\Omega, \MM_N)$.\\

\noindent\textit{Step 3}.  If the family of sequences $\{f^{(n)}_{ij\, ,\, \alpha}\}_{n\in\NN}$ is dense in $\ell^2(\NN)$, then any square-summable sequence of Hermite coefficients can be approximated arbitrarily well, which implies,
\begin{equation*}
\forall F_{ij} \in L^2_\curP(\Omega,\sigma(\Xi_{ij})),\, \exists\, n \,\,\text{ such that } \,\,
{\color{black} \Vert F^{(n)}_{ij} - F_{ij} \Vert_{L^2_\curP}} < \varepsilon \, .
\end{equation*}
{\color{black}
Therefore, since for each $(i,j)$ the chaos basis $\{\curH_\alpha(\Xi_{ij})\}_{\alpha\ge 0}$ spans $L^2_\curP(\Omega,\sigma(\Xi_{ij}))$ and the coefficient family is dense by \eqref{eq10.9}, there exists a strictly increasing subsequence $\{n_q\}_q$ such that $[\bfF^{(n_q)}]$ converges in $L^2_\curP$ to any target $[\bfF]$ with $F_{ij}\in L^2_\curP(\Omega,\sigma(\Xi_{ij}))$ and entries in $]\, 0\, ,1]$ a.s., which proves \eqref{eq10.10}.
}
\end{proof}

\noindent\textit{Remark 4. Constructive choice of the sequence $b^{(n)}_{ij}$ supporting expressive stochastic capability}.
In this remark, we clarify a constructive  choice of the sequence $\{ b^{(n)}_{ij} \}_{n\in\NN}$  that supports  the expressive stochastic capability condition of Proposition~\ref{Proposition2}. The objective is to provide an explicit rule ensuring that the coefficient sequences  $\{ f^{(n)}_{ij\,,\,\alpha} \}_{\alpha \in \NN }$, defined
by Eq.~\eqref{eq10.8},
{\color{black}
belong to the Hilbert space $\ell^2(\NN)$ and exhibit fast tail decay.
}
For sufficiently large $n$, a suitable constructive choice for the sequence $b^{(n)}_{ij}$ is
\begin{equation} \label{eq10.14}
b^{(n)}_{ij} \sim \gamma\, n^{-r} \quad , \quad r > 0 \quad , \quad 0 < \gamma < +\infty\, .
\end{equation}
With this explicit definition, the Hermite chaos expansion coefficients from Eq.~\eqref{eq10.8} become
\begin{equation} \label{eq10.15}
f^{(n)}_{ij\, ,\,\alpha} \sim \frac{1}{\sqrt{\gamma}}\frac{\sqrt{(2\alpha)!}}{\alpha!}  \frac{(-1)^\alpha}{n^{r\alpha} \, (1/\gamma + 2/n^r)^{\alpha + 1/2}} \, ,
\end{equation}
Each sequence $\{ f^{(n)}_{ij\, ,\, \alpha} \}_{\alpha\in\NN}$ is clearly square-summable, hence belongs to $\ell^2(\NN)$.
The factorial growth in the numerator
{\color{black}
is dominated by the exponential-type decay in the denominator as $\alpha\to\infty$ (for fixed $n$), which guarantees convergence of $\sum_{\alpha\ge 0} |f^{(n)}_{ij\, ,\, \alpha}|^2$.
Moreover, as $n$ varies, the family $\big\{\{ f^{(n)}_{ij\, ,\, \alpha}\}_{\alpha\in\NN}\big\}_{n\in\NN}$ is nontrivial and provides a flexible range of coefficient profiles (not confined to a single fixed finite-dimensional subspace), which is sufficient for the entrywise $L^2$-approximation scheme used in Proposition~\ref{Proposition2}.
In summary, the choice $b^{(n)}_{ij} = \gamma\,n^{-r}$ supplies a deterministic, explicit rule yielding square-summable Hermite coefficients with fast tail decay and parameter continuity, thereby supporting the expressive stochastic encoding employed in Proposition~\ref{Proposition2}.
}.\\

\noindent\textit{Remark 5. Behavior of the hyperparameter $\zeta_s^{(n)}$}.
We examine a particular asymptotic behavior of $a^{(n)}_{ij}$ based on the Gaussian structure of the random vector $\bfU^{(n)} = (U^{(n)}_1, \ldots, U^{(n)}_N)$, where $U^{(n)}_j = U^m(\bfx^j_{\bftheta^{\,(n)}}; \bftheta^{\,(n)})$. Let $[C_\bfU^{(n)}]$ denote its covariance matrix. Since $\bfU^{(n)}$ is centered, we have
$(\sigma_{ij}^{(n)})^2 = \mathrm{Var}(U_i^{(n)} - U_j^{(n)}) = [C_\bfU^{(n)}]_{ii} + [C_\bfU^{(n)}]_{jj} - 2 \,[C_\bfU^{(n)}]_{ij}$, and
therefore,
\begin{equation*}
a^{(n)}_{ij} = \frac{1}{2\,\sigma^2_{\bfS_{\bftheta^{\,(n)}}}}   [C_\bfU^{(n)}]_{ii} + [C_\bfU^{(n)}]_{jj} - 2 \,[C_\bfU^{(n)}]_{ij}
                    \, .
\end{equation*}
If $[C_\bfU^{(n)}]_{ii} \to C_\infty >$ and $[C_\bfU^{(n)}]_{ij} \to C_\infty$ for all $i \ne j$ as $n \to +\infty$, then $a_{ij}^{(n)} \to 0$ as $n \to +\infty$.
This condition means that the covariance matrix $[C_\bfU^{(n)}]$ tends toward the rank-one matrix $C_\infty\, \mathbf{1}\otimes \mathbf{1}$, where all entries converge to the same positive constant. Equivalently, the differences $U_i^{(n)} - U_j^{(n)}$ converge to zero
{\color{black} in $L^2_\curP$}, indicating a perfectly correlated limit.
If such a condition holds, then
\begin{equation}
(\sigma_{ij}^{(n)})^2 \to 0 \quad \Rightarrow \quad a_{ij}^{(n)} \to 0 \quad \text{as }\,\, n \to +\infty \, .
\end{equation}
Thus, assuming that $a_{ij}^{(n)} \sim n^{-p}$ as $n \to +\infty$ with $p > r$, we obtain
$(\zeta_s^{(n)})^2 \sim \gamma^{-1} \, n^{r-p} \to 0$ as $n \to +\infty$,
which implies that $\zeta_s^{(n)}$ should be a decreasing function of $n$.
This shows that, in the asymptotic regime where the covariance matrix $[C_\bfU^{(n)}]$ degenerates to a constant rank-one structure, the condition for expressive stochastic capability (i.e., the decay of $b_{ij}^{(n)}$) leads naturally to a vanishing $\zeta_s^{(n)}$. The practically observed convergence behavior of the optimization algorithm is consistent with a decreasing sequence $\zeta_s^{(n)}$.
\subsection{Differentiability of the solution of the constrained random neural network equation}
\label{Section10.4}
{\color{black} As explained in Step~3} of Section~\ref{Section8}-(iii), the use of the local optimization based on projected gradient descent requires the differentiability  of the solution of the constrained random nonlinear neural network equation.
Here, we investigate the differentiability, with respect to the parameter vector $\bftheta$, of the random vector $\hat\bfA_\bftheta$, defined implicitly by Eqs.~\eqref{eq7.4} to \eqref{eq7.6}, as the solution of the nonlinear random equation,
\begin{equation*}
\hat\bfA_\bftheta = \hat\bff ( \, [\hat\bfW_\bftheta]\,\hat\bfA_\bftheta + \hat\bb_\bftheta )\, ,
\end{equation*}
where $\hat\bff$ is the element-wise hyperbolic tangent function, $\hat\bff(\bfz)=\tanh(\bfz)$.
As stated
{\color{black} in Step~3 of} Section~\ref{Section8}-(iii), we treat the matrices $[\hat O_{\bftheta_0}]$, $[O_{\bftheta_0}^\pin]$, and  $[O_{\bftheta_0}^\pout]$  as fixed (that is, independent of $\bftheta_t$), where $\bftheta_0$ is defined in Eq.~\eqref{eq8.13}. Therefore, we have,
\begin{equation*}
[\hat\bfW_\bftheta] = [\hat O_{\bftheta_0}]\, [\bfW^{\psp}_\bftheta]\, [\hat O_{\bftheta_0}]^T \quad , \quad
\hat \bb_\bftheta = [\hat O_{\bftheta_0}]\,[\bfW^{\psp}_\bftheta]\,[O^{\,\pin}_{\bftheta_0}]^T\, \xx + \hat \bfb_\bftheta \quad , \quad
\hat \bfb_\bftheta = [\hat h_{\bftheta_0}]\, \bfbeta
\quad , \quad [\hat h_{\bftheta_0}] = [\hat O_{\bftheta_0}]\,[h] \, .
\end{equation*}
Let us introduce the implicit function,
\begin{equation*}
\bfg(\hat\bfA_\bftheta,\bftheta) = \hat\bfA_\bftheta - \hat\bff(\,[\hat\bfW_\bftheta]\,\hat\bfA_\bftheta + \hat\bb_\bftheta ) \, ,
\end{equation*}
which defines the dependence of $\hat\bfA_\bftheta$ on $\bftheta$ through  the condition $\bfg(\hat\bfA_\bftheta,\bftheta) = \bfzero$.
Since $\hat\bff=\tanh$ is infinitely differentiable, the differentiability of $\hat\bfA_\bftheta$ depends entirely on the invertibility of the Jacobian matrix of $\bfg$ with respect to $\hat\bfA_\bftheta$.
The Jacobian $[\bfJ_\bfg]\in\MM_{\hat n}$ is given by
\begin{equation*}
[\bfJ_\bfg] = \frac{\partial\bfg}{\partial\hat\bfA_\bftheta} = [I_{\hat n}] - [\bfD_{\,\hat\bff}]\,[\hat\bfW_\bftheta] \, ,
\end{equation*}
where $[\bfD_{\,\hat\bff}]$ is the diagonal matrix containing the derivatives of $\hat\bff$, defined by
\begin{equation*}
[\bfD_{\,\hat\bff}]_{ii} = 1 - \tanh^2 ( (\, [\hat\bfW_\bftheta]\,\hat\bfA_\bftheta + \hat\bb_\bftheta)_i ), \quad i=1,\dots,\hat n \, .
\end{equation*}
This makes $[\bfD_{\,\hat\bff}]$ a strictly positive diagonal matrix, with entries satisfying $0 < [\bfD_{\,\hat\bff}]_{ii}\leq 1$ for all $i$.
Hence, the Jacobian $[\bfJ_\bfg]$ is invertible if and only if the spectral radius satisfies
\begin{equation*}
\rho (\, [\bfD_{\,\hat\bff}]\,[\hat\bfW_\bftheta]\,)\, < \,1 \, .
\end{equation*}
This condition ensures local uniqueness and differentiability via the implicit function theorem.
Given the structure of $[\hat\bfW_\bftheta]$ (symmetric, with zero diagonal and strictly positive off-diagonal entries), this spectral condition is typically satisfied under mild assumptions on the parameter bounds.
By invoking the implicit function theorem, we conclude that $\hat\bfA_\bftheta$ is locally differentiable with respect to $\bftheta$.
Moreover, the theorem provides the explicit expression for the derivative,
\begin{equation*}
\frac{\partial \hat\bfA_\bftheta}{\partial \bftheta} = -[\bfJ_\bfg]^{-1}\,\frac{\partial \bfg}{\partial \bftheta}
\quad , \quad
\frac{\partial \bfg}{\partial \bftheta} = -[\bfD_{\,\hat\bff}]\, \left(\frac{\partial [\hat\bfW_\bftheta]}{\partial \bftheta}\hat\bfA_\bftheta + \frac{\partial \hat\bb_\bftheta}{\partial \bftheta}\right)\,.
\end{equation*}
Since both $[\hat\bfW_\bftheta]$ and $\hat\bb_\bftheta$ are differentiable with respect to $\bftheta$,
it follows that $\hat\bfA_\bftheta$ is smoothly differentiable with respect to $\bftheta$.\\

\noindent\textit{Remark 6. Probabilistic interpretation and existence of solutions}.
Since $\hat\bfA_\bftheta$ is a random vector, the differentiability with respect to $\bftheta$ described above holds in an almost-sure sense. Consequently, the derivatives obtained via the implicit function theorem are themselves random variables, allowing for the study of statistical quantities.
The differentiability analysis relies on the existence of at least one solution, which is guaranteed by the Brouwer fixed-point theorem, as discussed in Section~\ref{Section7}-(iv-1).\\
%
\section{Numerical Illustration: Random neural architecture on a torus as a compact, boundaryless, multiply-connected  manifold}
\label{Section11}
%
This numerical illustration is not intended to demonstrate superior predictive accuracy over existing models, but to illustrate the expressive flexibility and stochastic behavior induced by the latent generative architecture. The examples are selected to show how topological and anisotropic parameters affect the distributional properties of the outputs, consistent with the model goal to internalize structural uncertainty rather than minimize empirical error.
\subsection{Definition of a torus as the compact, boundaryless, multiply-connected 2D manifold $\curS$ immersed in $\RR^3$}
\label{Section11.1}
Let $\curS \subset \RR^3$ be a torus parameterized by $\bfx(\uc,\vc) = (x_1(\uc,\vc),x_2(\uc,\vc),x_3(\uc,\vc))$, with
\begin{align*}
x_1(\uc,\vc) &= (R + r \cos \vc)\cos \uc \, ,\\
x_2(\uc,\vc) &= (R + r \cos \vc )\sin \uc \, ,\\
x_3(\uc,\vc) &= r \sin \vc  \, ,
\end{align*}
for $\uc$ and $\vc$ in $[0, 2\pi]$, where the major radius is $R = 2$ and the minor radius is $r = 0.7$. The surface $\curS$ is meshed using the finite element method, as explained in Section~\ref{Section2.1}-(v), with $80$ subdivisions along the major circle and $24$ subdivisions along the minor circle. This results in a regular grid with $n_o= 1920$ nodes and $n_\pelem = 3840$ triangular elements with three nodes each, forming a structured mesh $\curS_h$ that approximates the compact, boundaryless, multiply-connected manifold $\curS$, as shown in Fig.~\ref{figure1}.
\begin{figure}[H] 
\centering
\includegraphics[width=9.0cm]{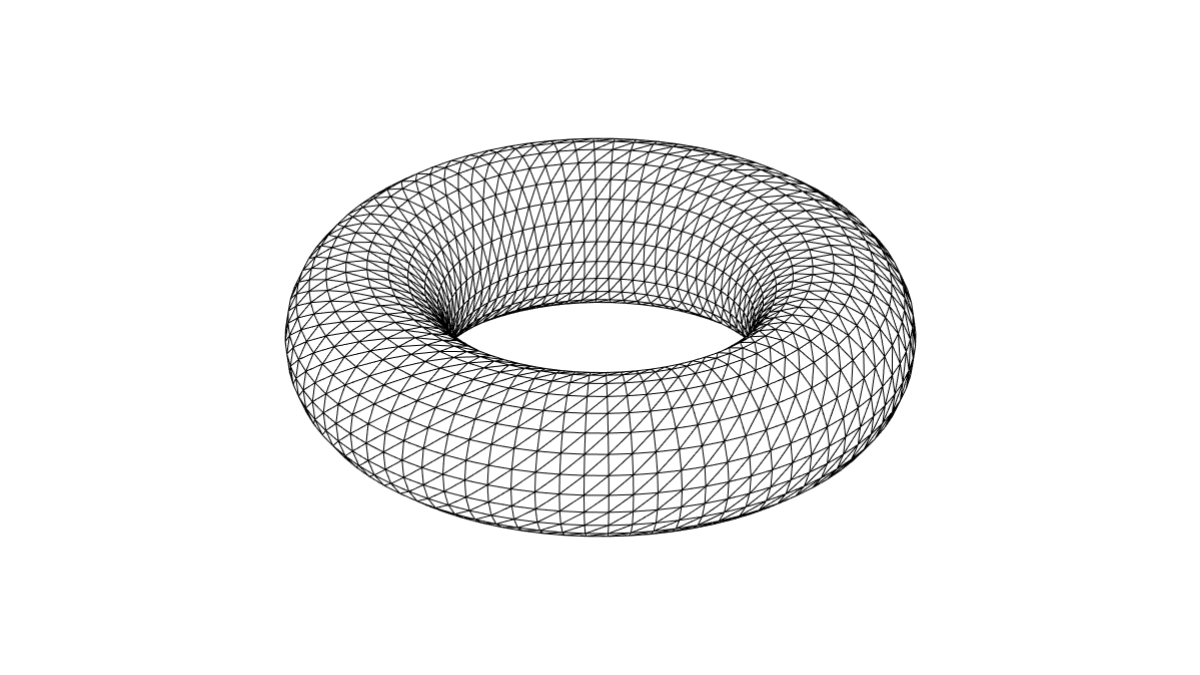}
\caption{Structured mesh $\curS_h$ approximating the compact, boundaryless, multiply-connected manifold $\curS$ (hidden-line removed).}
\label{figure1}
\end{figure}
\subsection{Construction of the functions $\hh^{(k,j)}$ on $\curS$}
\label{Section11.2}
Two models for the Hilbert basis in $L^2(\curS)$ are considered: a non-smooth differentiable model and a smooth differentiable ones.
It should be noted that
{\color{black} from the standpoint of mathematical analysis, it is sufficient that the functions $\bfx\mapsto h_{n_h}^{(1)}(\bfx)$ and
$\bfx\mapsto h_{n_h}^{(2)}(\bfx)$ used in Eq.~\eqref{eq2.4} satisfy Eq.~\eqref{eq2.5}}
(which will be the case  for the non-smooth differentiable functions described below). Therefore, the smooth differentiable model is not strictly necessary.
For the numerical illustration presented here, we adopt the non-smooth differentiable model. Nonetheless,  we briefly compare the results obtained with those obtained using the smooth differentiable model. These two models are defined below.
Let {\color{black} $\{\bfx_o^i , i =1,\ldots, n_o\}$} denote the set of nodes of the triangulated finite element mesh $\curS_h$ of $\curS$, where each node {\color{black} $\bfx_o^i$} corresponds to the parameter values $(\uc_i,\vc_i)$, with $\uc_i, \vc_i \in [0, 2\pi]$ such that {\color{black} $\bfx_o^i = \bfx(\uc_i,\vc_i)$}. For the construction of random neural architectures on $\curS$, the discretization of the functions
$\{\bfx\mapsto \hh^{(k,j)}(\bfx), k=1,2; j=1,\ldots, n_h\}$ (introduced in Section~\ref{Section2.1}-(iv)) at the mesh nodes
{\color{black} $\{ \bfx_o^i , i =1,\ldots, n_o \}$} of $\curS_h$, is represented by the matrices $[\hh_1]$ and $[\hh_2]$ in $\MM_{n_o,n_h}$. These matrices  must satisfy the following properties related to Eq.~\eqref{eq2.6}, for $k\in\{1,2\}$,
\begin{equation}\label{eq11.1}
[\hh_k]^T\, [\hh_k] = [I_{n_h}] \quad , \quad \frac{1}{n_o}\sum_{i=1}^{n_o} [\hh_k]_{ij} = 0 \,\, , \,\, j=1,\ldots, n_h  \, .
\end{equation}

\noindent\textit{(i) Non-smooth differentiable case}.
The procedure is as follows:
Generate two random matrices with independent standard normal entries. Construct a projection that removes the mean from each column, ensuring each column is mean-free. Apply this projection to the matrices, and orthonormalize the resulting mean-free columns using an economy-size QR decomposition. This produces two independent sets of orthonormal, mean-free vectors satisfying Eq.~\eqref{eq11.1}.\\

\noindent\textit{(ii) Smooth differentiable case}.
For the smooth differentiable case, we construct these bases using trigonometric products,
leveraging the natural periodicity inherent to the torus parameterization.
Specifically, for integers $m_\tor, n_\tor \geq 1$, we define the set of functions
\begin{equation*}
\phi_{m_\tor,n_\tor}^{(\ell)}(u,v) =
\begin{cases}
 c\,  \cos(m_\tor\, \uc)\, \cos(n_\tor\, \vc), & \ell = 1, \cr
 c\,  \sin(m_\tor\, \uc)\, \cos(n_\tor\, \vc), & \ell = 2, \cr
 c\,  \cos(m_\tor\, \uc)\, \sin(n_\tor\, \vc), & \ell = 3, \cr
 c\,  \sin(m_\tor\, \uc)\, \sin(n_\tor\, \vc), & \ell = 4, \cr
\end{cases}
\end{equation*}
where the constant $c$ is chosen to satisfy the normalization condition
where $\int_0^{2\pi} \int_0^{2\pi} \vert \phi_{m_\tor,n_\tor}^{(\ell)}(\uc,\vc)\vert^2 d\uc\, d\vc  = 1$, yielding $c = 1/\sqrt{\pi}$.
For each node $i$, we evaluate these functions at $(\uc_i,\vc_i)$ to build columns of the matrices. The number $n_h$ fixes the number of basis functions, and the pairs $(m_\tor,n_\tor)$ are systematically incremented, along with the index $\ell$, until $n_h$ columns are assembled.
To remove the mean (enforce zero average) for each basis function over the mesh nodes, each column is centered by subtracting its empirical mean,
\begin{equation*}
\forall j \in \{1,\dots,n_h\}, \qquad
[\hh_1]_{:,j} \gets [\hh_1]_{:,j} - \frac{1}{n_o}\sum_{i=1}^{n_o} [\hh_1]_{ij} \, .
\end{equation*}
The same procedure is applied for $[\hh_2]$, but starting with a different ordering or phase (i.e., beginning with $\sin(m_\tor\, \uc)\, \cos(n_\tor\, \vc)$).
For $k\in\{1,2\}$, to ensure orthonormality of the columns with respect to the discrete inner product over the mesh nodes, we apply an economy-size QR decomposition,
$[\hh_k] = [Q_k] \, [R_k]$, and retain only the factor $[Q_k]$. Thus, Eq.~\eqref{eq11.1} is satisfied.
\subsection{Values of the fixed parameters for the latent Gaussian random field indexed by the manifold $\curS$}
\label{Section11.3}
The value of $\tau_0$, used in Eq.~\eqref{eq2.2}, is set to $1$.
For training the random neural network, the number $n_\psim$ of independent realizations $\{\bfeta^\ell , \ell=1,\ldots ,n_\psim\}$ of the $\RR^m$-valued random variable $\bfH$ defined by Eqs.~\eqref{eq2.15} and \eqref{eq2.16}, is set to $100$.
The value of $m$, defined in Section~\ref{Section2.1}-(viii), is fixed to $100$.
To illustrate the covariance eigenvalue problem associated with the random field $U^m$, we consider the following value of the hyperparameter  $\bftheta$, as defined in Eq.~\eqref{eq5.7}, with $h_1= 0.087$, $h_2 = 0.058$, $\zeta_s = 0.0668$, and $\bfbeta^{(1)}_{n_h} = \bfbeta^{(2)}_{n_h} = \bfzero$ (which corresponds to $n_h=0$).
Fig.~\ref{figure2a} shows the graph $\alpha\mapsto\lambda_\alpha$ on a $\log_{10}$ scale, representing the eigenvalues of the covariance matrix $[C_\bfU]$ defined by Eq.~\eqref{eq2.13}, while  Fig.~\ref{figure2b} shows  the graph $m\mapsto\err_\pPCA(m)$ defined by Eq.~\eqref{eq2.17}. Regarding the value of $m$ and the error function induced by the truncation, we refer the reader to the analysis presented in Section~\ref{Section2.1}-(viii). In fact, choosing $m=100$ is not an approximation in the context of the proposed
random neural architecture; rather, it is a construction that enables control over statistical fluctuations in the latent random field.
The choice of $m$ must also be correlated with the number $N$ of neurons. For $N = 200$, which is the configuration we detail in this application, the choice $m = 100$ is appropriate. We conducted a parametric study and observed that the performance of supervised learning degrades when $m \ll 100$, but remains stable for $m > 100$.
\begin{figure}[H]
    \centering
    \begin{subfigure}[b]{0.40\textwidth}
        \centering
        \includegraphics[width=\textwidth]{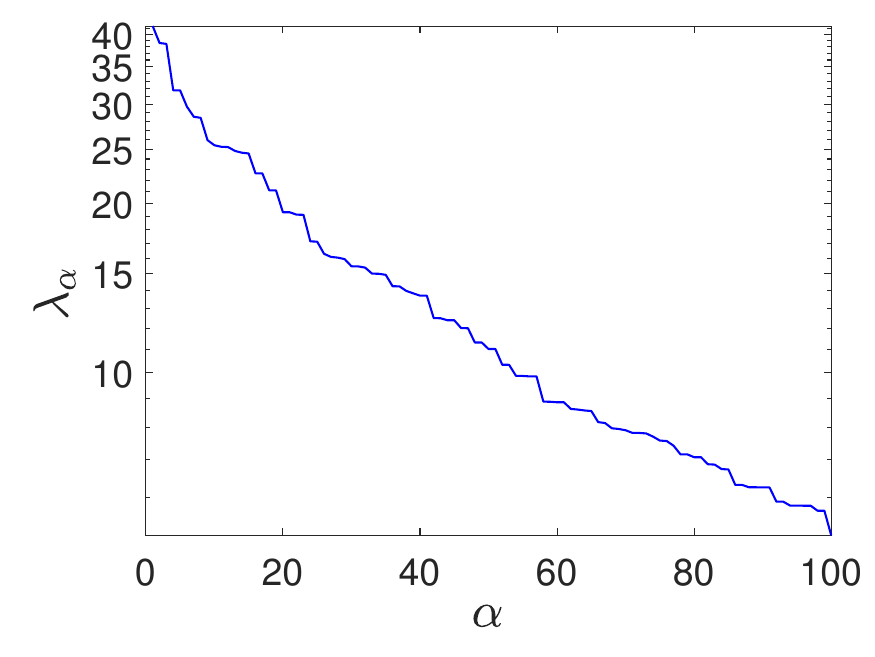}
        \caption{Eigenvalues of matrix $[C_\bfU]$ on a  $\log_{10}$ scale.}
        \label{figure2a}
    \end{subfigure}
    \begin{subfigure}[b]{0.40\textwidth}
        \centering
        \includegraphics[width=\textwidth]{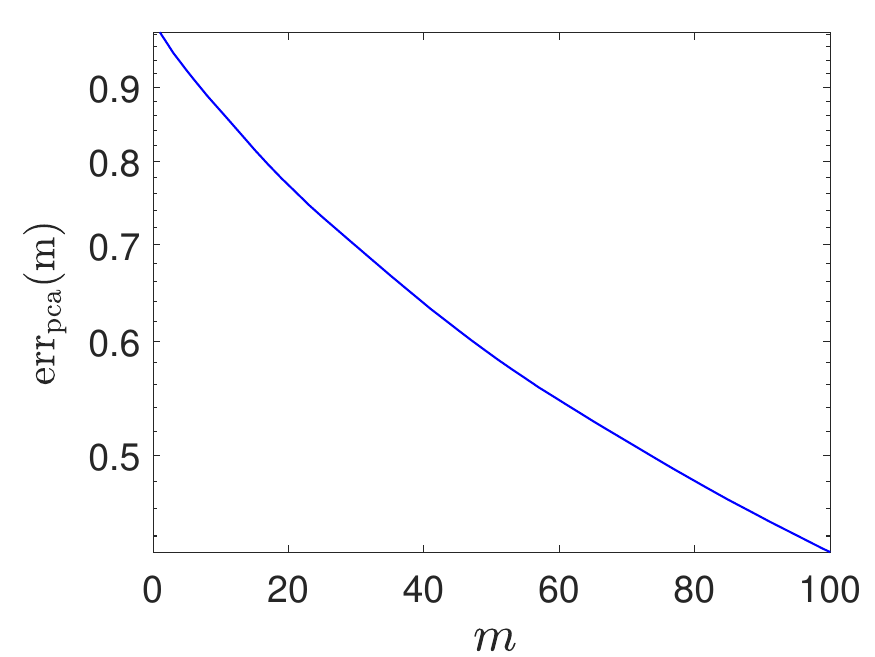}
        \caption{Graph $m\mapsto\perr_\pPCA(m)$ defined by Eq.~\eqref{eq2.17}.}
        \label{figure2b}
    \end{subfigure}
\caption{Covariance eigenvalues and truncated error of the reduced-order representation $U^m$ of random field $U$.}
    \label{figure2}
\end{figure}
\subsection{Parameter values for the generation of the random neural architecture and illustration of a realization}
\label{Section11.4}
The number $N$ of neurons stochastically  located on the manifold $\curS_h$ is set to $200$.
These random  locations result from sampling the inhomogeneous Poisson process introduced in Section~\ref{Section3}, where the number $M$ of candidate points is set to $M=2 n_\pelem = 7680$.
The number $n_\pin$ of input neurons and $n_\pout$ of output neurons, whose random locations are selected as explained in Section~\ref{Section4.1}, based on the extreme values of the latent gaussian random field, are set to $n_\pin= 20$ and $n_\pout = 100$. Thus, the total number $N=200$ neurons is decomposed in $20$ input neurons, $100$ output neurons, and $80$ hidden neurons.
The sparsity binary mask for the random neural graph topology is constructed as explained in Section~\ref{Section4.2}, where the percentile threshold  parameter $\tau_\pprc$ is set to $75\%$, which means that only $25\%$ of inter-neurons connections are retained. Since the total number of possible connections (excluding self-connections) is $39\, 800$, the number of retained unique connections is $4975$.
For each given value of the hyperparameter $\bftheta$, the random weights for the sparsified connectivity in the geometric-probabilistic neural graph are constructed as explained in Section~\ref{Section5}.

To graphically illustrate a given realization of the random neural architecture, we consider the following value of the hyperparameter  $\bftheta$, defined by Eq.~\eqref{eq5.7}, with $h_1= 0.087$, $h_2 = 0.058$, $\zeta_s = 0.0668$, and $\bfbeta^{(1)}_{n_h} = \bfbeta^{(2)}_{n_h} = \bfzero$ (which corresponds to $n_h=0$).
Fig.~\ref{figure3a} shows one realization of the anisotropic latent Gaussian random field $\{ U^m(\bfx),\bfx\in\curS_h\}$, while Fig.~\ref{figure3b} displays the corresponding realization of random neuron locations on $\curS_h$, superimposed on the field realization.
Figs.~\ref{figure4a} and \ref{figure4b} show the realization of the random sparse connectivity and its corresponding density map on the manifold $\curS_h$, associated with the realization of the latent random field $U^m$ and the corresponding realization of random neuron locations on  $\curS_h$, as shown in Fig.~\ref{figure3}. The sparse connectivity corresponds to the realization of the random edge-mask matrix defined by Eq.~\eqref{eq4.12}, but drawn along the geodesics defined by Eq.~\eqref{eq4.8}, which are used to construct the random edge-mask matrix (see Eqs.~\eqref{eq4.10} to \eqref{eq4.13}. It should be noted that the geodesic paths between neurons shown in Fig.~\ref{figure4a} are not perfectly represented due to the triangulation of the manifold and the fact that the neurons do not coincide with the mesh nodes of the surface $\curS_h$. This is simply an artifact, but the generation of the connectivities is fully correct.
\begin{figure}[H]
    \centering
    \begin{subfigure}[b]{0.48\textwidth}
        \centering
        \includegraphics[width=\textwidth]{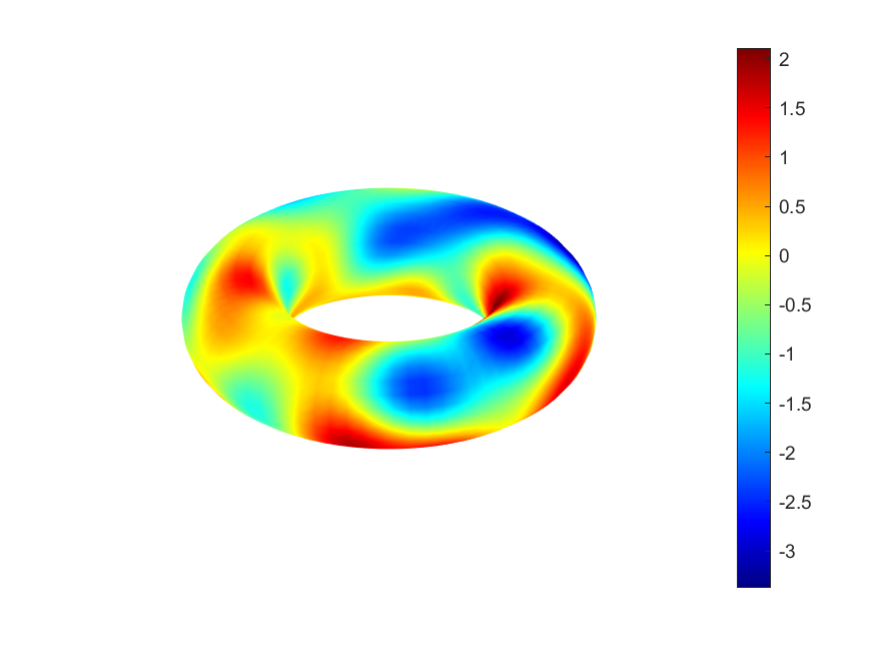}
        \caption{Realization of the anisotropic latent random field $U^m$.}
        \label{figure3a}
    \end{subfigure}
    \begin{subfigure}[b]{0.48\textwidth}
        \centering
        \includegraphics[width=\textwidth]{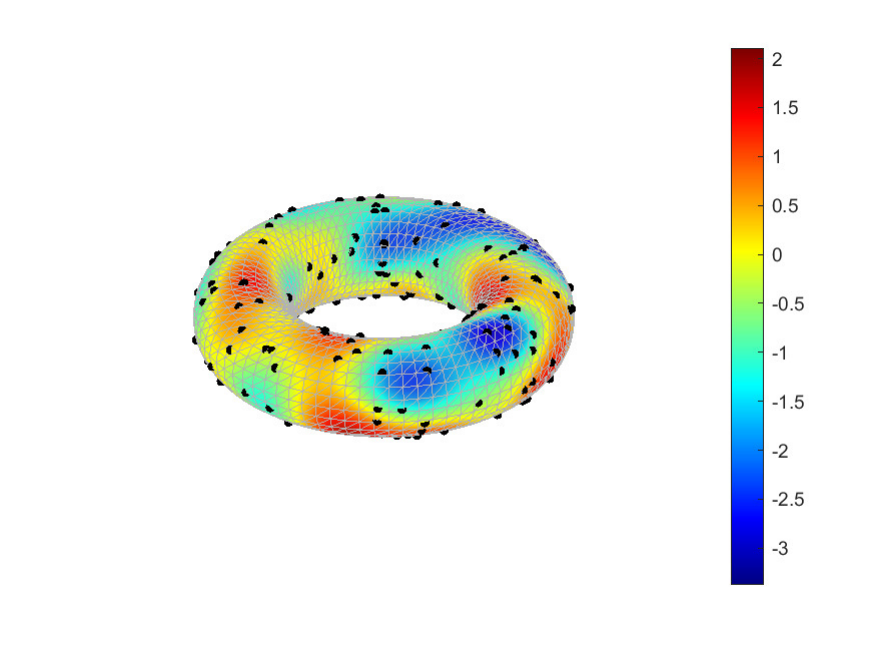}
        \caption{Realization of the random neuron locations on $\curS_h$.}
        \label{figure3b}
    \end{subfigure}
\caption{Realization of the latent random field $U^m$ and the corresponding realization of the random neuron locations on the manifold $\curS_h$, superimposed on the field.}
    \label{figure3}
\end{figure}
\begin{figure}[H]
    \centering
    \begin{subfigure}[b]{0.48\textwidth}
        \centering
        \includegraphics[width=\textwidth]{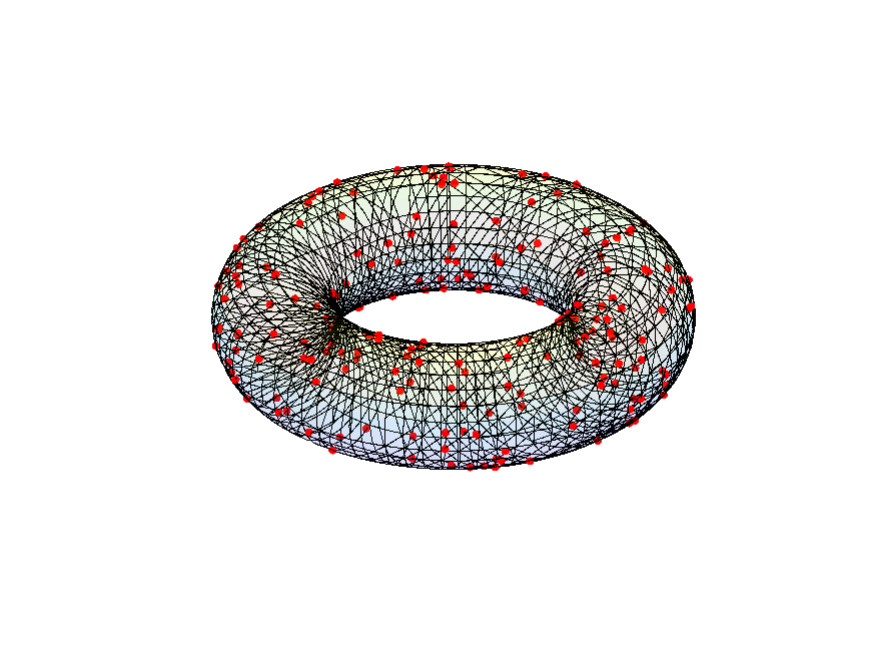}
        \caption{Realization of the random sparse connectivity.}
        \label{figure4a}
    \end{subfigure}
    \begin{subfigure}[b]{0.48\textwidth}
        \centering
        \includegraphics[width=\textwidth]{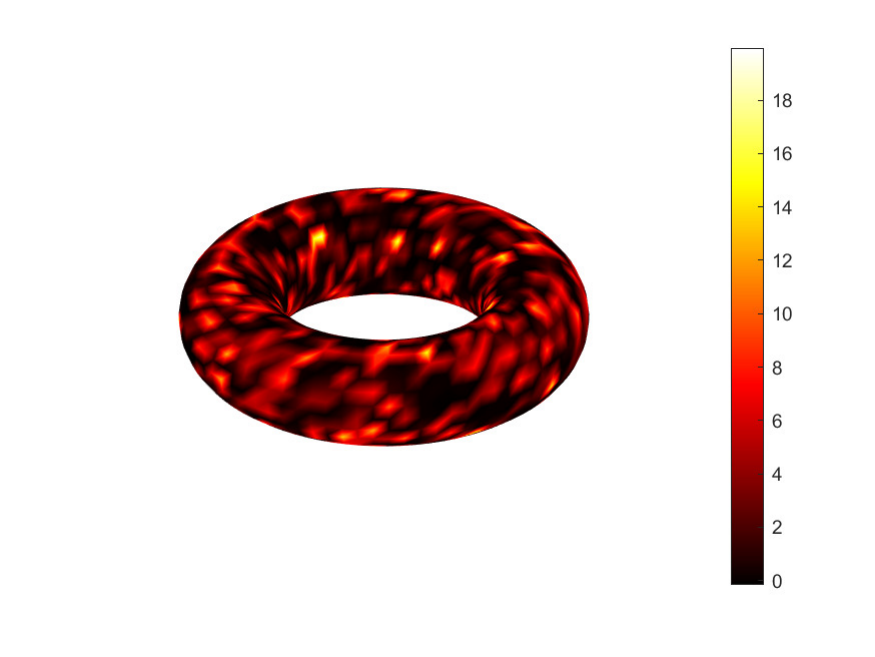}
        \caption{Realization of the sparse connectivity density map.}
        \label{figure4b}
    \end{subfigure}
\caption{Realization of the random sparse connectivity and its corresponding density map on the manifold $\curS_h$, associated with the realization of the latent random field $U^m$ and the corresponding realization of random neuron locations on  $\curS_h$, as shown in Fig.~\ref{figure3}.}
    \label{figure4}
\end{figure}
\subsection{Training the vector-valued hyperparameter from network supervision}
\label{Section11.5}
%
\noindent\textit{(i) Defining the training and test datasets}.
The supervised learning is conducted as described in Section~\ref{Section8}. The training dataset $\{ ( \xx^i , \yy^i) , i=1,\ldots,n_d\}$ consists of $n_d=3000$ samples in $\RR^{20}\times \RR^{100}$, while the test dataset
$\{ ( \xx^i_\testp , \yy^i_\testp) , i=1,\ldots,n_\testp\}$ consists of $n_\testp=600$ samples. For a given input $\xx$ in $\RR^{20}$, the probabilistic model defines the random output $\YY|\XX=\xx$ as an $\RR^{100}$-valued random variable that is highly non-Gaussian.
Both datasets are generated from the same complex probabilistic model using non linear mappings and polynomial chaos expansions. These datasets can be obtained upon request to the author. Note that,  as previously explained, for both datasets the only available information consists, for each $\xx^i$ (or $\xx^i_\testp)$, of a single realization $\yy^i$ (or $\yy^i_\testp$) of the random vector $\YY|\XX=\xx^i$ (or $\YY|\XX=\xx^i_\testp$). This constitutes an additional difficulty for training the neural network model.\\

%
\noindent\textit{(ii) Additional details on model and algorithm choices}.
{\color{black} The dimension of the vector $\bfbeta$ is chosen as} $n_b = \hat n = N - n_\pin = 200 - 20 = 180$.
The fixed-point iteration defined
{\color{black} in Eq.~\eqref{eq7.10} is used}
to solve the random nonlinear neural network equation, with a relaxation parameter $\alpha=0.5$ and a tolerance $\varepsilon = 0.01$. Convergence was consistently achieved within a few dozen iterations. We also testes smaller values of $\varepsilon$, which increased the number of iterations but did not significantly affect {\color{black} the final optimized}  value $\bftheta_t^\optp$ of the hyperparameter $\bftheta_t$. In all cases, the number of iterations remainded below $500$.
In cases $n_h \ge 1$, we  use the smooth differentiable model of the functions $\hh^{(k,j)}$ on $\curS$, as defined in Section~\ref{Section11.2}-(ii).\\

\noindent\textit{(iii) Solving the trial-based global optimization problem}.
The trial-based global optimization is carried out as described in Step~2 of Section~\ref{Section8}-(iii).
We recall that, for this step, the hyperparameter is $\bftheta_\ptrial = (h_1,h_2,\zeta_s)$, with $n_h=0$.
A computational pre-analysis was conducted on the grid $\curC_\ptrial = [0.02\, ,0.10]^3$, defined in Eq.~\eqref{eq8.5},
in order to identify a region where a finer grid is to be used. This refined domain is
$\curC_\ptrial = [0.065\, , 0.120]\times [0.034\, , 0.094]\times [ 0.040\, , 0.107]$.
The finer grid $\curC_\pgrid \subset \curC_\ptrial$ consists of $6^3= 216$ nodes, denoted $\bftheta_\ptrial^{\, j}$, with $n_\ppgrid = 216$.
Note that each grid node in $\curC_\pgrid$ satisfies the positivity constraints $h_1 > 0$, $h_2 > 0$, and $\zeta_s > 0$ (see Eq.~\eqref{eq5.7}), so $\bftheta_\ptrial$ automatically satisfies all required constraints. The optimal value $\bftheta_\ptrial^\optp$ obtained for $\bftheta_\ptrial$ is
\begin{equation}\label{eq11.2}
\bftheta_\ptrial^\optp = (h_{1,\ptrial}^\optp = 0.087 \,\, , \,\, h_{2,\ptrial}^\optp = 0.058 \,\, , \,\, \zeta_{s,\ptrial}^\optp = 0.0668 ) \, .
\end{equation}
Fig.~\ref{figure5a} shows the loss function $j\mapsto \curJ( \bftheta_\ptrial^j )$ for $j=1,\ldots , n_\ppgrid$, where
$\curJ(\bftheta_\ptrial^j)$ is computed using Eqs.~\eqref{eq8.2} with \eqref{eq8.3}.
For $\zeta_s = \zeta_{s,\ptrial}^\optp$, Fig.~\ref{figure5b} shows the graph of the function $(h_1,h_2)\mapsto \curJ( h_1,h_2,\zeta_{s,\ptrial}^\optp)$ over  the domain $[0.076\, , 0.098]\times [0.046\, ,0.07]$, which is centered at the optimal point $(h_{1,\ptrial}^\optp , h_{2,\ptrial}^\optp)$. It can be observed  that the function is smooth in $(h_1,h_2)$ in a neighborhood of the optimal solution.
Fig.~\ref{figure6} displays the graph of the function $k\mapsto \bfbeta_{\ptrial,k}^\optp$ for $k\in\{1,\ldots ,n_b\}$,
{\color{black} with $n_b = \hat n =N - n_\pin = 180$}.
\begin{figure}[H]
    \centering
    \begin{subfigure}[b]{0.40\textwidth}
        \centering
        \includegraphics[width=\textwidth]{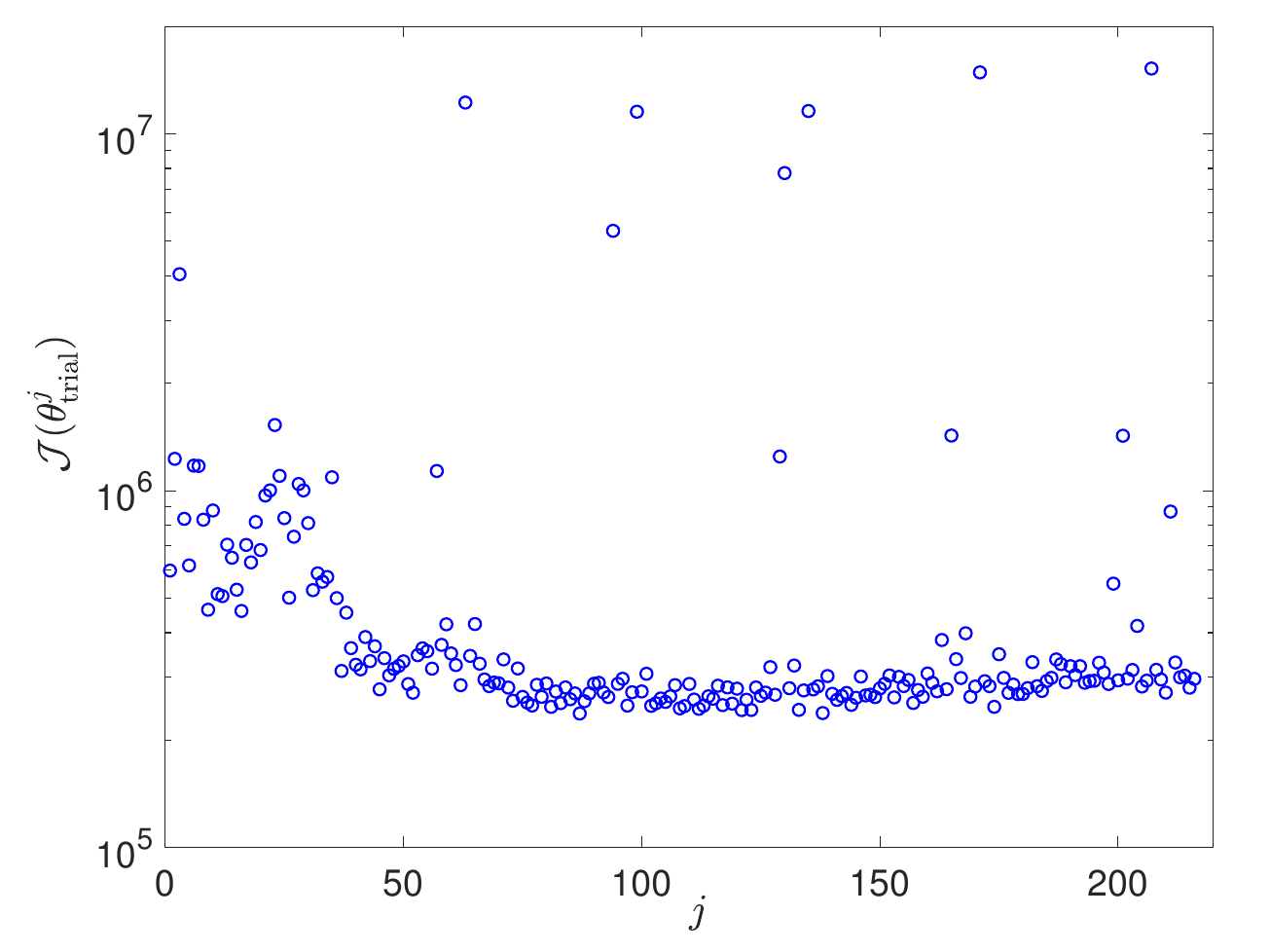}
        \caption{Loss function $j\mapsto \curJ( \bftheta_\ppptrial^j )$.}
        \label{figure5a}
    \end{subfigure}
    \begin{subfigure}[b]{0.40\textwidth}
        \centering
        \includegraphics[width=\textwidth]{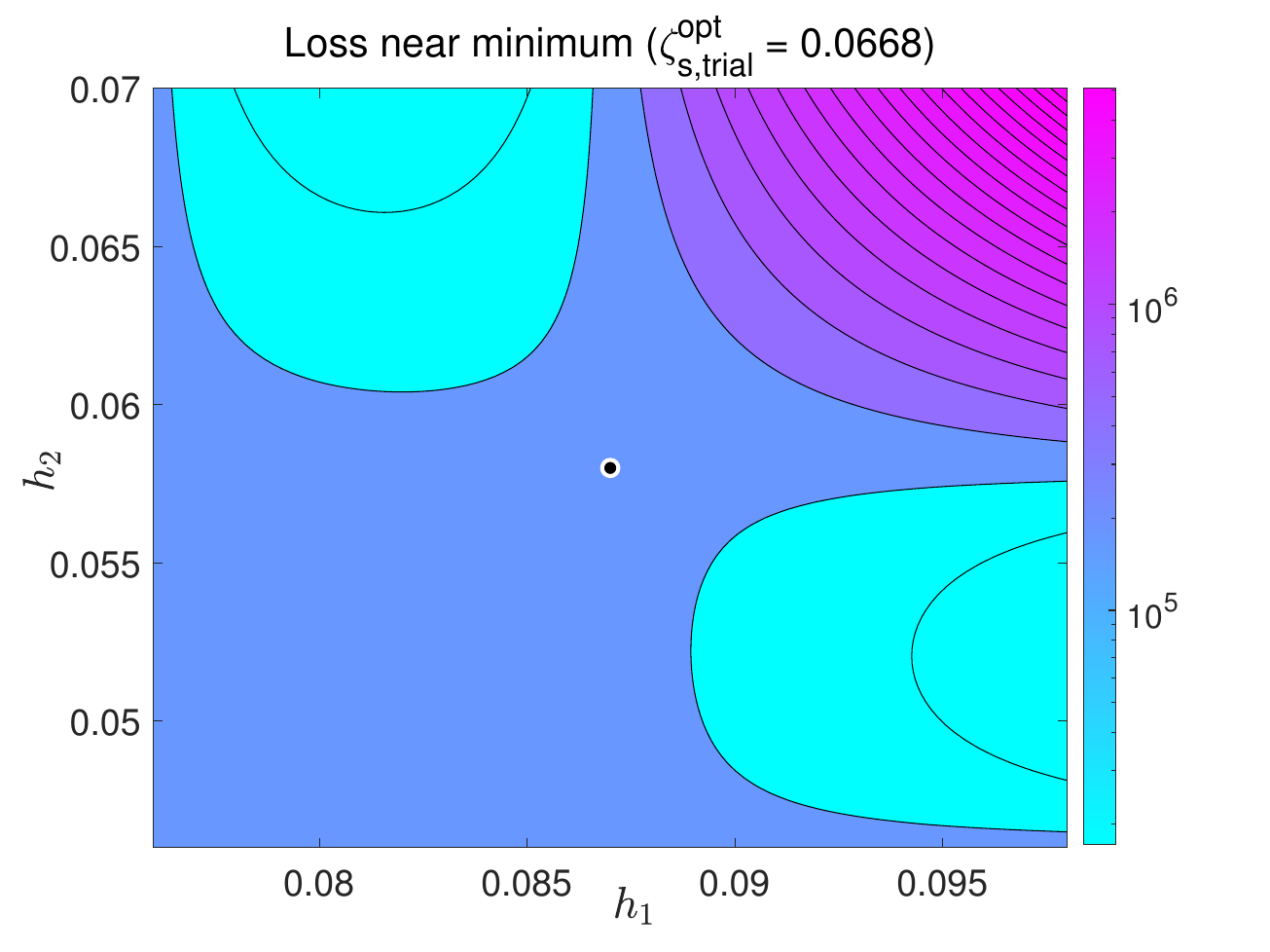}
        \caption{Restricted loss function $(h_1,h_2)\mapsto \curJ( h_1,h_2,\zeta_{s,\ppptrial}^\optpp)$.}
        \label{figure5b}
    \end{subfigure}
\caption{Trial-based global optimization: realization of the latent random field $U^m$ and the corresponding realization of the random neuron locations on the manifold $\curS_h$, superimposed on the field.}
    \label{figure5}
\end{figure}
\begin{figure}[H] 
\centering
\includegraphics[width=7.5cm]{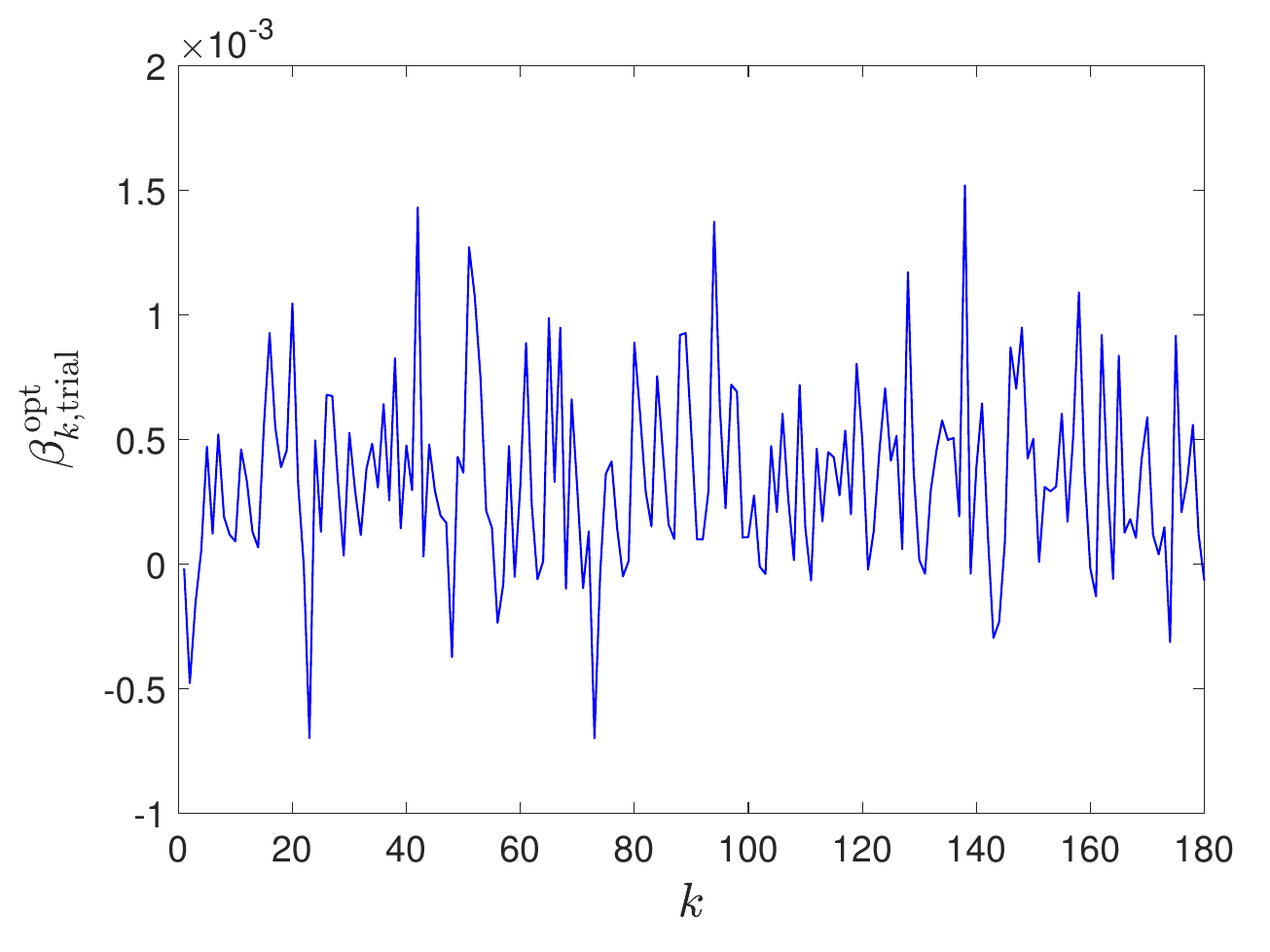}
\caption{Trial-based global optimization: graph of the function $k\mapsto \bfbeta_{\pptrial,k}^\optp$ for $k\in\{1,\ldots ,n_b\}$.}
\label{figure6}
\end{figure}
%

\noindent\textit{(iv) Solving the local optimization problem using the projected gradient descent with Adam updates and trial-based initialization}.
The local optimization is performed as described in Step~3 of Section~\ref{Section8}-(iii).
The hyperparameters $h_1$, $h_2$, and $\zeta_s$ are set to $h_{1,\ptrial}^\optp$, $h_{2,\ptrial}^\optp$, and $\zeta_{s,\ptrial}^\optp$, respectively. We recall that, for this step:

\noindent - when  $n_h = 0$, the updated hyperparameter is {\color{black} $\bfbeta\in\RR^{180}$};

\noindent - when $n_h \geq 1$, the updated hyperparameter vector is {\color{black} $( \bfbeta^{\,(1)}_{n_h}, \bfbeta^{\,(2)}_{n_h} , \bfbeta)\in\RR^{2n_h + 180}$}, subject to the constraints in Eqs.~\eqref{eq8.17}-\eqref{eq8.18},
{\color{black} with $c_\star^{(k)} = 10^{-12}$ and $c^{(k)\star} = +\infty$},
and handled as explained in Section~\ref{Section8}-(iv).

\noindent The optimal value $\bftheta_\ptrial^\optp$, defined in Eq.~\eqref{eq11.2}, is used to construct, the initial
{\color{black} value $\bftheta_0$, as defined in Eq.~\eqref{eq8.13}}, for the projected gradient descent algorithm with Adam updates.
The step size for the Adam algorithm is set to $0.001$ for each component of $\bfbeta^{\,(1)}_{n_h}$ and $\bfbeta^{\,(2)}_{n_h}$, and to $0.01$ for $\bfbeta$. {\color{black} As explained, the gradient} of the function $\bftheta_t\mapsto \curJ(\bftheta_t)$ is computed using a central difference approximation, with a relative directional step size of $10^{-6}$  for the numerical gradient.\\

\noindent - For the case $n_h=0$, Fig.~\ref{figure7a} shows the graph of the loss function $n\mapsto \NLL_\ANNp (n)$, defined in Eq.~\eqref{eq8.15}, and Fig.~\ref{figure7b} shows the graph of the normalized maximum change function $n \mapsto \Delta_\max (n)$, defined in Eq.~\eqref{eq8.16} with $n_\wind = 3$, both as  functions of the iteration number $n$ of the projected gradient descent with Adam updates. The final iteration is denoted by $n^\optp$, and its value can be identified in the figure.
Fig.~\ref{figure8a} shows the graphs  of the components $n\mapsto \beta_k(n)$ for $k=1,\ldots, n_b=180$, representing the evolution of the hyperparameter vector $\bfbeta\in\RR^{180}$ over the iterations. Fig.~\ref{figure8b} shows the graph of $k \mapsto \beta_k^\optp$, representing the components of the optimal value $\bfbeta^\optp \in\RR^{180}$ at iteration $n=25$.
\begin{figure}[H]
    \centering
    \begin{subfigure}[b]{0.40\textwidth}
        \centering
        \includegraphics[width=\textwidth]{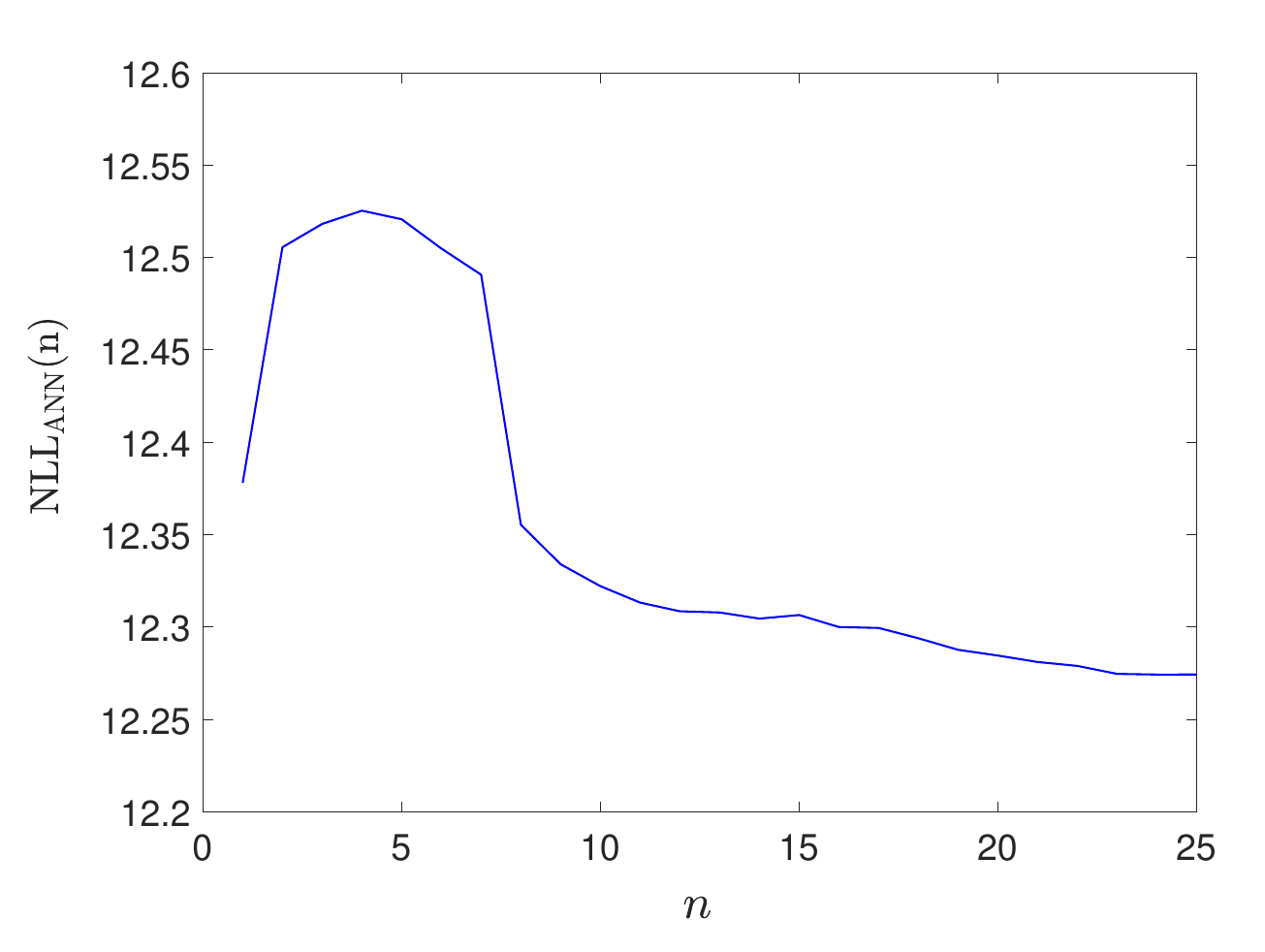}
        \caption{Graph of $n\mapsto \NLLg_\ANNpp (n)$ in $\log_{10} scale$.}
        \label{figure7a}
    \end{subfigure}
    \begin{subfigure}[b]{0.40\textwidth}
        \centering
        \includegraphics[width=\textwidth]{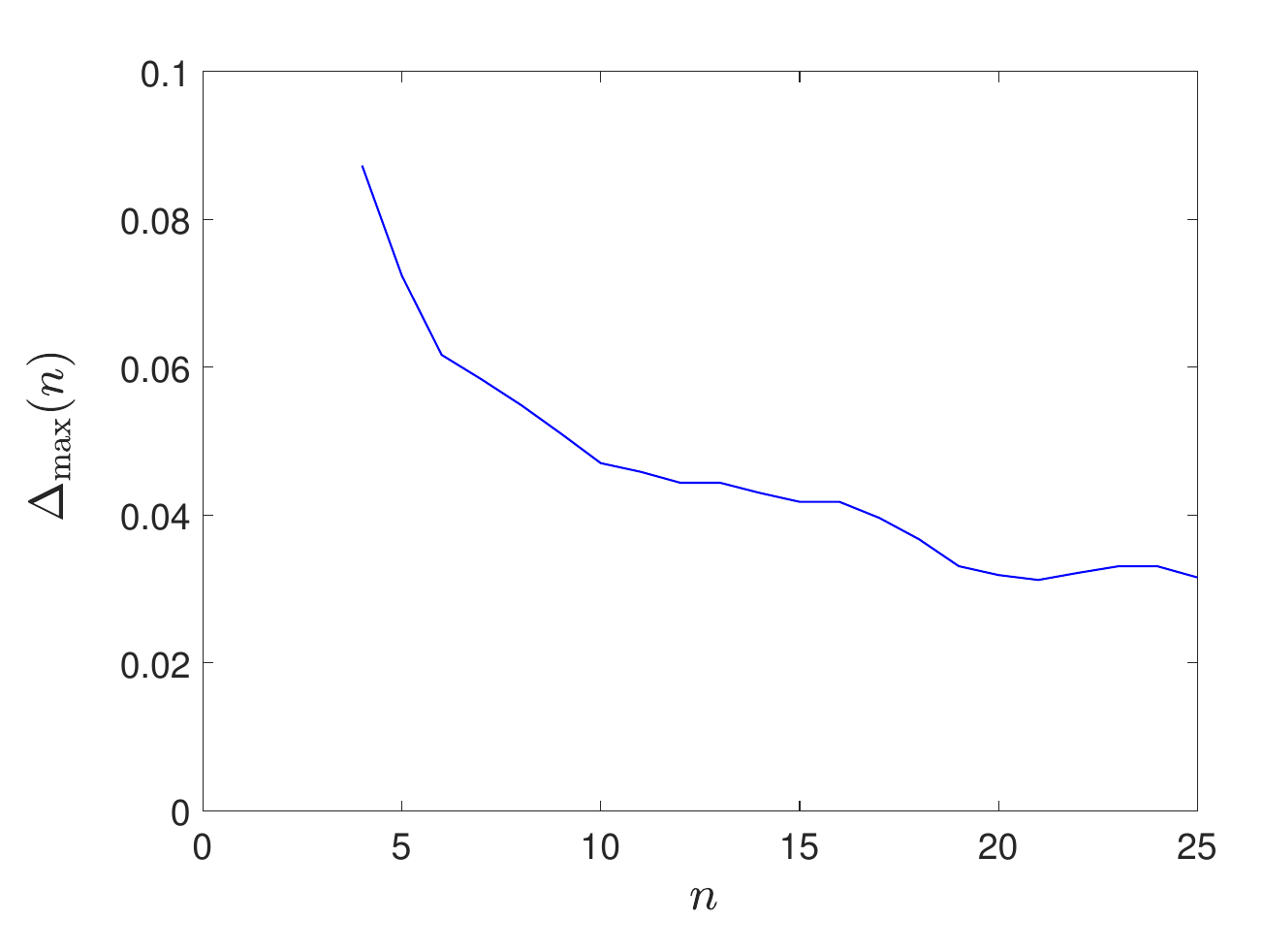}
        \caption{Graph of $n \mapsto \Delta_\max (n)$.}
        \label{figure7b}
    \end{subfigure}
\caption{Local optimization for the case $n_h=0$: (a) loss function $n\mapsto \NLLg_\ANNpp (n)$ defined in Eq.~\eqref{eq8.15}; (b) normalized maximum change function $n \mapsto \Delta_\max (n)$ defined in Eq.~\eqref{eq8.16} with $n_\wind = 3$, as  functions of the iteration number $n$ of the projected gradient descent with Adam updates.}
    \label{figure7}
\end{figure}
\begin{figure}[H]
    \centering
    \begin{subfigure}[b]{0.40\textwidth}
        \centering
        \includegraphics[width=\textwidth]{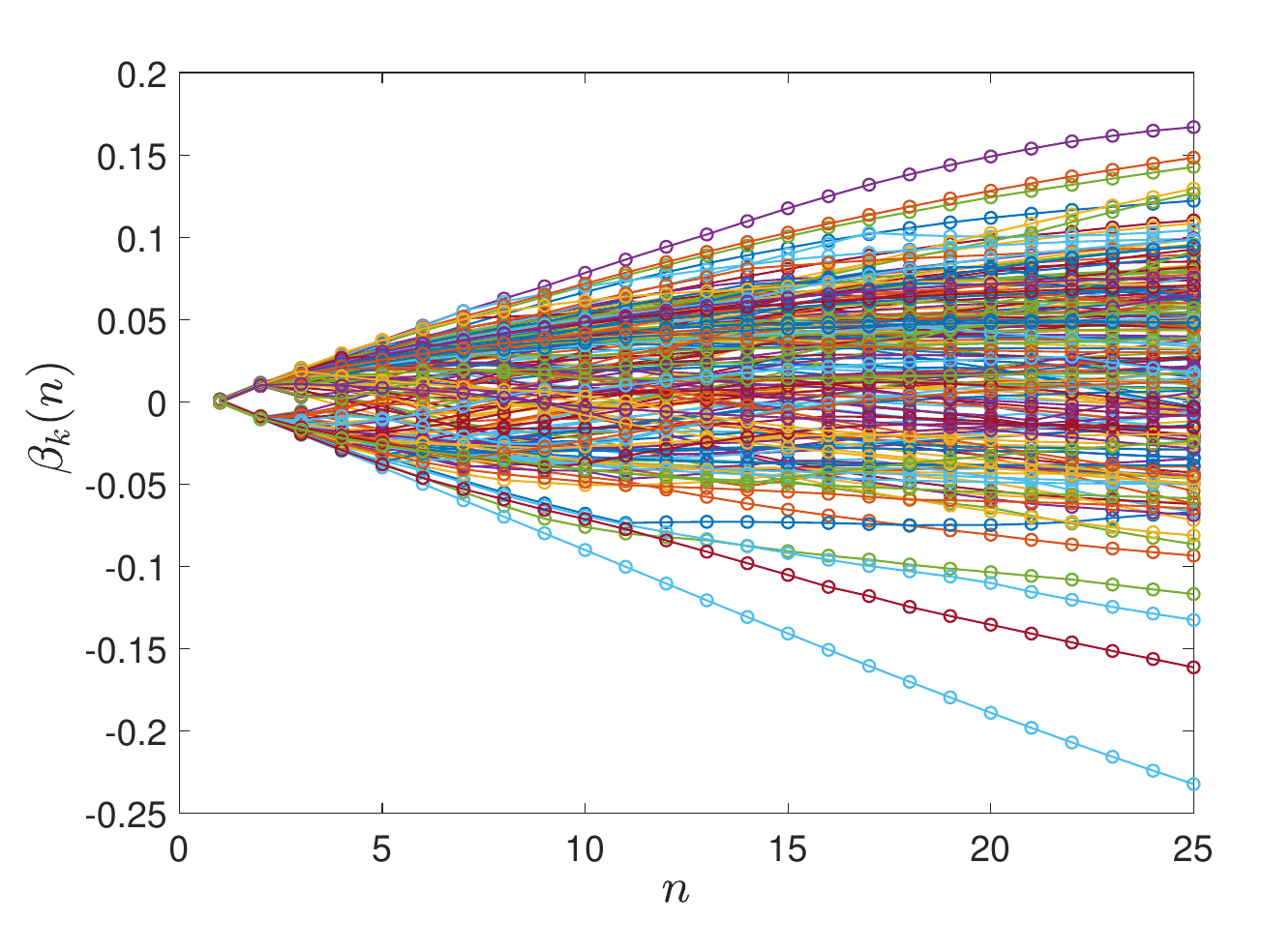}
        \caption{Graphs  of $n\mapsto \beta_k(n)$ for $k=1,\ldots, n_b=180$.}
        \label{figure8a}
    \end{subfigure}
    \begin{subfigure}[b]{0.40\textwidth}
        \centering
        \includegraphics[width=\textwidth]{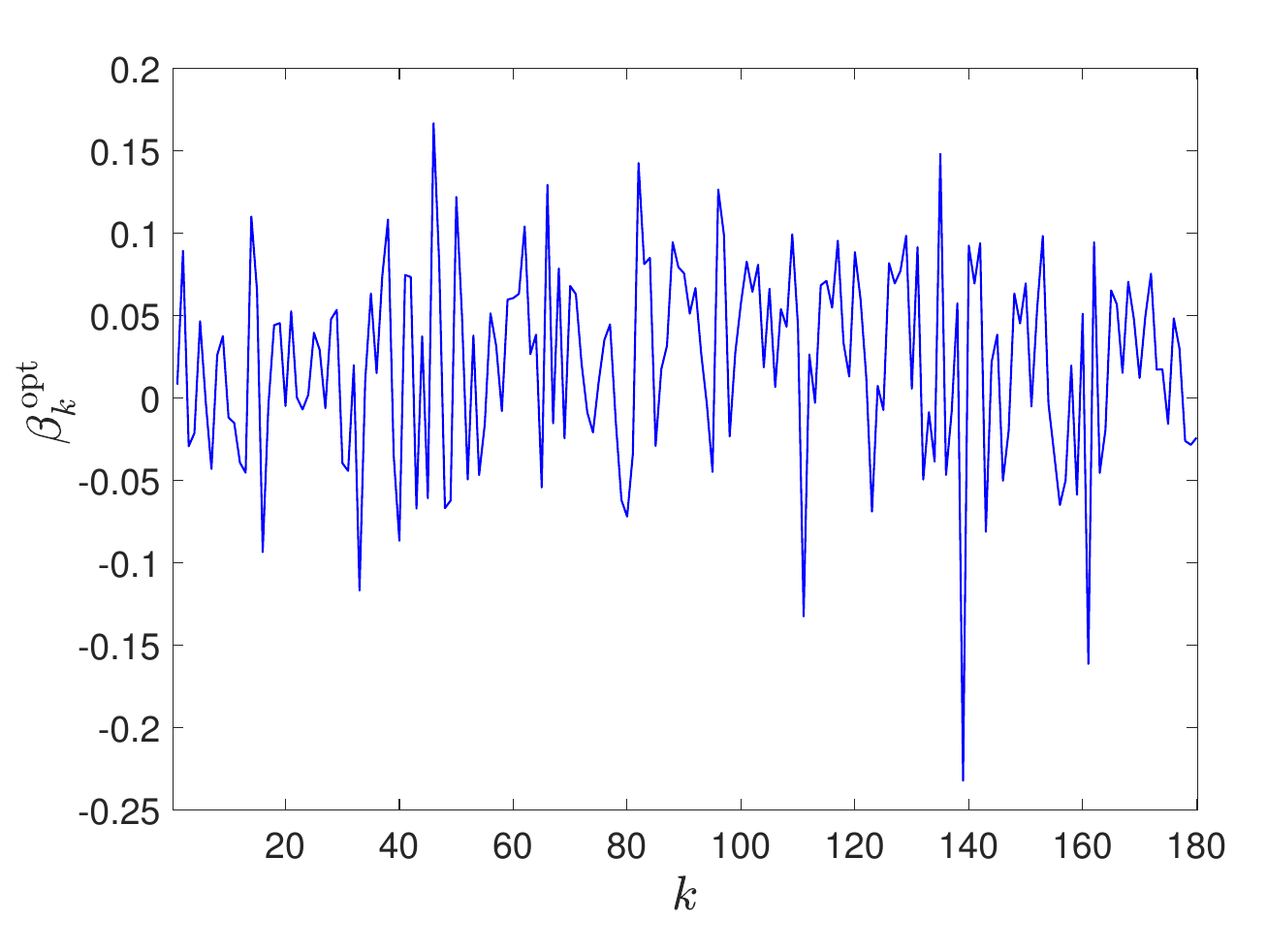}
        \caption{Graph of $k \mapsto \beta_k^\optp$ at iteration $n=25$.}
        \label{figure8b}
    \end{subfigure}
\caption{Local optimization for the case $n_h=0$:  (a) graphs  of the components $n\mapsto \beta_k(n)$ for $k=1,\ldots, n_b=180$ of the hyperparameter $\bfbeta\in\RR^{180}$, as functions of the iteration number $n$ of the projected gradient descent with Adam updates; (b) graph of $k \mapsto \beta_k^\optp$, the components of the optimal value $\bfbeta^\optp \in\RR^{180}$ at iteration $n=25$.}
    \label{figure8}
\end{figure}

\noindent - For the case $n_h=10$, Fig.~\ref{figure9a} shows the graph of the loss function $n\mapsto \NLL_\ANNp (n)$, defined in Eq.~\eqref{eq8.15}, and Fig.~\ref{figure9b} shows the graph of the normalized maximum change function $n \mapsto \Delta_\max (n)$, defined in Eq.~\eqref{eq8.16} with $n_\wind = 3$. Both  are shown as functions of the iteration number $n$ of the projected gradient descent with Adam updates.
Fig.~\ref{figure10a} shows the graphs  of the components $n\mapsto \beta_k(n)$ for $k=1,\ldots, n_b=180$, representing the evolution of the hyperparameter vector $\bfbeta\in\RR^{180}$ over the iterations. Fig.~\ref{figure10b} shows the graph of $k \mapsto \beta_k^\optp$, representing the components of the optimal value $\bfbeta^\optp \in\RR^{180}$ at iteration $n=100$.
Figs.~\ref{figure11a} ($k=1$) and \ref{figure11b} ($k=2$) show the graphs  of the $n_h = 10$ components $n\mapsto \beta_j^{(k)}(n)$ for $j=1,\ldots, n_h=10$ of the
{\color{black} hyperparameter $\bfbeta^{(k)}_{n_h}\in\RR^{10}$  as}
 functions of the iteration number $n$ of the projected gradient descent with Adam updates.

\begin{figure}[H]
    \centering
    \begin{subfigure}[b]{0.40\textwidth}
        \centering
        \includegraphics[width=\textwidth]{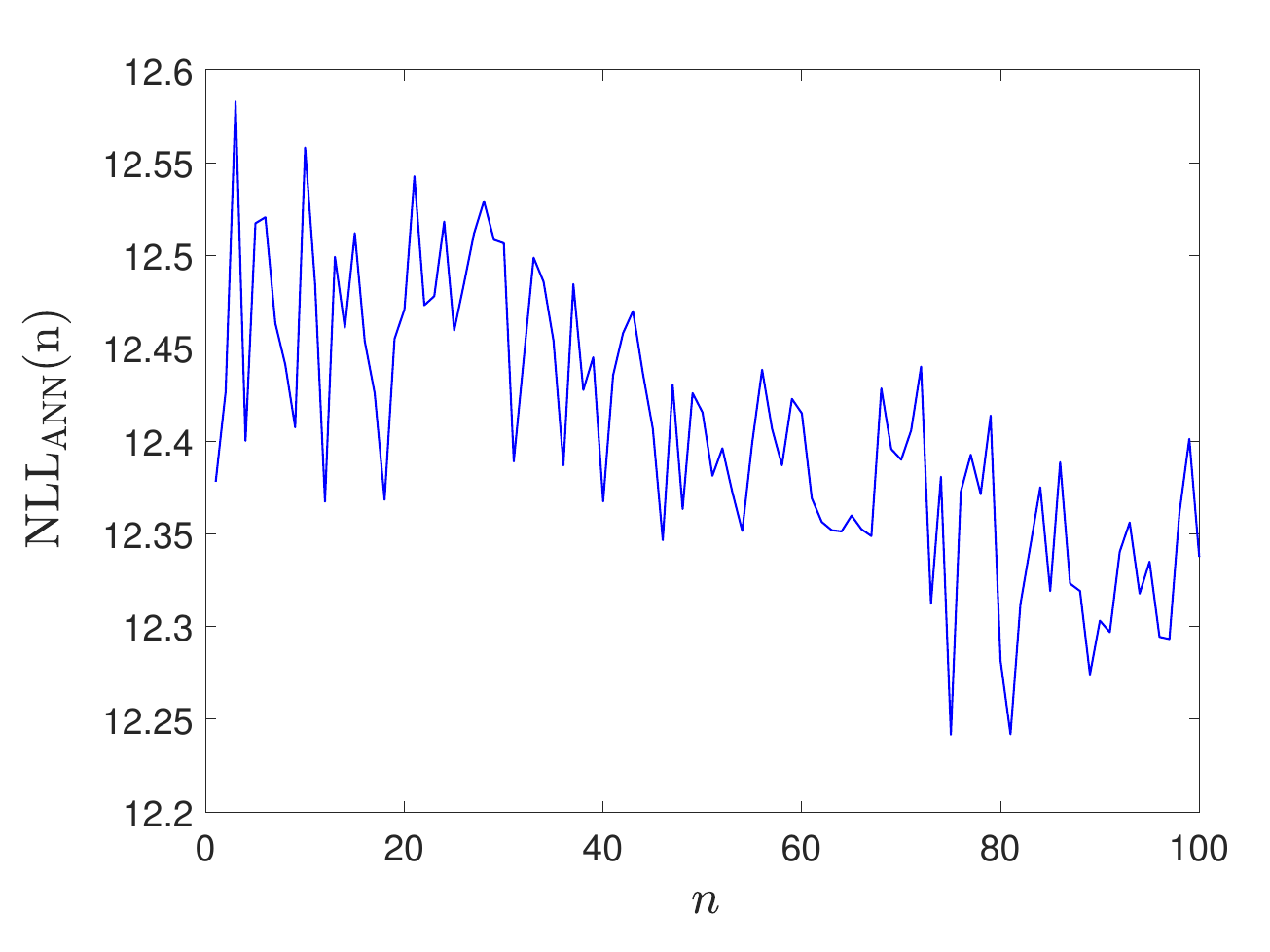}
        \caption{Graph of $n\mapsto \NLLg_\ANNpp (n)$ in $\log_{10} scale$.}
        \label{figure9a}
    \end{subfigure}
    \begin{subfigure}[b]{0.40\textwidth}
        \centering
        \includegraphics[width=\textwidth]{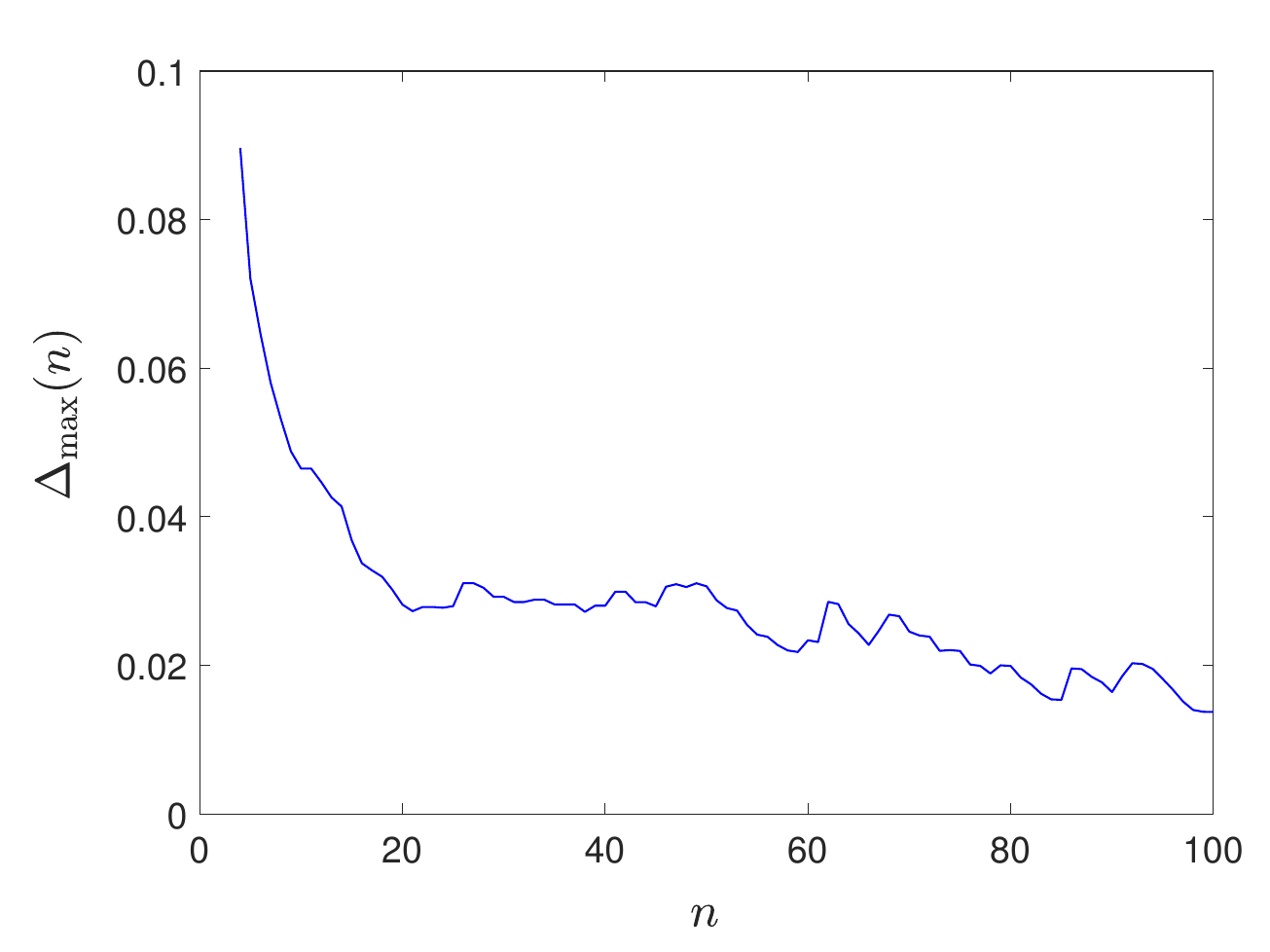}
        \caption{Graph of $n \mapsto \Delta_\max (n)$.}
        \label{figure9b}
    \end{subfigure}
\caption{Local optimization for the case $n_h=10$: (a) loss function $n\mapsto \NLLg_\ANNpp (n)$ defined in Eq.~\eqref{eq8.15}; (b) normalized maximum change function $n \mapsto \Delta_\max (n)$ defined in Eq.~\eqref{eq8.16} with $n_\wind = 3$, as  functions of the iteration number $n$ of the projected gradient descent with Adam updates.}
    \label{figure9}
\end{figure}
\begin{figure}[H]
    \centering
    \begin{subfigure}[b]{0.40\textwidth}
        \centering
        \includegraphics[width=\textwidth]{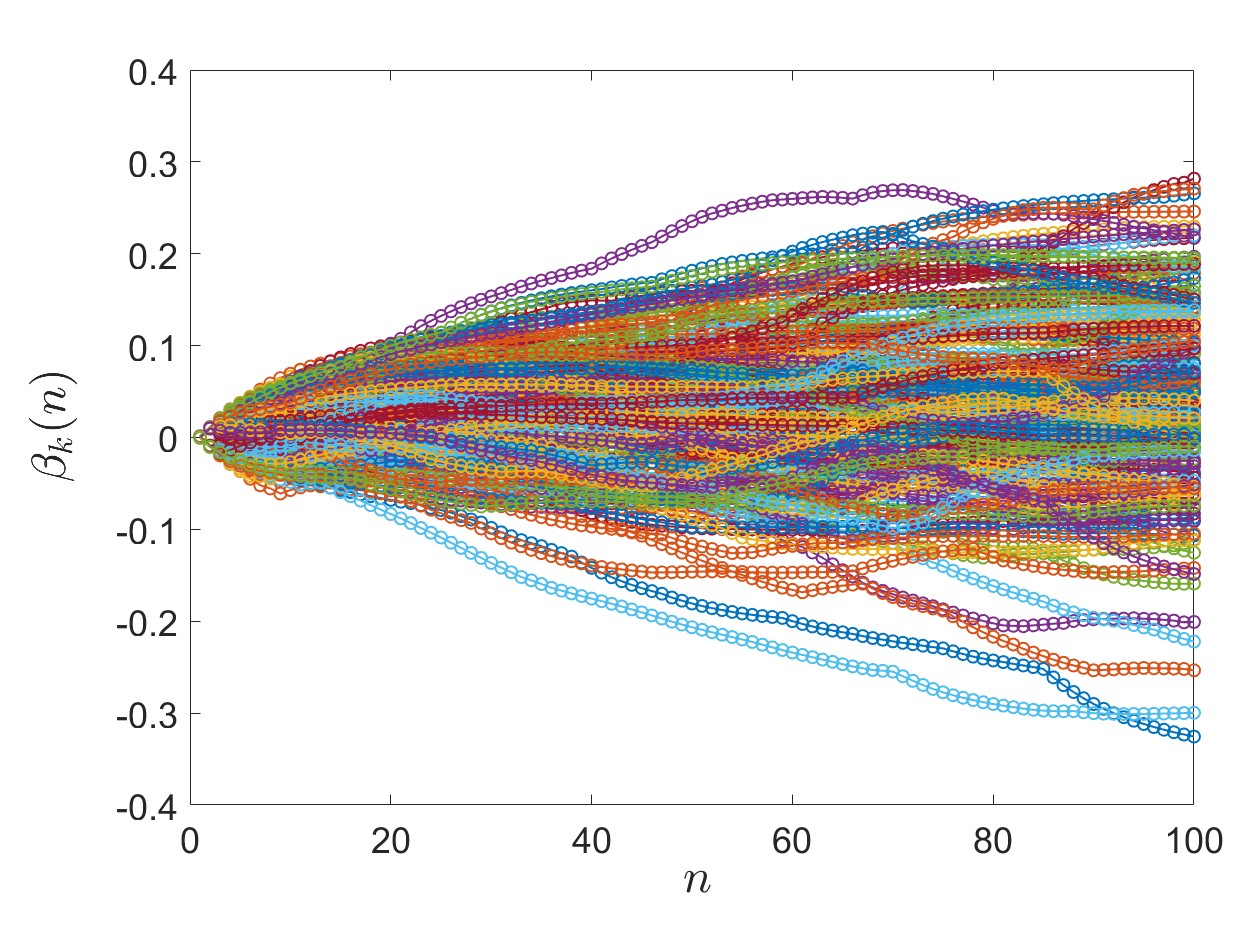}
        \caption{Graphs  of $n\mapsto \beta_k(n)$ for $k=1,\ldots, n_b=180$.}
        \label{figure10a}
    \end{subfigure}
    \begin{subfigure}[b]{0.40\textwidth}
        \centering
        \includegraphics[width=\textwidth]{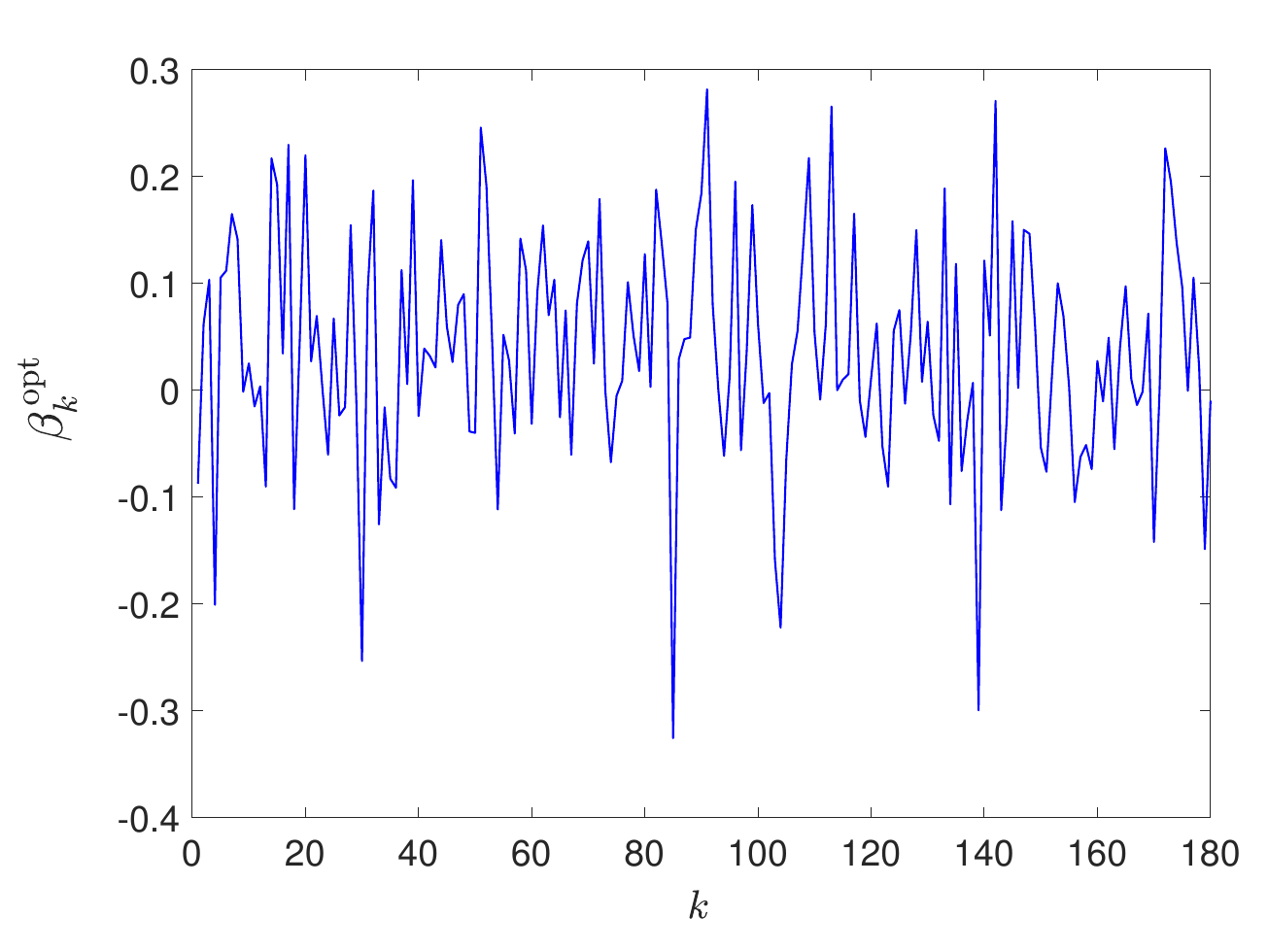}
        \caption{Graph of $k \mapsto \beta_k^\optp$ at iteration $n=25$.}
        \label{figure10b}
    \end{subfigure}
\caption{Local optimization for the case $n_h=10$:  (a) graphs  of the components $n\mapsto \beta_k(n)$ for $k=1,\ldots, n_b=180$ of the hyperparameter $\bfbeta\in\RR^{180}$, as functions of the iteration number $n$ of the projected gradient descent with Adam updates; (b) graph of $k \mapsto \beta_k^\optp$, the components of the optimal value $\bfbeta^\optp \in\RR^{180}$ at iteration $n=25$.}
    \label{figure10}
\end{figure}
\begin{figure}[H]
    \centering
    \begin{subfigure}[b]{0.40\textwidth}
        \centering
        \includegraphics[width=\textwidth]{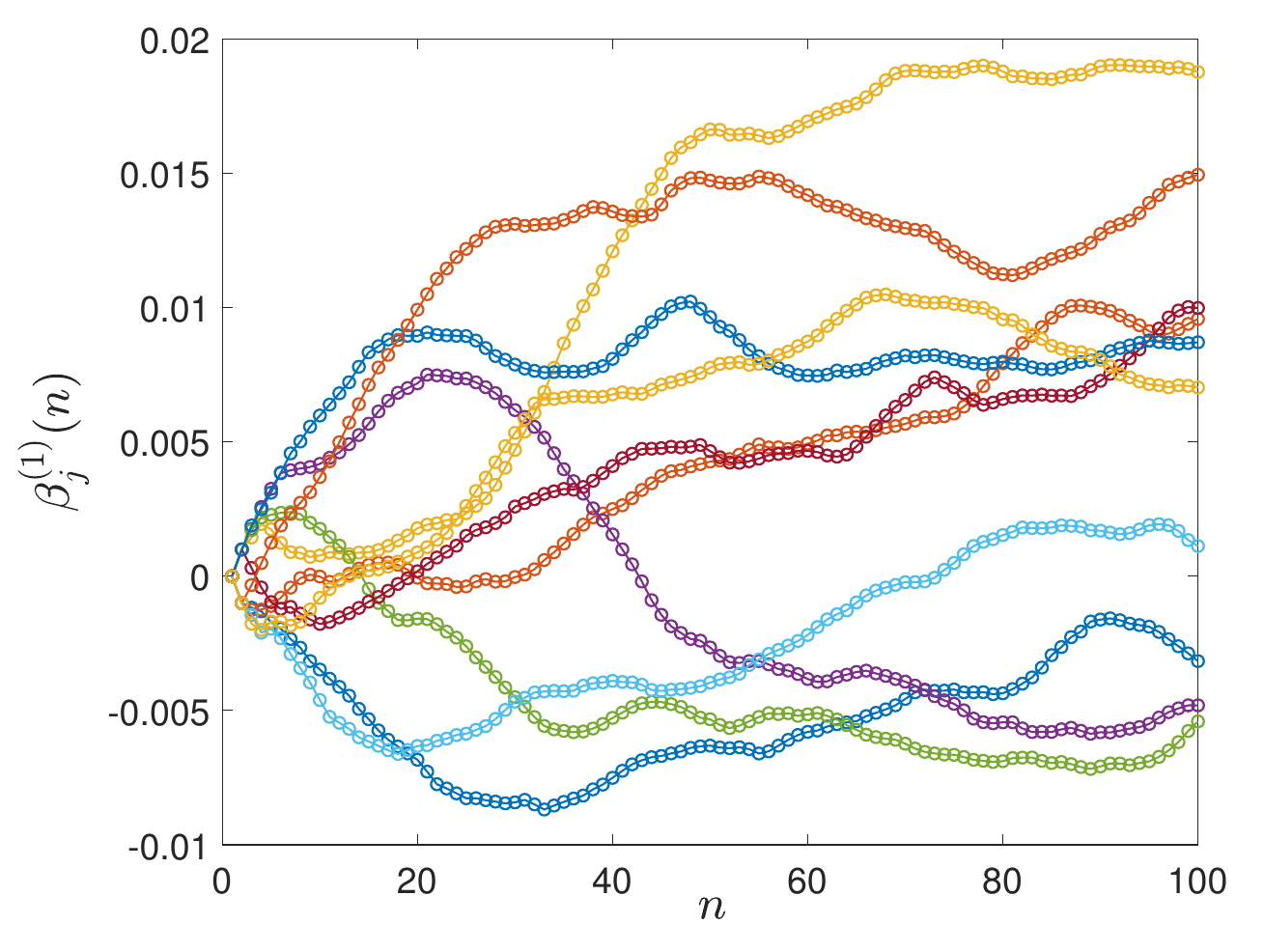}
        \caption{Graphs  of $n\mapsto \beta_j^{(1)}(n)$ for $j=1,\ldots, 10$.}
        \label{figure11a}
    \end{subfigure}
    \begin{subfigure}[b]{0.40\textwidth}
        \centering
        \includegraphics[width=\textwidth]{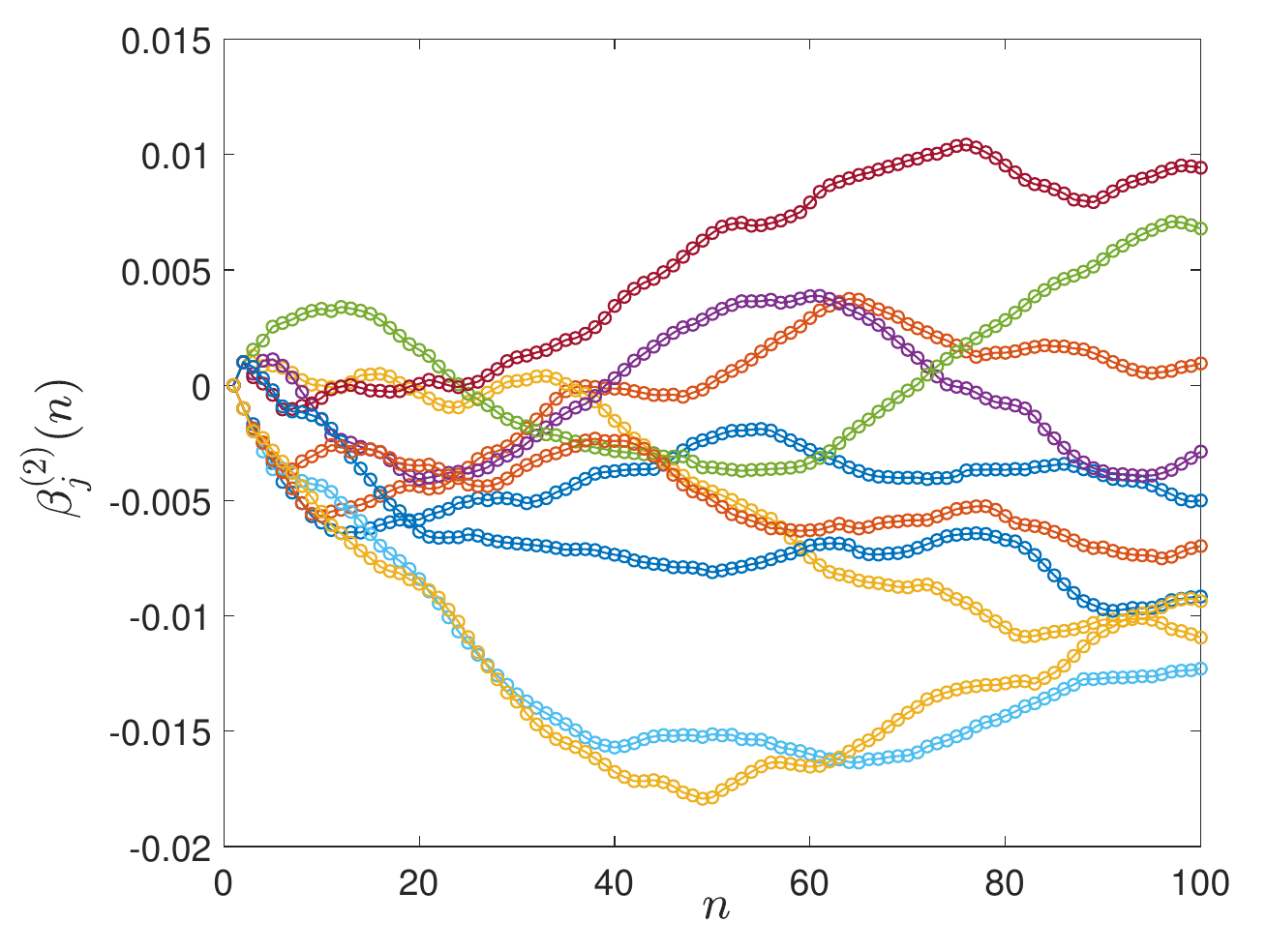}
        \caption{Graphs  of $n\mapsto \beta_j^{(1)}(n)$ for $j=1,\ldots, 10$.}
        \label{figure11b}
    \end{subfigure}
\caption{Local optimization for the case $n_h=10$:  For $k\in\{1,2\}$, graphs  of the components $n\mapsto \beta_j^{(k)}(n)$ for $j=1,\ldots, n_h$ of the hyperparameter $\bfbeta^{\,(k)}_{n_h}\in\RR^{n_h}$, as functions of the iteration number $n$ of the projected gradient descent with Adam updates.}
    \label{figure10}
\end{figure}

For the two cases, $n_h=0$ and $n_h=10$, the values of all the criteria defined in Section~\ref{Section9} are given in Table~\ref{table1}, in the rows corresponding to $N=200$, $n_d=3000$, and $n_\testp=600$.
\subsection{Training results, quantitative evaluation of the trained random network, and sensitivity analysis}
\label{Section11.6}
The quantitative evaluation of the trained random network is based on the five criteria detailed in Section~\ref{Section9}.\\

\noindent\textit{(i) Quantitative evaluation of the baseline configuration  and sensitivity analysis with respect to the training dataset and the number of hyperparameters}.
In this paragraph, we consider the case detailed in Sections~\ref{Section11.1} {\color{black}  to \ref{Section11.5}}, which corresponds to $N=200$, $n_d=3000$, $n_\testp=600$, and, $n_h=0$ and $10$. We present the training results and the corresponding quantitative evaluation for this configuration.
Additionally, we analyze how these results vary as functions of the number of training samples $n_d$ and the
hyperparameter dimension $n_h$, which controls the latent Gaussian random field.
For the three configurations presented below,  the optimal value
$\bftheta_\ptrial^\optp = (h_{1,\pptrial}^\optp,  h_{2,\pptrial}^\optp , \zeta_{s,\pptrial}^\optp)$, obtained  by solving the trial-based global optimization problem defined by Eq.~\eqref{eq8.6}, is provided in  Table~\ref{table1} for each configuration.
\begin{table}[h]
  \caption{Optimal value $\bftheta_\ppptrial^\optpp$ obtained  by solving the trial-based global optimization problem  for $N=200$}
  \label{table1}
  \begin{center}
  \begin{tabular}{|c|c|c|c|c|}
    \hline
        $n_d$ & $n_\testp$ & $h_{1,\ptrial}^\optp$  & $h_{2,\ptrial}^\optp$ & $\zeta_{s,\ptrial}^\optp$  \\
    \hline
     1000 & 200  & 0.0957   & 0.0642 & 0.0726 \\
     3000 & 600  & 0.0870   & 0.0580 & 0.0668 \\
    10000 & 2000 & 0.0870   & 0.0580 & 0.0668 \\
    \hline
  \end{tabular}
\end{center}
\end{table}

\noindent All values of the evaluation criteria are summarized in  Table~\ref{table2}, in which the values of
$\curJ(\bftheta^\optp_\ptrial)$ {\color{black} (defined by Eq.~\eqref{eq8.6})} and
$\NLL_\ANNp^\opt$ {\color{black} (defined by \eqref{eq8.14})} are divided by $n_d$, and
$\NLL_\ANNp^\testp$  {\color{black} (defined by \eqref{eq9.1})} is divided by $n_\testp$.
A detailed discussion of these results is provided in Section~\ref{Section12}.
\begin{table}[h]
  \caption{Evaluation criteria of the baseline configuration  and sensitivity analysis for $N=200$}
  \label{table2}
  \begin{center}
  \resizebox{\textwidth}{!}{%
  \begin{tabular}{|c|c|c|c|c|c|c|c|c|c|c|c|c|c|c|}
    \hline
          &  &  & Eqs. & \eqref{eq8.6}&\eqref{eq8.14}&\eqref{eq8.16}&\eqref{eq9.1}&\eqref{eq9.2}&\eqref{eq9.5}&\eqref{eq9.6}&\eqref{eq9.9}&\eqref{eq9.9}& \eqref{eq9.10} \\
    \hline
        $n_d$ & $n_\testp$ &   & $n_h$ & $\scriptstyle\curJ(\bftheta^\optp_\pptrial)$ & $\scriptstyle\NLLg_\ANNpp^\optp$  &
        $\scriptstyle\Delta_\max (n^\optp)$    & $\scriptstyle\NLLg_\ANNpp^\testp $ & $\scriptstyle\criter_o$ & $\scriptstyle\criter_{\bfalpha}$ &  $\scriptstyle\criter_{\bfbeta}$ & $\scriptstyle\CRPSp^\trainingpp_\testp$ & $\scriptstyle\CRPSp^\ANNpp_\testp$ & $\scriptstyle\criter_\CRPSpp$ \\
    \hline
     1000 & 200  & $\TRIAL$ & -  & 77.0 &  -   &   -   & 84.2 & 0.085 & 0.203 & 0.117 & 0.504 & 0.504 & 0.223 \\
          &      & $\PGD$   & 0  &  -   & 71.9 & 0.039 & 91.6 & 0.214 & 0.205 & 0.112 & 0.504 & 0.499 & 0.259 \\
          &      & $\PGD$   & 10 &  -   & 81.8 & 0.015 & 75.2 & 0.088 & 0.192 & 0.108 & 0.504 & 0.498 & 0.223 \\
          &      & $\PGD$   & 30 &  -   & 79.5 & 0.021 & 85.5 & 0.069 & 0.192 & 0.112 & 0.504 & 0.499 & 0.214 \\
     \hline
     3000 & 600  & $\TRIAL$ & -  & 79.2 &  -   &   -   & 80.3 & 0.014 & 0.205 & 0.128 & 0.489 & 0.495 & 0.243 \\
          &      & $\PGD$   & 0  &  -   & 71.4 & 0.031 & 79.5 & 0.102 & 0.198 & 0.127 & 0.489 & 0.497 & 0.239 \\
          &      & $\PGD$   & 10 &  -   & 76.0 & 0.014 & 78.2 & 0.028 & 0.199 & 0.120 & 0.488 & 0.495 & 0.194 \\
          &      & $\PGD$   & 30 &  -   & 87.5 & 0.013 & 79.8 & 0.096 & 0.211 & 0.120 & 0.488 & 0.482 & 0.220 \\
     \hline
    10000 & 2000 & $\TRIAL$ & -  & 76.0 &  -   &   -   & 78.7 & 0.033 & 0.205 & 0.228 & 0.136 & 0.485 & 0.245 \\
          &      & $\PGD$   & 0  &  -   & 60.3 & 0.019 & 63.4 & 0.050 & 0.227 & 0.131 & 0.485 & 0.480 & 0.207 \\
          &      & $\PGD$   & 10 &  -   & 69.4 & 0.025 & 72.9 & 0.040 & 0.228 & 0.128 & 0.485 & 0.485 & 0.217 \\
          &      & $\PGD$   & 30 &  -   & 71.9 & 0.017 & 67.8 & 0.060 & 0.213 & 0.128 & 0.485 & 0.489 & 0.197 \\
    \hline
  \end{tabular}
  } 
\end{center}
\end{table}
The last evaluation criterion  concerns, for a given  sample from the test dataset, the comparison of conditional probability density functions and the associated conditional confidence intervals. For consistency, we consider the configuration detailed in Sections~\ref{Section11.1} {\color{black} to \ref{Section11.5}}, corresponding to $N=200$, $n_d=3000$, $n_\testp=600$, and $n_h =10$.
For a selected  sample input $\xx^6_\testp \in\RR^{20}$ from the test dataset, and for the $k$-th component of the real-valued random output $\YY_k$,  selected  for $k = 5$, $26$, and $90$, Figs.~\ref{figure12a} to \ref{figure12c} show the graphs of the conditional probability density functions
$y\mapsto  p^\ANNpp_k(y \, \vert \, \xx^6_\testp \, ; \bftheta_t^\optp)$ and $y\mapsto p^\trainingpp_k(y \, \vert \, \xx^6_\testp)$, which are estimated as explained in  Section~\ref{Section9.5}.  Table~\ref{table3} presents, for $k = 5$, $26$, and $90$, a comparison of the conditional confidence intervals $\CI^\ANNpp_k(\xx^6_\testp) = [\alpha^\ANNpp_k(\xx^6_\testp) \, , \beta^{\,\ANNpp}_k(\xx^6_\testp)]$ and $\CI^\trainingpp_k(\xx^6_\testp)=  [\alpha^\trainingpp_k(\xx^6_\testp) \, , \beta^{\,\trainingpp}_k(\xx^6_\testp)]$, computed using Eqs.~\eqref{eq9.3} and \eqref{eq9.4}.
\begin{figure}[H]
    \centering
    \begin{subfigure}[b]{0.30\textwidth}
        \centering
        \includegraphics[width=\textwidth]{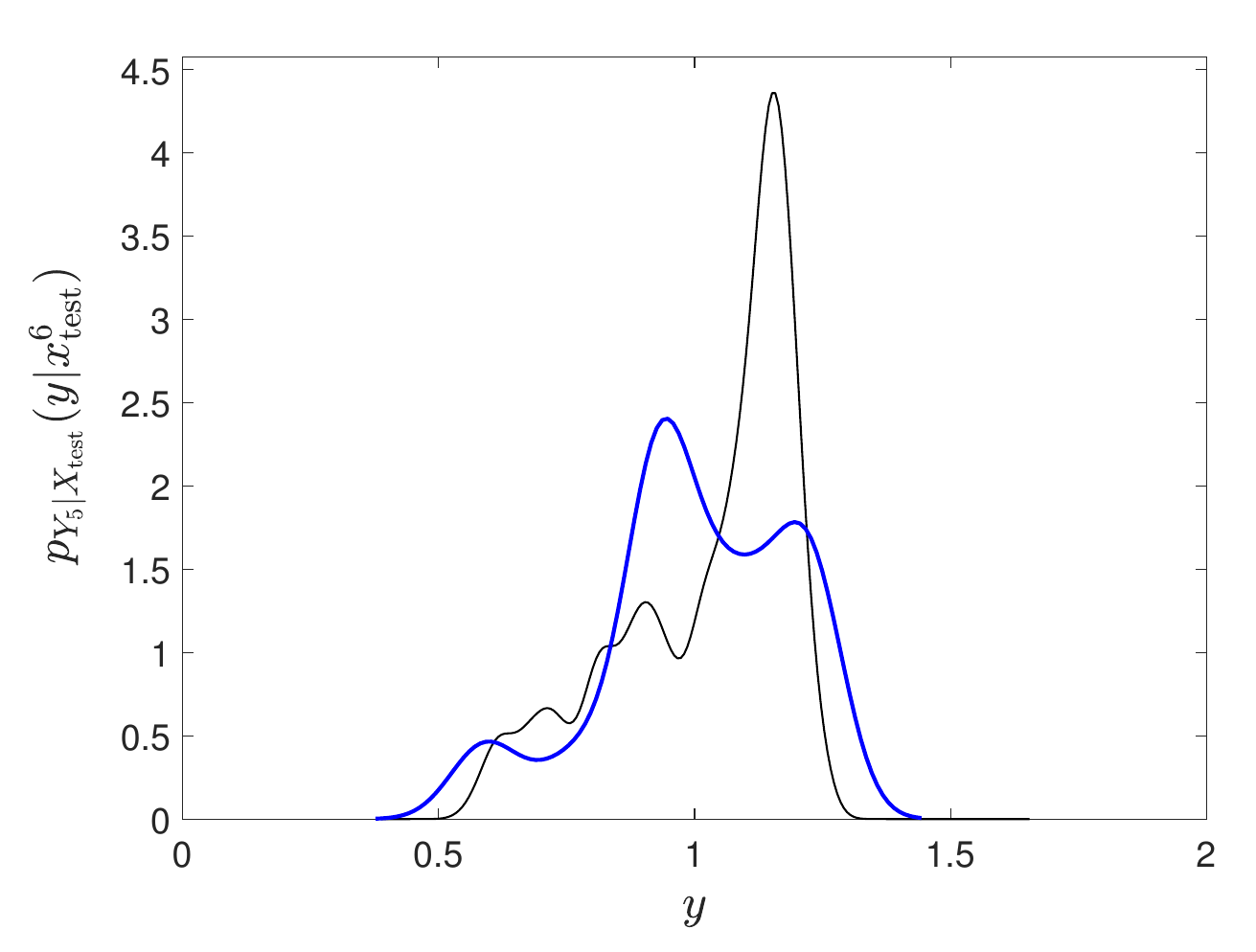}
        \caption{Conditional pdf for $k=5$.}
        \label{figure12a}
    \end{subfigure}
    \begin{subfigure}[b]{0.30\textwidth}
        \centering
        \includegraphics[width=\textwidth]{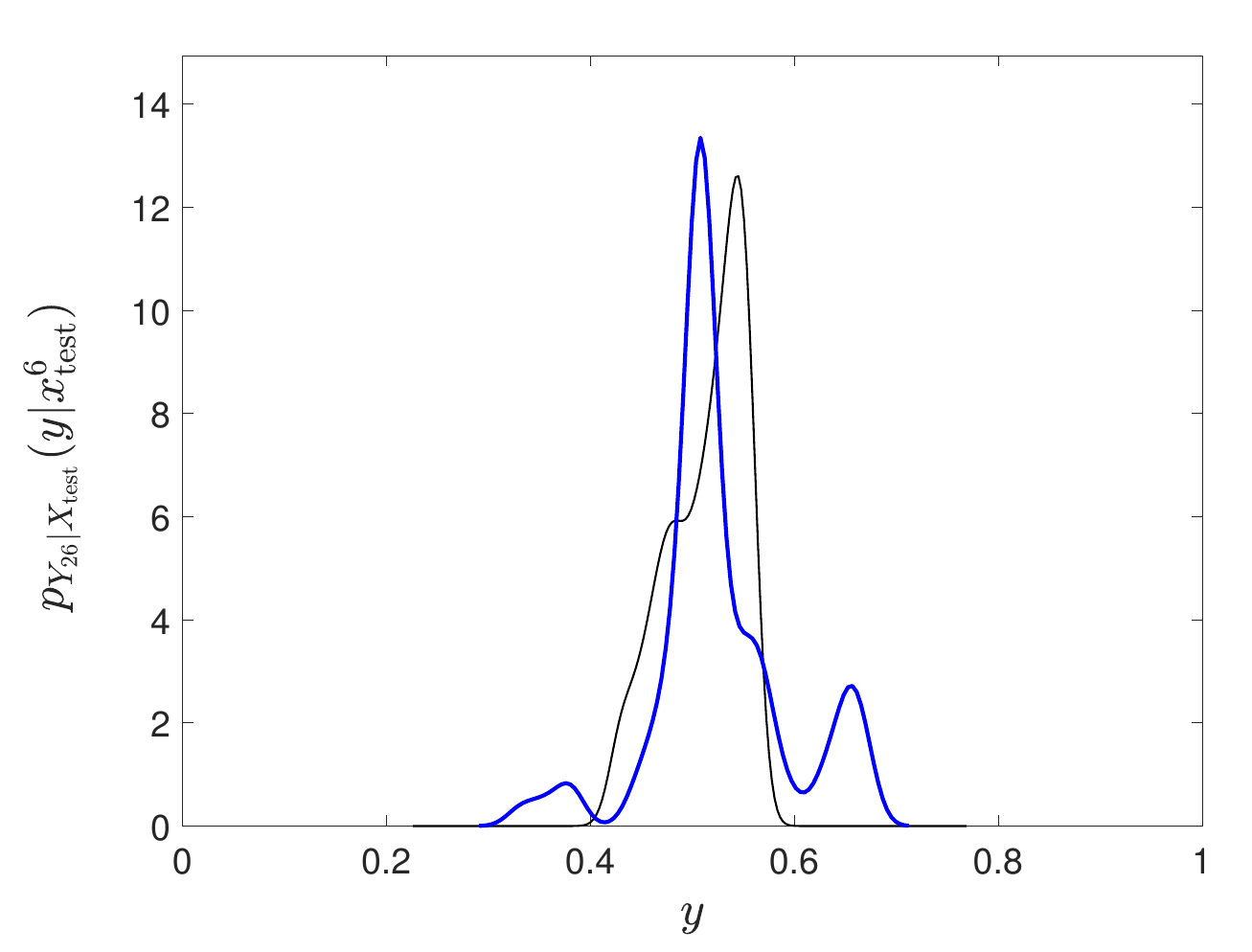}
        \caption{Conditional pdf for $k=26$.}
        \label{figure12b}
    \end{subfigure}
    \begin{subfigure}[b]{0.30\textwidth}
        \centering
        \includegraphics[width=\textwidth]{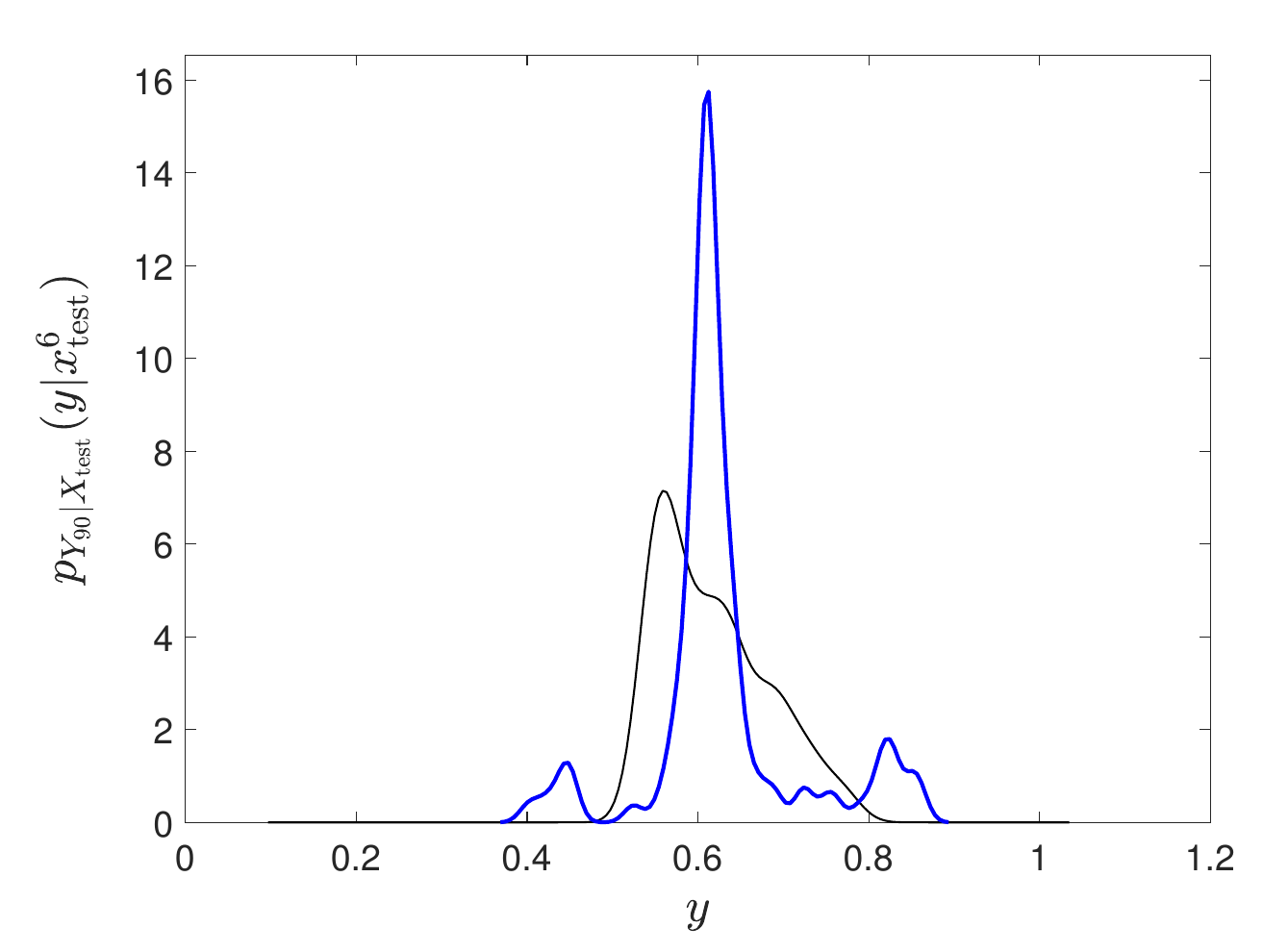}
        \caption{Conditional pdf for $k=90$.}
        \label{figure12c}
    \end{subfigure}
\caption{For $k\in\{5, 26, 90\}$, conditional probability density functions  $y\mapsto  p^\ANNpp_k(y \, \vert \, \xx^6_\testp \, ; \bftheta_t^\optp)$ (thick black line) and $y\mapsto p^\trainingpp_k(y \, \vert \, \xx^6_\testp)$ (thin black line), given the
 sample input $\xx^6_\testp \in\RR^{20}$ in the test dataset, for $N=200$, $n_d=3000$, $n_\testp=600$, {\color{black} and $n_h=10$}.}
    \label{figure12}
\end{figure}
\begin{table}[h]
  \caption{For $k = 5$, $26$, and $90$, conditional confidence intervals $\CI^\ANNpp_k(\xx^6_\testp)$ and $\CI^\trainingpp_k(\xx^6_\testp)$ for given the sample input $\xx^6_\testp \in\RR^{20}$, for $N=200$, $n_d=3000$, $n_\testp=600$, and $n_h=0$}
  \label{table3}
  \begin{center}
  \begin{tabular}{|c|c|c|}
    \hline
        $\scriptstyle k$ & $\scriptstyle \CIp^\trainingpp_k(\xx^6_\testp)$  & $\scriptstyle \CIp^\ANNpp_k(\xx^6_\testp)$ \\
    \hline
     5  & $[0.62\, , 1.23]$   & $[0.58\, , 1.24]$ \\
     26 & $[0.43\, , 0.57]$   & $[0.43\, , 0.66]$ \\
     90 & $[0.52\, , 0.76]$   & $[0.44\, , 0.83]$ \\
    \hline
  \end{tabular}
\end{center}
\end{table}

\noindent\textit{(ii) Sensitivity analysis}.
The sensitivity analysis is conducted with respect to the configuration corresponding to the first case presented in Table~\ref{table2}, for which $N=200$, $n_d=1000$, $n_\testp=200$. This configuration, labeled "Ref", is reproduced in Table~\ref{table4} and serves as the reference for comparison. Four sensitivity analysis cases  are considered:
\begin{itemize}
\item \textit{SA1} - Finite element mesh size: using $160$ subdivisions along the major circle and $48$ along the minor circle (instead of $80$ and $24$), yielding a regular grid with $n_o= 7680$ nodes and $n_\pelem = 15\,360$ triangular elements (each with three nodes), forming a structured mesh $\curS_h$.
\item \textit{SA2} -  Percentile threshold  parameter: setting $\tau_\pprc = 5\%$ instead of $75\%$, which means that $95\%$ of inter-neurons connections are retained.
\item \textit{SA3} -  Smooth differentiable functions $\hh^{(k,j)}$ {\color{black} on $\curS$, as defined in Section~\ref{Section11.2}-(ii), used instead of  the non-smooth differentiable case
   defined in Section~\ref{Section11.2}-(i) when $n_h \geq 1$.}
\item \textit{SA4} -  Number of neurons: using $N=400$ instead of $200$.
\end{itemize}

\noindent The parameters associated with these configurations and their corresponding results are summarized in Table~\ref{table4}.
\begin{table}[h]
  \caption{Sensitivity analysis with respect to a baseline configuration for $N=200$}
  \label{table4}
  \begin{center}
  \resizebox{\textwidth}{!}{%
  \begin{tabular}{|c|c|c|c|c|c|c|c|c|c|c|c|c|c|}
    \hline
    Case & $N$ & $n_d$ & $n_\testp$ & $\tau_\pprc$ &  & $h_{1,\pptrial}^\optp$ & $h_{2,\pptrial}^\optp$ & $\zeta_{s,\pptrial}^\optp$ &
    $n_h$ & $\scriptstyle\curJ(\bftheta^\optp_\pptrial)$ & $\scriptstyle\NLLg_\ANNpp^\optp$  & $\scriptstyle\NLLg_\ANNpp^\testp $ & $\scriptstyle\criter_\CRPSpp$ \\
    \hline
    Ref &  200 & 1000 & 200 & 75 & $\TRIAL$ & 0.0957 & 0.0642 & 0.0726 &  -  & 77.0 & -    & 84.2 & 0.223 \\
        &      &      &     &    & $\PGD$   &        &        &        &  10 &  -   & 81.8 & 75.2 & 0.223 \\
    \hline
    SA1 &  200 & 1000 & 200 & 75 & $\TRIAL$ & 0.0978 & 0.0650 & 0.0740 &  -  & 81.4 & -    & 88.2 & 0.231 \\
        &      &      &     &    & $\PGD$   &        &        &        &  10 &  -   & 80.9 & 89.4 & 0.206 \\
    \hline
    SA2 &  200 & 1000 & 200 &  5 & $\TRIAL$ & 0.0617 & 0.0760 & 0.0760 &  -  & 72.4 & -    & 89.2 & 0.227 \\
    \hline
    SA3 &  200 & 1000 & 200 & 75 & $\TRIAL$ & 0.0957 & 0.0642 & 0.0726 &  -  & 77.0 & -    & 84.2 & 0.223 \\
        &      &      &     &    & $\PGD$   &        &        &        &  0  &  -   & 72.0 & 90.9 & 0.263 \\
        &      &      &     &    & $\PGD$   &        &        &        &  10 &  -   & 79.9 & 78.6 & 0.191 \\
        &      &      &     &    & $\PGD$   &        &        &        &  30 &  -   & 87.6 & 86.3 & 0.208 \\
    \hline
    SA4 &  400 & 1000 & 200 & 75 & $\TRIAL$ & 0.0674 & 0.0963 & 0.0668 &  -  & 84.1 & -    & 84.8 & 0.242 \\
        &      &      &     &    & $\PGD$   &        &        &        &  0  &  -   & 84.0 & 86.1 & 0.254 \\
        &      &      &     &    & $\PGD$   &        &        &        &  10 &  -   & 72.5 & 72.8 & 0.214 \\
    \hline
  \end{tabular}
  } 
\end{center}
\end{table}
\section{Discussions based on the obtained results}
\label{Section12}
In this section, we analyze the numerical results from Section~\ref{Section11}, focusing on the best-performing configuration: $N=200$ neurons, $n_d=3000$ training samples, $n_\testp=600$ test samples, and $n_h =10$, for which the CRPS criterion attains a minimum value of $0.194$.
\paragraph{(i) Neuron placement induced by the latent field.}
As shown in Fig.~\ref{figure3b}, the neuron locations are concentrated in regions where the latent Gaussian field exhibits high local variability. This outcome is a direct consequence of the inhomogeneous Poisson process, whose intensity is defined by the variance map $\Lambda(\bfx;\bftheta)$. Areas of low variance, such as the black zones of the torus surface, receive no neural allocation, not because field values are small, but due to minimal uncertainty.
This allocation strategy reflects an intrinsic adaptation of the architecture to the epistemic structure of the latent field, concentrating representational capacity in regions of high uncertainty where functional variability is most pronounced.
\paragraph{(ii) Optimization convergence and sensitivity to hyperparameters.}
Figure~\ref{figure7} confirms smooth convergence of the {\color{black} loss $\NLL_\ANNp (n)$ and} the normalized maximum parameter change $\Delta_{\max}(n)$ when $n_h=0$ (183 hyperparameters). For $n_h=10$ (203 hyperparameters), Fig.~\ref{figure9a} shows mild fluctuations in the loss function, attributed to the hyperparameters $\bfbeta^{(1)}_{n_h}$ and $\bfbeta^{(2)}_{n_h}$, which govern the anisotropic diffusivity of the latent field and therefore modulate the directional structure of the emergent architecture. Despite this sensitivity
{\color{black} (see also the discussion provided  in Section~\ref{Section8}-(v))}  Fig.~\ref{figure9b} confirms convergence via monotonic decay of $\Delta_{\max}(n)$. The results validate the effectiveness of the projected Adam-based gradient descent scheme, even in the presence of non-convex parameter-field coupling.
\paragraph{(iii) CRPS-based distributional accuracy.}
Table~\ref{table2} shows that the CRPS score $\CRPS^\ANNpp_\testp$ from the random ANN model closely matches the nonparametric reference $\CRPS^\trainingpp_\testp$, with only marginal absolute differences. This proximity indicates that the learned model faithfully reproduces the conditional output distributions. The relative discrepancy measure $\criter_\CRPSpp$, though not negligible, is modest (minimum 0.194) and reflects localized deviations rather than systematic errors.
These discrepancies likely stem {\color{black} from spatially varying variance} or multimodal behavior in the conditional outputs—features that are difficult to capture perfectly across the entire test domain. Importantly, the absolute magnitude of these deviations remains small, and their localized nature suggests that the model does not suffer from systematic miscalibration.
\paragraph{(iv) Generalization and overfitting analysis}
The overfitting analysis is based on the normalized comparison of negative log-likelihoods (NLL) computed on the training and test datasets through the criterion $\criter_o$. For the best model, the obtained normalized values are
$\NLL_\ANNp^\testp /n_\testp = 78.2$, $\NLL_\ANNp^\optp/n_d  = 76.0$, and $\criter_o=0.028$.
The criterion $\criter_o$ quantifies the normalized discrepancy between the training and test likelihoods after correcting for dataset size. The value $\criter_o = 0.028 \ll 1$ indicates a relatively small deviation between train and test losses, suggesting an absence of significant overfitting.
Importantly, the low value of $\criter_o$ is particularly significant given the non-Gaussian nature and high-dimensionality of the output space (here, $n_\pout = 100$), where overfitting is typically more likely due to the curse of dimensionality.
This effect is mitigated in our model by the geometric constraint imposed on neuron placement and connectivity, which guides capacity toward regions of functional complexity and suppresses overparameterization.
Classical neural network models tend to overfit when data are sparse relative to the number of parameters or when outputs are not Gaussian. The present random ANN framework, relying on a generative latent geometry and a limited number of neurons, seems to maintain generalization capability even in these conditions.
The overfitting analysis supports the statistical soundness of the random ANN model in high dimensions. Despite the relatively small network size, the model generalizes well, demonstrating its robustness for learning non-Gaussian conditional distributions with high output dimension.
\paragraph{(v) Confidence interval accuracy for non-Gaussian output random systems in high dimensions}
Let us give a brief positioning related to the confidence interval accuracy.
In the context of modeling conditional random variables $\YY \,\vert \,\xx$ with strongly non-Gaussian distributions  using neural networks with random weights, the accurate estimation of $p_c$-confidence intervals $\CI^\ANNpp_k(\xx) = [\alpha^\ANNpp_k(\xx) \, , \beta^{\,\ANNpp}_k(\xx)]$ for each component $\YY_k\,\vert \,\xx$ remains a challenging task. When employing state-of-the-art methodologies such as Bayesian neural networks with approximate inference \cite{Blundell2015,Hernandez2015}, Monte Carlo dropout \cite{Gal2016}, or deep ensembles \cite{Lakshminarayanan2017}, empirical coverage errors for nominal $p_c= 95\%$ intervals typically range between $5\%$ and $15\%$.  Quantile regression networks \cite{Tagasovska2019} often exhibit higher miscoverage rates in the $10\%-20\%$ range due to quantile crossing and data sparsity. More expressive conditional density estimators such as normalizing flows \cite{Papamakarios2021}, multimodal neural processes \cite{Jung2023}, or neural conditional probability models based on operator learning \cite{Kostic2024} can reduce this error below $10\%$ when adequately trained on large datasets. Recently, likelihood-ratio-based confidence intervals have been proposed to construct asymmetric, data-adaptive intervals with theoretical guarantees \cite{Sluijterman2025}.
In contrast, neural architectures with purely random weights (e.g., Extreme Learning Machines or random feature models) tend to yield significantly less reliable confidence intervals, with typical errors exceeding $20\%$ unless complemented with Bayesian output layers, bootstrap aggregation, or conformal prediction techniques \cite{Louizos2017,VanAmersfoort2020}.

In relatively higher-dimensional settings, such as our illustrative example with input dimension $n_\pin = 20$ and output dimension $n_\pout = 100$, the complexity of modeling the non-Gaussian conditional distribution $\YY \,\vert\, \xx$ increases considerably.

To estimate per-component confidence intervals reliably in such regimes, a significantly larger number of hidden neurons is required. Based on random feature approximation theory \cite{Rahimi2007,Rudi2017}, quantile resolution requirements, and multimodal density complexity, we estimate that a random-weight single hidden layer neural network would require between $5\,000$ and $15\,000$ hidden units to accurately resolve $95\%$ confidence intervals for all $n_\pout$ components. This scaling is consistent with theoretical bounds on random feature-based kernel approximations \cite{Bach2017}, recent asymptotic results on random-feature models in high-dimensional regimes~\cite{Schroder2024,Dabo2025}, and the need to represent high-entropy conditional output distributions with adequate resolution~\cite{Gonon2023}. The neuron count may be reduced when structural assumptions (that is, conditional independence, low-rank covariance {\color{black} in $\YY\vert\xx$}) or compressive priors are exploited.

In this work, we have presented a random neural network architecture structured by latent random fields defined on compact, boundaryless, and multiply-connected manifolds, as introduced in the supervised learning framework. Using only $N = 200$ neurons with $25\%$ connectivity, trained on a dataset of $n_d = 1\,000$ input-output pairs and evaluated on a test set of $200$ points, we assessed the quality of the predicted confidence intervals $\CI^\ANNpp_k(\xx)$ by separately evaluating the errors on the lower bounds $\alpha^\ANNpp_k(\xx)$ and the upper bounds $\beta^{\,\ANNpp}_k(\xx)$.
We obtained an empirical average miscoverage error of $20\%$ on the lower bounds and $11\%$ on the upper bounds, computed across all output components and test samples. These values represent average absolute deviations from the nominal $95\%$ coverage rate and reflect pointwise empirical coverage performance.
This performance, achieved with significantly reduced network complexity, suggests that the geometric structure and stochastic flexibility introduced by the latent random field framework compensate for the lower neuron count. These results indicate that complex, high-dimensional, non-Gaussian conditional distributions can be effectively learned and quantified even with relatively compact random architectures, provided that structural priors are suitably exploited.
\paragraph{(vi) Probabilistic validation using CRPS}
As with the confidence interval analysis, we begin by providing a brief positioning of the CRPS criterion in the context of probabilistic model validation.

The Continuous Ranked Probability Score (CRPS) is widely recognized as a strictly proper scoring rule for validating probabilistic predictions, particularly in non-Gaussian and high-dimensional contexts~\cite{Gneiting2007,Matheson1976}. It quantifies the discrepancy between a predicted cumulative distribution function (CDF) and the empirical step function associated with an observed outcome by integrating the squared difference across the real line. In contrast to pointwise metrics such as the negative log-likelihood (NLL), CRPS captures the full shape of the predictive distribution, making it sensitive to both bias and dispersion. It is robust to heavy tails, supports diagnostic insight in multimodal settings, and has been successfully applied in various domains, including weather forecasting~\cite{Hersbach2000}, Bayesian deep learning~\cite{Lakshminarayanan2017}, and operator learning~\cite{Kovachki2023}.
For univariate predictions on normalized or moderately scaled variables, well-calibrated probabilistic models typically yield CRPS values between $0.1$ and $0.5$, with smaller values indicating better probabilistic fit. For instance, in high-dimensional regression problems with non-Gaussian targets, modern approaches such as deep ensembles~\cite{Lakshminarayanan2017}, quantile flows~\cite{Tagasovska2019}, or diffusion-based conditional models~\cite{Jung2023} report CRPS values ranging from $0.2$ to $0.5$, depending on task complexity, data availability, and output dimensionality. In such regimes, relative CRPS comparisons tend to be more informative than absolute values, especially when evaluating models of differing architectural complexity or inference paradigms.

The accuracy of the random ANN surrogate was assessed using the CRPS score $\CRPS^\ANNpp_\testp$, and compared with $\CRPS^\trainingpp_\testp$, computed using a nonparametric Gaussian Kernel Density Estimation (GKDE) reference model based on the training dataset. For a random ANN with $N = 200$ neurons, $n_d = 1000$ training samples, $n_\testp = 200$ test samples, and $n_h=30$, we obtained
$\CRPS^\ANNpp_\testp = 0.499$ and $\CRPS^\trainingpp_\testp = 0.504$,
with a relative discrepancy, computed using Eq.~\eqref{eq9.10}, given by $\criter_\CRPSpp = 0.214$ (for the best model, we obtain $\criter_\CRPSpp = 0.194$).

The CRPS values are remarkably close in absolute terms, differing by only $0.005$. This numerical proximity demonstrates that the predictive distribution generated by the random ANN is statistically nearly equivalent to the nonparametric GKDE benchmark,
which serves as a biased but smooth reference estimator,
despite the model's structural simplicity and compactness. Although the normalized criterion $\criter_\CRPSpp$ appears moderate at $21.4\%$ (and $19.4.4\%$ for the best model), it is primarily a consequence of normalization by small CRPS values, which magnifies even small absolute differences. As such, $\criter_\CRPSpp$ should be interpreted with care: its value reflects relative sensitivity rather than a substantial discrepancy in forecast skill.

That the random ANN achieves this level of accuracy using only $200$ neurons and $25\%$ connectivity underscores the expressive power of the model, particularly when its architecture is informed by latent geometric priors. This result supports the hypothesis that structured randomness can act as an effective inductive bias, enabling accurate distributional modeling even in high-dimensional, non-Gaussian settings with limited training data.

These findings reinforce the conclusions drawn from the NLL and confidence interval analyses. The proposed random ANN architecture, despite the absence of variational inference layers or deep hierarchical components, provides a robust and statistically valid surrogate model for uncertainty quantification. Its CRPS-based performance confirms its capacity to learn and represent complex conditional distributions with a level of fidelity comparable to nonparametric methods, while remaining computationally efficient and architecturally compact.
\paragraph{(vi) Efficiency and comparative evaluation}
The results obtained for the random neural architecture structured by latent random fields on compact boundaryless multiply-connected manifolds are evaluated with respect to established probabilistic performance criteria and the state of the art in high-dimensional non-Gaussian modeling. Using the results of the best model, the following points summarize the key findings:

\textit{Global distributional accuracy}. The model yields a CRPS score of $\CRPS^\ANNpp_\testp = 0.495$ on the test dataset, which is remarkably close to the nonparametric reference value $\CRPS^\trainingpp_\testp = 0.488$ derived from the empirical distribution of the training data. This proximity confirms that the random neural architecture accurately captures the global structure of the conditional output distribution, despite its strong non-Gaussianity.

\textit{Local distributional discrepancy}. The relative discrepancy criterion yields a value of $\criter_\CRPSpp = 0.194$, indicating moderate pointwise deviations between the model-based and reference CRPS values. This is expected given the strong variability of the output distribution in high dimension ($n_\ppout=100$), and the fact that the benchmark is constructed via kernel density estimation, which tends to produce locally smoother estimates. Importantly, the global match in CRPS shows that the model's deviations are not systematically biased but arise from localized fluctuations, which are natural in such settings.

\textit{Modeling efficiency}. The model uses only $200$ neurons with $25\%$ connectivity (i.e., sparse architecture), yet
successfully maintains predictive fidelity across $n_\testp = 600$ test samples after {\color{black} training on $n_d = 3000$, thereby} accurately modeling a $100$-dimensional non-Gaussian output variable.
This represents a highly efficient use of computational and statistical resources. The architectural regularization introduced by the latent manifold structure plays a key role in enabling generalization in such a high-dimensional stochastic setting. Note that this efficiency also holds for only $1000$ training samples.
\paragraph{(vii) Sensitivity to architectural and geometric design choices}
Based on the criteria $\criter_\CRPSpp$, the sensitivity analysis leads to the following observations.
\begin{itemize}
\item Refining the manifold mesh improves the results, as expected, but does not lead to a significant change in performance.
\item Increasing the connectivity between neurons does not yield further improvements. This suggests that a network with $25\%$ connectivity is sufficiently expressive to capture the non-Gaussian random mapping.
\item Using smooth differentiable functions for $\hh^{(k,j)}$ enhances performance, reducing the criterion $\criter_\CRPSpp$ from $0.223$ to $0.191$.
\item Increasing the number of hidden neurons does not improve the results in the present setting, indicating that this network size is adequate for the task.
\end{itemize}
\section{Conclusions}
Before summarizing the main contributions, we clarify the epistemological intent of this work. Rather than proposing a refinement of existing neural architectures or attempting to replace standard models through performance gains, this paper introduces a distinct paradigm in which the architecture of the neural system is itself a stochastic object, a realization of a generative process governed by latent anisotropic fields defined over compact, boundaryless, multiply-connected manifolds. In this framework, neurons, their connectivity, and synaptic weights emerge jointly from a spatially structured random field, incorporating topological and geometric uncertainty as intrinsic components of the model.
This shift reframes supervised learning as inference over a generative process that produces random function approximators, rather than optimization within a fixed architectural space. It opens a principled new axis of investigation, conceptually distinct from mainstream approaches rooted in deterministic network designs and parametric weight fitting.
This work introduced a new class of geometrically structured random neural networks defined on compact, boundaryless, multiply-connected manifolds, driven by latent anisotropic Gaussian fields and inhomogeneous Poisson point processes. The proposed architecture is not fixed but stochastically generated, yielding both neuron placements and synaptic weights as emergent properties of a latent spatial process. Supervised learning is recast as the inference of generative hyperparameters, rather than optimization over fixed weights, enabling a principled treatment of architectural uncertainty.
This work also initiates a mathematical analysis of a stochastic neural model whose architecture is generated by a latent Gaussian random field on a compact manifold. While a complete theoretical characterization remains open, the results already establish fundamental properties, such as a form of of expressive stochastic capability, supporting the model internal coherence and expressive potential. These initial developments lay the groundwork for a broader mathematical theory of geometry-driven stochastic learning systems.
The presented numerical experiments do not aim to benchmark predictive accuracy or compare against existing architectures. Instead, we provide diagnostic illustrations that elucidate how topological, anisotropic, and latent stochastic parameters influence the emergent neural architecture and its induced function class. These numerical experiments validated the expressivity and efficiency of the model in a challenging test case involving $100$-dimensional outputs with strong non-Gaussian features.
{\color{black} The model demonstrates strong performance under severe resource constraints (e.g., $200$ neurons, $25\%$ connectivity), achieving CRPS scores on par (in the considered setting) with those typically reported for significantly larger architectures, such as Bayesian deep ensembles, deep Gaussian processes, or process models with input-dependent noise.}
 Moreover, the model maintained high-quality empirical coverage of confidence intervals and exhibited minimal overfitting, highlighting its ability to generalize from limited data in high-dimensional output spaces.
The CRPS-based comparisons and overfitting diagnostics support the conclusion that geometric priors, introduced via latent manifold fields, act as powerful inductive biases. They allow the model to concentrate its representational capacity on informative regions of the input manifold, avoiding the overparameterization {\color{black} that often affects} conventional neural approaches. Furthermore, the model supports a statistically coherent treatment of output uncertainty without resorting to variational inference, dropout sampling, or deep ensembling.
While promising, several ways improvement remain. First, the training algorithm could be further optimized. Second, scaling the method to higher-dimensional manifolds or more complex output structures (e.g., tensor-valued responses, sequential data) will require algorithmic and computational refinement.
Future directions include embedding this geometry-driven framework within deep hierarchical compositions, or coupling it with learned activation dynamics via neural ODEs or latent diffusion processes. Such hybrid models may combine the generalization ability of structured randomness with the representational depth of modern architectures.
Finally, hybridizing this latent-geometric approach with learned activation dynamics or deep hierarchical structures may yield architectures that combine statistical robustness with greater modeling depth.
It is worth noting that the proposed model differs fundamentally from traditional Bayesian neural networks or GP-based methods, which define uncertainty in parameter or function space under fixed architectures.
{\color{black} In contrast, our approach places a prior on the architecture itself, which in turn induces a distribution over functions through geometric variability, essentially a function level prior shaped by manifold structure.}
This may complement recent developments in function-space inference by offering an alternative, geometry-centered route to model uncertainty.
In summary, this work establishes a probabilistic foundation for tractable and geometry-aware uncertainty quantification in neural systems, with interpretable inductive structure and scalable simulation algorithms. The integration of random topology, latent-field-driven weights, and manifold-based geometry offers a novel path toward statistically principled learning in data-scarce, geometrically structured domains.

This work also opens a new research direction in the development of Physics-Informed Neural Networks (PINNs) with randomized neural architectures that are statistically consistent with the structure of inhomogeneous physical systems. By leveraging latent random fields on manifolds as stochastic germs for generating PDE coefficients, the network architecture becomes naturally adapted to both the geometry and boundary conditions of the problem domain. This formulation establishes a principled framework for aligning the architecture of PINNs with the physical and probabilistic properties of the target system, laying the groundwork for ongoing developments in inverse modeling, uncertainty quantification, and data-driven discovery in complex stochastic media.

\section*{CRediT authorship contribution statement}
Christian Soize: Conceptualization, Methodology, Software, Computation, Writing.

\section*{Declaration of competing interest}
The author declares that they have no known competing financial interests or personal relationships that could have appeared to influence the work reported in this paper.


\end{document}